\icmltitlerunning{PACOH: Bayes-Optimal Meta-Learning with PAC-Guarantees}
\newtheorem{theorem}{Theorem}
\newtheorem{corollary}{Corollary}
\newtheorem{lemma}{Lemma}
\newtheorem{proposition}{Proposition}
\def\eqref#1{equation~\ref{#1}}
\def\1{\bm{1}}
\def\eps{{\epsilon}}
\DeclareMathAlphabet{\mathsfit}{\encodingdefault}{\sfdefault}{m}{sl}
\SetMathAlphabet{\mathsfit}{bold}{\encodingdefault}{\sfdefault}{bx}{n}
\newcommand{\E}{\mathbb{E}}
\newcommand{\R}{\mathbb{R}}
\DeclareMathOperator*{\argmax}{arg\,max}
\DeclareMathOperator*{\argmin}{arg\,min}
\newcommand{\rom}[1]
    {\text{\MakeUppercase{\romannumeral #1}}}
\newcommand{\bX}{\mathbf{X}}
\newcommand{\bx}{\mathbf{x}}
\newcommand{\by}{\mathbf{y}}
\newcommand{\given}{\, \vert \,}
\newcommand{\calA}{\mathcal{A}}
\newcommand{\calD}{\mathcal{D}}
\newcommand{\calH}{\mathcal{H}}
\newcommand{\calL}{\mathcal{L}}
\newcommand{\calM}{\mathcal{M}}
\newcommand{\calN}{\mathcal{N}}
\newcommand{\calP}{\mathcal{P}}
\newcommand{\calQ}{\mathcal{Q}}
\newcommand{\calT}{\mathcal{T}}
\newcommand{\calU}{\mathcal{U}}
\newcommand{\calX}{\mathcal{X}}
\newcommand{\calY}{\mathcal{Y}}
\newcommand{\calZ}{\mathcal{Z}}
\newcommand{\expect}[2]{\mathbb{E}_{#1} \; #2}
\renewcommand{\overline}{\tilde}
\newcommand{\beginsupplement}{%
        \setcounter{table}{0}
        \renewcommand{\thetable}{S\arabic{table}}%
        \setcounter{figure}{0}
        \renewcommand{\thefigure}{S\arabic{figure}}%
     }
\def\zzzforegreg#1\unskip{}
\newcommand{\vspacecaption}{\vspace{-4pt}}
\newcommand{\vspacecaptionlow}{\vspace{-3pt}}
\newcommand{\vspacesubcaption}{\vspace{-2pt}}
\newcommand{\vspaceequation}{\vspace{-2pt}}
\begin{document}

\twocolumn[
\icmltitle{PACOH: Bayes-Optimal Meta-Learning with PAC-Guarantees}




\begin{icmlauthorlist}
\icmlauthor{Jonas Rothfuss}{eth}
\icmlauthor{Vincent Fortuin}{eth}
\icmlauthor{Martin Josifoski}{epfl}
\icmlauthor{Andreas Krause}{eth}
\end{icmlauthorlist}

\icmlaffiliation{eth}{ETH Zurich, Switzerland}
\icmlaffiliation{epfl}{EPFL, Switzerland}

\icmlcorrespondingauthor{Jonas Rothfuss}{jonas.rothfuss@inf.ethz.ch}

\icmlkeywords{Machine Learning, ICML, Meta-Learning, Transfer Learning, Multi-task learning, PAC-Bayes, Learning Theory, Life-Long learning}

\vskip 0.3in
]



\printAffiliationsAndNotice{}  

\begin{abstract}

\looseness -1 
{\em Meta-learning} can successfully acquire useful inductive biases from data. Yet, its generalization properties to unseen learning tasks are poorly understood. Particularly if the number of meta-training tasks is small, this raises concerns about overfitting.
We provide a theoretical analysis using the {\em PAC-Bayesian} framework and derive novel {\em generalization bounds} for meta-learning.
Using these bounds, we develop a class of PAC-optimal meta-learning algorithms with performance guarantees and a principled {\em meta-level regularization}. Unlike previous PAC-Bayesian meta-learners, our method results in a standard stochastic optimization problem which can be solved efficiently and {\em scales well}.
When  instantiating our {\em PAC-optimal hyper-posterior (PACOH)} with Gaussian processes and Bayesian Neural Networks as base learners, the resulting methods yield state-of-the-art performance, both in terms of predictive accuracy and the quality of uncertainty estimates. Thanks to their principled treatment of uncertainty, our meta-learners can also be successfully employed for {\em sequential decision problems}.

\end{abstract}
\vspacecaption
\section{Introduction}
\vspacecaptionlow

\looseness -1 \emph{Meta-learning} aims to extract prior knowledge from data, accelerating the learning process for new learning tasks \citep{thrun1998}. 
Most existing meta-learning approaches focus on situations where the number of tasks is large \citep[e.g.,][]{finn2017model, garnelo2018neural}. In many practical settings, however, the number of tasks available for meta-training is rather small.
In those settings, there is a risk of {\em overfitting to the meta-training tasks} \citep[meta-overfitting, cf.][]{qin2018rethink}, thus impairing the performance on yet unseen target tasks.
Hence, a key challenge is how to {\em regularize} the meta-learner to ensure its  {\em generalization to unseen tasks}.

\begin{figure}
	\centering
	\includegraphics[width=1.0\linewidth]{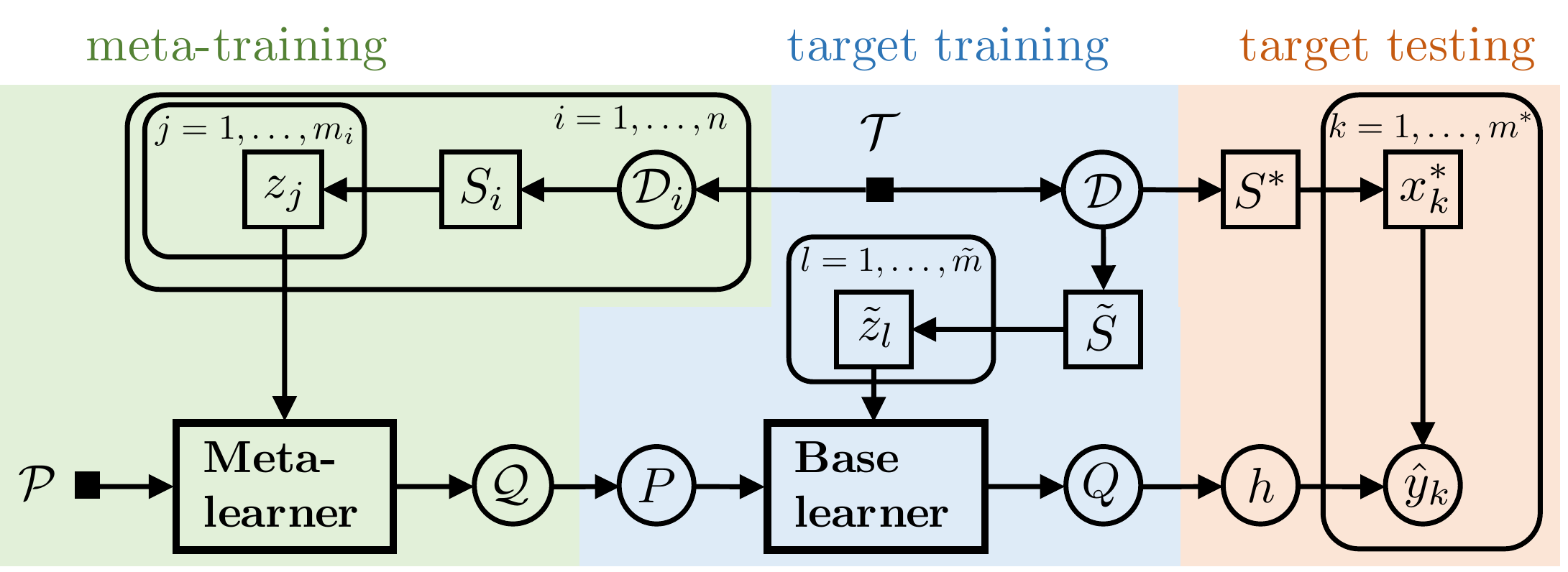} \vspace{-16pt}
	\caption{Overview of the described meta-learning framework with hyper-prior $\mathcal{P}$, hyper-posterior $\mathcal{Q}$, target prior $P$ and target posterior $Q$. The data-generating process in our setup is described by a  meta-learning
	environment $\mathcal{T}$, meta-train task distributions $\mathcal{D}_i$, target task distribution $\mathcal{D}$, data sets $S$ and data points $z = (x,y)$.
	\vspace{-12pt}}
	\label{fig:overview}
\end{figure}

\looseness -1 The PAC-Bayesian framework provides a rigorous way to reason about the generalization performance of learners \citep{mcallester1999some}.
However, initial PAC-Bayesian analyses of meta-learners \citep{pentina2014pac, amit2017meta} only consider {\em bounded} loss functions, which precludes important applications such as regression or probabilistic inference, where losses are typically unbounded.
More crucially, they rely on a challenging nested optimization problem, which is computationally much more expensive than standard meta-learning approaches.

To overcome these issues, we derive the  first {\em PAC-Bayesian bound} for meta-learners with {\em unbounded loss} functions. For Bayesian learners, we further tighten our PAC-Bayesian bounds, relating them directly to the marginal log-likelihood of the Bayesian model and thus {\em avoiding the reliance on nested optimization}. 
This allows us to derive the {\em PAC-optimal hyper-posterior (PACOH)}, which promises strong performance guarantees and a principled meta-level regularization.
Importantly, it can be approximated using standard variational methods \citep{Blei2016}, giving rise to a range of {\em scalable} meta-learning algorithms.

\looseness -1 In our experiments, we instantiate our framework with Gaussian Processes (GPs) and  Bayesian Neural Networks (BNNs) as base learners. Across several regression and classification environments, our proposed approach achieves {\em state-of-the-art} predictive {\em accuracy}, while also improving the {\em calibration} of the  uncertainty estimates. Further, we demonstrate that {\em PACOH} effectively {\em alleviates the meta-overfitting problem}, allowing us to successfully extract inductive bias from as little as five tasks while reliably reasoning about the learner's epistemic uncertainty. Thanks to these properties, {\em PACOH} can also be employed in a broad range of  {\em sequential decision problems}, which we showcase through a real-world Bayesian optimization task concerning the development of vaccines. 
\vspacecaption
\section{Related Work}
\vspacecaptionlow
\label{sec:related_work}

\textbf{Meta-learning.~} A range of methods in meta-learning attempt to learn the ``learning program" in form of a recurrent model \citep{Hochreiter2001, Andrychowicz2016, Chen2017a}, learn an embedding space shared across tasks \citep{snell2017prototypical, vinyals2016matching} or the initialization of a NN so it can be quickly adapted to new tasks \citep{finn2017model, nichol2018firstorder, rothfuss2019promp}. A group of recent methods also use probabilistic modeling to enable uncertainty quantification \citep{kim2018bayesian, finn2018probabilistic, garnelo2018neural}. Though the mentioned approaches are able to learn complex inference patterns, they rely on settings where meta-training tasks are abundant and fall short of performance guarantees. 
While the risk of meta-overfitting has previously been noted \citep{qin2018rethink, fortuin2019deep, yin2020meta}, it still lacks a rigorous formal analysis under realistic assumptions (e.g., unbounded loss functions). Addressing this issue, we study the generalization properties of meta-learners within the PAC-Bayesian framework, and, based on that, contribute a novel meta-learning approach with principled meta-level regularization.

\textbf{PAC-Bayesian theory.~} \looseness -1 Previous work presents generalization bounds for randomized predictors, assuming a prior to be given exogenously \citep{mcallester1999some, catoni2007pac, germain2016pac, Alquier2016a}. Further work explores data-dependent priors \citep{parrado12pac, dziugaite2017computing} or extends previous bounds to the scenario where priors are meta-learned \citep{pentina2014pac, amit2017meta}. However, these meta-generalization bounds are hard to optimize as they leave both the hyper-posterior and posterior unspecified, which leads to difficult nested optimization problems. In contrast, our bounds also hold for unbounded losses and yield a {\em tractable meta-learning objective} without the reliance on nested optimization.

\vspacecaption
\section{Background: PAC-Bayesian Framework}
\vspacecaptionlow
\label{sec:background}

\textbf{Preliminaries and notation.}
\looseness -1 A learning task is characterized by an unknown data distribution $\calD$ over a domain $\calZ$ from which we are given a set of $m$ observations $S = \left\{ z_i \right\}_{i=1}^m$, $z_i \sim \calD$. By $S \sim \calD^m$ we denote the i.i.d.\ sampling of $m$ data points.
%
\looseness -1 In supervised learning, we typically consider pairs $z_i = (x_i, y_i)$, where $x_i \in \calX $ are observed input features and $y_i \in \calY$ are target labels.
Given a sample $S$, our goal is to find a hypothesis $h \in \calH$, typically a function $h: \calX \rightarrow \calY$ in some hypothesis space $\calH$, that enables us to make predictions for new inputs $x^* \sim \calD_x$.
The quality of the predictions is measured by a {\em loss function} $l: \calH \times \calZ \rightarrow \R$. Accordingly, we want to minimize the {\em expected error} under the data distribution, that is, $\calL(h,\calD) = \expect{z^* \sim \calD}{l(h, z^*)}$. Since $\calD$ is unknown, we typically use the {\em empirical error}  $\hat{\calL}(h, S) = \frac{1}{m} \sum_{i=1}^m l(h, z_i)$ instead.

%
\looseness -1 In the PAC-Bayesian framework, we consider {\em randomized predictors}, i.e., probability measures on the hypothesis space $\calH$. This allows us to reason about the predictor's (epistemic) uncertainty, resulting from the fact that only a finite number of data points are available for training. We consider two such probability measures, the \emph{prior} $P \in \calM(\calH)$ and the \emph{posterior} $Q \in \calM(\calH)$. Here, $\calM(\calH)$ denotes the set of all probability measures on $\calH$. While in Bayesian inference, the prior and posterior are tightly connected through Bayes' theorem, the PAC-Bayesian framework makes fewer assumptions and only requires the prior to be independent of the observed data \citep{guedj2019primer}.
In the following, we assume that the Kullback-Leibler (KL) divergence $D_{KL}\left( Q \| P \right)$ exists. Based on the error definitions above, we can define the so-called \emph{Gibbs error} for a randomized predictor $Q$ as $\calL(Q, \calD) = \expect{h \sim Q}{\calL(h, \calD)}$ and its empirical counterpart 
as $\hat{\calL}(Q, S) = \expect{h \sim Q}{\hat{\calL}(h, S)}$.

\textbf{PAC-Bayesian bounds.}
\looseness -1 In practice, $\calL(Q, \calD)$ is unknown. Thus, one typically resorts to  minimizing $\hat{\calL}(Q, S)$ instead.
However, this may result in overfitting and poor generalization. Naturally, the question arises whether we can bound the unknown generalization error based on its empirical estimate. The PAC-Bayesian theory provides such a guarantee:

\begin{theorem} \citep{Alquier2016a} \label{theorem:alquier_pac_bound} 
Given a data distribution $\calD$, hypothesis space $\calH$, loss function $l(h,z)$, prior $P$, confidence level $\delta \in (0,1]$, and $\beta > 0$, with probability at least $1-\delta$ over samples $S \sim \calD^m$, we have for all $Q \in \calM(\calH)$:  \vspaceequation
\begin{equation}  \label{eq:alquier_pac_bound}
\vspaceequation
\calL(Q, \calD) \leq \hat{\calL}(Q, S) + \frac{1}{\beta} \left[ D_{KL}(Q || P) + \ln \frac{1}{\delta} + \Psi(\beta, m) \right] 
\vspace{3pt}
\end{equation}
with $
\Psi(\beta, m) = \ln \E_{P} \E_{\calD^m} \exp \left[ \beta \left( \calL(h, \calD) - \hat{\calL}(h, S) \right) \right] $

\end{theorem}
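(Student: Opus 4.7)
The plan is to derive the bound via the standard two-ingredient PAC-Bayes recipe: one exponential-moment tail bound for the prior that holds with high probability over the sample, followed by a change-of-measure step to transport the statement from the prior $P$ to an arbitrary posterior $Q$. The crucial observation is that the posterior $Q$ will only enter through Donsker--Varadhan, so the high-probability event can be defined without reference to $Q$, which is what lets the ``for all $Q$'' quantifier sit inside the probability.

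First I would introduce the nonnegative random variable
\begin{equation*}
V(S) \;=\; \E_{h \sim P}\, \exp\!\bigl[\beta\bigl(\calL(h,\calD) - \hat{\calL}(h,S)\bigr)\bigr],
\end{equation*}
which depends only on $S$ (and on $P$, $\calD$, $\beta$, but not on $Q$). By Fubini's theorem, $\E_{S \sim \calD^m}[V(S)] = \exp(\Psi(\beta,m))$. Applying Markov's inequality to $V(S)$ and taking logarithms gives, with probability at least $1-\delta$ over $S \sim \calD^m$,
\begin{equation*}
\ln V(S) \;\leq\; \ln \tfrac{1}{\delta} + \Psi(\beta, m).
\end{equation*}
This is the one high-probability event I will work on; note it is $Q$-free, so whatever follows deterministically for every $Q$ will hold uniformly in $Q$ on that event.

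Next I would invoke the Donsker--Varadhan variational representation of the KL divergence: for any measurable $\phi : \calH \to \R$ and any $Q \ll P$,
\begin{equation*}
\E_{h \sim Q}[\phi(h)] \;\leq\; D_{KL}(Q \| P) + \ln \E_{h \sim P} e^{\phi(h)}.
\end{equation*}
Conditional on the observed $S$, I apply this with $\phi(h) = \beta\bigl(\calL(h,\calD) - \hat{\calL}(h,S)\bigr)$, which yields
\begin{equation*}
\beta\bigl(\calL(Q,\calD) - \hat{\calL}(Q,S)\bigr) \;\leq\; D_{KL}(Q\|P) + \ln V(S),
\end{equation*}
where I have used linearity of expectation in $h \sim Q$ to identify $\E_{h \sim Q}[\calL(h,\calD)] = \calL(Q,\calD)$ and similarly for the empirical error. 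Combining this deterministic inequality with the high-probability bound on $\ln V(S)$ and dividing by $\beta > 0$ produces exactly \eqref{eq:alquier_pac_bound}, simultaneously for all $Q$.

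The main subtlety, rather than an obstacle, is to be careful about the order of quantifiers and measurability: the ``for all $Q$'' must live inside the probability, which is only legitimate because the bad event $\{V(S) > e^{\Psi(\beta,m)}/\delta\}$ is defined without reference to $Q$ and Donsker--Varadhan is a pointwise (in $S$) statement. A secondary technical point is the use of Fubini to interchange $\E_P$ and $\E_{\calD^m}$ when identifying $\E_S V(S)$ with $e^{\Psi(\beta,m)}$; this is fine as $V$ is nonnegative. No boundedness of $l$ is used anywhere, which is why the bound accommodates unbounded losses so long as $\Psi(\beta, m)$ is finite.
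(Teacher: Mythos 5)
Your proof is correct and is the standard argument for this bound: the paper does not reprove Theorem~\ref{theorem:alquier_pac_bound} (it is imported from Alquier et al.), but the machinery it uses to prove its own meta-level generalization (Theorem~\ref{theorem:meta-learning-bound-bounded-lossfn} in Appendix~\ref{appendix:proof_theorem_meta_pac_bound}) is exactly your two ingredients --- the change-of-measure / Donsker--Varadhan inequality (Lemma~\ref{lemma:concentration_sum_of_rvs}) followed by Markov's inequality on the $Q$-free exponential moment. Your remarks on Fubini and on why the uniformity over $Q$ is legitimate are the right points to flag and are handled correctly.
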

%

\looseness -1 Here, $\Psi(\beta, m)$ is a log moment generating function that quantifies how much the empirical error deviates from the Gibbs error. By making additional assumptions about the loss function $l$, we can bound $\Psi(\beta, m)$ and thereby obtain tractable bounds.
For instance,  if $l(h, z)$ is \emph{bounded} in $[a, b]$, we can use \mbox{Hoeffding's} lemma to obtain
$\Psi(\beta, m) \leq  (\beta^2 (b-a)^2) / (8 m)$. For unbounded loss functions, one commonly assumes that their moments are bounded. In particular, a loss function $l$ is considered \emph{sub-gamma} with variance factor $s^2$ and scale parameter $c$, under a prior $\pi$ and data distribution $\calD$, if it can be described by a sub-gamma random variable $V := \calL(h, \calD) - l(h,z)$. That is, its moment generating function is upper-bounded by that of a Gamma distribution $\Gamma(s, c)$. For details see \citet{Boucheron2013} and \citet{germain2016pac}. We can use the sub-gamma assumption to obtain $\Psi(\beta, m) \leq (\beta^2 s^2) / (2 m (1- \frac{c \beta}{m}))$.  
\textbf{Connections between the PAC-Bayesian framework and Bayesian inference.}
\looseness -1 Typically, we are interested in a posterior distribution $Q$ that promises us the lowest generalization error. In this sense, it seems natural to use the $Q \in \calM(\calH)$ that minimizes the bound in (\ref{eq:alquier_pac_bound}). Lemma~\ref{lemma:optimal_gibbs_posterior} gives us the closed-form solution to such a minimization problem:
\begin{lemma}\citep{catoni2007pac} \label{lemma:optimal_gibbs_posterior}
Let $\calH$ be a set, $g: \calH \rightarrow \R$ a (loss) function, and $Q \in \calM(\calH)$ and $P \in \calM(\calH)$ probability densities over $\calH$. Then, for any $\beta > 0$ and $h \in \calH$, 
\vspaceequation
\begin{equation} \label{eq:gibbs_dist}
Q^*(h) :=  \frac{P(h) e^{- \beta g(h)} }{Z_\beta} = \frac{P(h) e^{- \beta g(h)}}{\E_{h \sim P} \left[ e^{- \beta g(h)} \right]} \vspaceequation
\end{equation}
is the minimizing probability density of
\vspaceequation
\begin{equation} \label{eq:gibbs_agrmin}
\argmin_{Q \in \calM(\calH)} ~ \beta \E_{h \sim Q} \left[ g(h) \right] + D_{KL}(Q || P) \;.
\vspaceequation
\end{equation}
\end{lemma}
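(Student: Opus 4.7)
\textbf{Proof plan for Lemma~\ref{lemma:optimal_gibbs_posterior}.}
The plan is to use the classical ``completing the KL divergence'' trick (sometimes called the Donsker--Varadhan change of measure): I will show that, up to an additive constant that does not depend on $Q$, the objective in \eqref{eq:gibbs_agrmin} equals $D_{KL}(Q \| Q^*)$. Non-negativity of the KL divergence, together with the fact that it vanishes only when its two arguments coincide, then immediately identifies $Q^*$ as the unique minimizer.

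Concretely, I would first assume without loss of generality that $Q \ll P$ (otherwise $D_{KL}(Q \| P) = +\infty$ and there is nothing to prove) and that $Z_\beta := \E_{h \sim P}[e^{-\beta g(h)}]$ is finite, so that $Q^*$ is a well-defined probability density. Taking logarithms in the definition of $Q^*$ gives
\begin{equation*}
\log Q^*(h) \;=\; \log P(h) \;-\; \beta\, g(h) \;-\; \log Z_\beta,
\end{equation*}
which I would rearrange to isolate $\beta g(h) = \log P(h) - \log Q^*(h) - \log Z_\beta$. Taking expectations under $Q$ then yields
\begin{equation*}
\beta\, \E_{h \sim Q}[g(h)] \;=\; \E_{h \sim Q}\!\left[\log \tfrac{P(h)}{Q^*(h)}\right] \;-\; \log Z_\beta.
\end{equation*}

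Adding $D_{KL}(Q \| P) = \E_{h \sim Q}[\log Q(h) - \log P(h)]$ to both sides causes the $\log P(h)$ terms to cancel, leaving
\begin{equation*}
\beta\, \E_{h \sim Q}[g(h)] + D_{KL}(Q \| P) \;=\; D_{KL}(Q \| Q^*) \;-\; \log Z_\beta.
\end{equation*}
Since $\log Z_\beta$ is independent of $Q$, minimizing the left-hand side over $Q \in \calM(\calH)$ is equivalent to minimizing $D_{KL}(Q \| Q^*)$ over the same set. By Gibbs' inequality, this KL divergence is non-negative and vanishes if and only if $Q = Q^*$ almost everywhere, which establishes the claim.

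I do not anticipate any serious obstacle: the argument is a one-line algebraic identity plus non-negativity of the KL. The only points requiring mild care are measure-theoretic---ensuring $Z_\beta < \infty$ so that $Q^*$ is normalizable, and handling $Q$ that are not absolutely continuous with respect to $P$ by noting both sides are $+\infty$---but neither changes the structure of the proof.
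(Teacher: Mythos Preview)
Your proof is correct and is precisely the classical Donsker--Varadhan argument: rewrite the objective as $D_{KL}(Q\|Q^*)-\log Z_\beta$ and invoke non-negativity of the KL divergence. The paper itself does not supply a proof of this lemma---it is stated as a cited result from \citet{catoni2007pac}---so there is no in-paper proof to compare against, but your argument is the standard one and matches what one finds in that reference.
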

\looseness -1 The respective minimizing distribution is known as \emph{optimal Gibbs posterior} $Q^*$ \citep{catoni2007pac, Lever2013}.
As a direct consequence of Lemma~\ref{lemma:optimal_gibbs_posterior}, for fixed $P, S, m, \delta$, we can write the minimizer of (\ref{eq:alquier_pac_bound}) as \vspaceequation
$$
Q^*(h) =  P(h)e^{- \beta  \hat{\calL}(h,S)} / Z_\beta(S,P) 
\vspaceequation
$$
where $Z_\beta(S,P)=  \int_{\calH} P(h) e^{- \beta  \hat{\calL}(h,S)} dh$ is a normalization constant.
In a probabilistic setting, we typically use the negative log-likelihood of the data as our loss function $l(\cdot)$, that is, $l(h, z_i) := - \log p(z_i | h)$. In this case, the optimal Gibbs posterior coincides with the \emph{generalized Bayesian posterior} $Q^*(h ; P, S) = \frac{P(h) \, p(S \given h)^{\beta / m}}{Z_\beta(S,P)}$
where $Z_\beta(S,P) = \int_{\calH} P(h) \left( \prod_{j=1}^m p(z_j | h) \right)^{\beta / m} \, dh$ is called the \emph{generalized marginal likelihood} of the sample $S$ \citep{guedj2019primer}. For $\beta = m$ we recover the standard Bayesian posterior.
\vspacecaption
\section{PAC-Bayesian Bounds for Meta-Learning}
\vspacecaptionlow
\label{sec:method1}

We now present our main theoretical contributions. The corresponding proofs can be found in Appendix \ref{appendix:proofs}. An overview of our proposed framework is depicted in Figure~\ref{fig:overview}.

\looseness -1 \textbf{Meta-Learning.} In the standard learning setup (Sec.~\ref{sec:background}), we assumed that the learner has prior knowledge in the form of a prior distribution $P$. When the learner faces a new task, it uses the evidence in the form of a dataset $S$, to update the prior into a posterior $Q$. We formalize such a \emph{base learner} $Q(S, P)$ as a mapping $Q: \calZ^m \times \calM(\calH) \rightarrow \calM(\calH)$ that takes in a dataset and prior and outputs a posterior. 

In contrast, in meta-learning we aim to acquire such a prior $P$ in a {\em data-driven manner}, that is, by consulting a set of $n$ statistically related learning tasks $\{\tau_1, ..., \tau_n\}$. We follow the setup of \citet{baxter2000model} in which all tasks $\tau_i := (\calD_i, S_i)$ share the same data domain $\calZ:=\calX \times \calY$, hypothesis space $\calH$ and loss function $l(h,z)$, but may differ in their (unknown) data distributions $\calD_i$ and the number of points $m_i$ in the corresponding dataset $S_i \sim \calD_i^{m_i}$. To simplify our exposition, we assume w.l.o.g.\ that $m=m_i ~ \forall i$. Furthermore, each task $\tau_i \sim \calT$ is drawn i.i.d.\ from an \emph{environment} $\calT$, a probability distribution over data distributions and datasets. The goal is to {\em extract knowledge from the observed datasets, which can then be used as a prior for learning on new target tasks} $\tau \sim \calT$.
To extend the PAC-Bayesian analysis to the meta-learning setting, we again consider the notion of probability distributions over hypotheses. While the object of learning has previously been a hypothesis $h \in \calH$, it is now the prior distribution $P \in \calM(\calH)$.  The meta-learner presumes a {\em hyper-prior} $\calP \in \calM(\calM(\calH))$, that is, a distribution over priors $P$.  Combining the hyper-prior $\calP$ with the datasets $S_1, ..., S_n$ from multiple tasks, the meta-learner then outputs a {\em hyper-posterior} $\calQ$ over priors. Accordingly, the hyper-posterior's performance is measured via the expected Gibbs error when sampling priors $P$ from $\calQ$ and applying the base learner, the so-called \emph{transfer-error:}
\vspaceequation
\begin{equation*}
\vspaceequation 
\calL(\calQ, \calT) := \E_{P \sim \calQ} \left[ \E_{(\calD, m) \sim \calT} \left[ \E_{S \sim \calD^m} \left[ \calL(Q(S, P), \calD) \right] \right]  \right]
\end{equation*} 
While the transfer error is unknown in practice, we can estimate it using the \emph{empirical multi-task error}
\vspaceequation 
\begin{equation*}
\vspaceequation 
\hat{\calL}(\calQ, S_1, ..., S_n) := \E_{P \sim \calQ} \left[  \frac{1}{n} \sum_{i=1}^n \hat{\calL} \left( Q(S_i, P), S_i \right) \right] \;.
\end{equation*}

\textbf{PAC-Bayesian meta-learning bounds.}
We now present our first main result: An upper bound on the true transfer error $\calL(\calQ, \calT)$, in terms of the empirical multi-task error $\hat{\calL}(\calQ, S_1, ..., S_n)$ plus several tractable complexity terms.

\begin{theorem} \label{theorem:meta-learning-bound-bounded-lossfn}
Let $Q: \calZ^m \times \calM(\calH) \rightarrow \calM(\calH)$ be a base learner, $\calP \in \calM(\calM(\calH))$ some fixed hyper-prior $\calP$ and $\lambda, \beta > 0$. For any confidence level $\delta \in (0, 1]$ the inequality\vspace{-2mm}
\begin{align} \label{eq:meta-learning-bound}
\begin{split}
\calL(\calQ, \calT) \leq & ~ \hat{\calL}(\calQ, S_1, ..., S_n) + \left(\frac{1}{\lambda} + \frac{1}{n\beta}\right)  D_{KL}(\calQ||\calP) \hspace{-20pt} \\ & + \frac{1}{n} \sum_{i=1}^n \frac{1}{\beta} \E_{P \sim \calQ} \left[ D_{KL}(Q(S_i,P) || P)\right] \hspace{-10pt} \\ & + C(\delta, \lambda, \beta) 
\end{split}
\end{align}
holds uniformly over all hyper-posteriors $\calQ \in \calM(\calM(\calH))$ with probability $1-\delta$.
If the loss function is bounded, that is,  $l: \calH \times \calZ \rightarrow [a,b]$, the above inequality holds for 
$
C(\delta, \lambda, \beta) = \left( \frac{\lambda}{8n} + \frac{\beta}{8m} \right) (b - a)^2 + \frac{1}{\sqrt{n}} \ln \frac{1}{\delta}
$.
If the loss function is sub-gamma with variance factor $s_{\rom{1}}^2$ and scale parameter $c_{\rom{1}}$ for the data distributions $\mathcal{D}_i$ and $s_{\rom{2}}^2$, $c_{\rom{2}}$ for the task distribution $\cal{T}$, the inequality holds with
$
C(\delta, \lambda, \beta) =\frac{\lambda s_{\rom{2}}^2}{2n(1- (c_{\rom{2}} \lambda)/n)} + \frac{\beta s_{\rom{1}}^2}{2m(1- (c_{\rom{1}} \beta)/m) } + 
\frac{1}{\sqrt{n}} \ln \frac{1}{\delta}
$.
\end{theorem}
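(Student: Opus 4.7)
The plan is to apply the Donsker--Varadhan change-of-measure inequality---the engine behind Theorem~\ref{theorem:alquier_pac_bound}---\emph{twice}: once at the task level pointwise in each prior $P$, and once at the meta level across priors. Careful bookkeeping of how the residual log moment generating function (log-MGF) terms combine with $D_{KL}(\calQ\|\calP)$ yields the claimed inequality, after which Markov's inequality combined with either Hoeffding's lemma (bounded case) or the sub-gamma MGF bound controls the residuals.

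\textbf{Task-level step.} For each observed task $i$ and each fixed $P$, applying change of measure with the function $h\mapsto\beta[\calL(h,\calD_i)-\hat{\calL}(h,S_i)]$ yields
\[
\calL(Q(S_i,P),\calD_i) \leq \hat{\calL}(Q(S_i,P),S_i) + \tfrac{1}{\beta}D_{KL}(Q(S_i,P)\|P) + \tfrac{1}{\beta}\ln g_i(P),
\]
where $g_i(P):=\E_{h\sim P}[\exp(\beta(\calL(h,\calD_i)-\hat{\calL}(h,S_i)))]$. Averaging over $i$ and taking $\E_{P\sim\calQ}$---while crucially keeping $\sum_i \ln g_i(P)=\ln\prod_i g_i(P)$ as a single log rather than $n$ separate ones---bounds $\E_\calQ[\tfrac{1}{n}\sum_i\calL(Q(S_i,P),\calD_i)]$ by $\hat{\calL}(\calQ,S_1,\ldots,S_n)+\tfrac{1}{n\beta}\sum_i\E_\calQ[D_{KL}(Q(S_i,P)\|P)]+\tfrac{1}{n\beta}\E_\calQ[\ln\prod_i g_i(P)]$.

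\textbf{Meta-level step.} Apply change of measure at the meta level twice. First, to the residual log term above, $\E_\calQ[\ln\prod_i g_i(P)]\leq D_{KL}(\calQ\|\calP)+\ln\E_\calP[\prod_i g_i(P)]$, producing the $\tfrac{1}{n\beta}D_{KL}(\calQ\|\calP)$ coefficient. Second, setting $F(P):=\E_{(\calD,m)\sim\calT}\E_{S\sim\calD^m}[\calL(Q(S,P),\calD)]$ and $G(P):=\tfrac{1}{n}\sum_i\calL(Q(S_i,P),\calD_i)$, applying change of measure to $\lambda[F(P)-G(P)]$ gives
\[
\calL(\calQ,\calT)-\E_\calQ[G(P)] \leq \tfrac{1}{\lambda}D_{KL}(\calQ\|\calP) + \tfrac{1}{\lambda}\ln\E_\calP[e^{\lambda(F(P)-G(P))}],
\]
producing the remaining $\tfrac{1}{\lambda}D_{KL}(\calQ\|\calP)$. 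Chaining the three bounds yields the stated inequality modulo two log-MGF residuals. For each residual, I apply Markov's inequality at confidence $1-\delta/2$ and union bound. For the first, by Fubini and conditional independence of $S_i$ given $\calD_i$, $\E_\calP[\prod_i g_i(P)]$ factorizes into $n$ per-task MGFs of centered empirical means of size $m$; Hoeffding's lemma (bounded case) bounds each by $\exp(\beta^2(b-a)^2/(8m))$ while the sub-gamma assumption gives $\exp(\beta^2 s_{\rom{1}}^2/(2m(1-c_{\rom{1}}\beta/m)))$, reproducing the corresponding summand of $C(\delta,\lambda,\beta)$ after dividing by $n\beta$. For the second, $F(P)-G(P)$ is the deviation of an i.i.d.\ average of $n$ task-level quantities from its mean, so the analogous bounds with sample size $n$ and task-level parameters $s_{\rom{2}}^2,c_{\rom{2}}$ produce the remaining terms; the $\ln(2/\delta)$ constants collect into $\tfrac{1}{\sqrt{n}}\ln\tfrac{1}{\delta}$ after a convenient rescaling.

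\textbf{Main obstacle.} The crux is obtaining both the $\tfrac{1}{\lambda}$ and the $\tfrac{1}{n\beta}$ coefficients on $D_{KL}(\calQ\|\calP)$: a naive per-task application of meta-level change of measure would incur $n$ separate KL divergences and yield the prohibitive coefficient $\tfrac{1}{\beta}$ instead. The trick is the factorization $\sum_i\ln g_i(P)=\ln\prod_i g_i(P)$, which absorbs the $n$ task-level log-MGFs into a single meta-level change of measure paying one KL. A secondary technical point is that the sub-gamma case requires verifying $c_{\rom{1}}\beta<m$ and $c_{\rom{2}}\lambda<n$ for the moment generating function bounds to remain finite, and applying Fubini to swap the order of $\E_\calP$ and the task expectations, which is valid since $\calP$ is data-independent by assumption.
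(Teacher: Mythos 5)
Your overall architecture matches the paper's: a change-of-measure step at the task level, a second one at the meta level, and the crucial observation that the $n$ task-level log-MGFs must be absorbed into a \emph{single} meta-level change of measure so that $D_{KL}(\calQ\|\calP)$ appears with coefficient $\tfrac{1}{n\beta}+\tfrac{1}{\lambda}$ rather than $\tfrac{1}{\beta}$. The paper realizes this by one joint application of the change-of-measure lemma to the two-level hypothesis $(P,h_1,\dots,h_n)$ with prior $\calP\cdot\prod_i P$ and posterior $\calQ\cdot\prod_i Q_i$, whose KL decomposes as $D_{KL}(\calQ\|\calP)+\sum_i\E_{P\sim\calQ}[D_{KL}(Q_i\|P)]$; your variant (apply DV pointwise in $P$, then apply DV once more to the residual $\E_{\calQ}[\ln\prod_i g_i(P)]$) produces the same terms and is a legitimate equivalent route. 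Your handling of the MGF residuals via Fubini, per-task Hoeffding, and the sub-gamma assumption also matches the paper's Step 3.

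The genuine gap is in the final high-probability step. You apply Markov's inequality separately to the two residuals at level $\delta/2$ and union-bound, which yields an additive confidence term of the form $\left(\tfrac{1}{n\beta}+\tfrac{1}{\lambda}\right)\ln\tfrac{2}{\delta}$; this is \emph{not} the stated $\tfrac{1}{\sqrt{n}}\ln\tfrac{1}{\delta}$, and your remark that the $\ln(2/\delta)$ constants ``collect into $\tfrac{1}{\sqrt{n}}\ln\tfrac{1}{\delta}$ after a convenient rescaling'' does not hold: no rescaling of $\lambda,\beta$ turns a union bound over two events into a single $\ln\tfrac{1}{\delta}$ at level $\delta$, and the coefficients differ for general $\lambda,\beta,n,m$. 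The paper instead combines both residuals into the single random variable $e^{\Upsilon^{\rom{1}}+\Upsilon^{\rom{2}}}$, factors $\sqrt{n}$ out of both exponents so that the combined quantity is a $\tfrac{1}{\sqrt{n}}$-power of an expression whose data-expectation is controlled, and applies Markov's inequality \emph{once} --- this is exactly the device the authors advertise as avoiding the union bound of earlier work and is what produces the $\tfrac{1}{\sqrt{n}}\ln\tfrac{1}{\delta}$ term. As written, your argument proves a valid bound of the same structural form but with a different $C(\delta,\lambda,\beta)$, so it does not establish the theorem as stated; to close the gap you would need to replace the union bound with the single-Markov-on-the-product argument.
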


The proof of Theorem \ref{theorem:meta-learning-bound-bounded-lossfn} consists of three main steps (see Appx. \ref{appendix:proof_theorem_meta_pac_bound}). First, we bound the base learner's expected generalization error for each task when given a prior $P \sim \calQ$ and $m_i$ data samples (i.e., the error caused by observing only a finite number of samples per task). Second, we bound the generalization gap on the meta-level which is due to the fact that the meta-learning only receives finitely many tasks $\tau_i$ from $\calT$. In these two steps, we employ the change of measure inequality (Lemma \ref{lemma:concentration_sum_of_rvs}) to obtain an upper bound of the generalization error in terms of the empirical error, the KL-divergence and a cumulant-generating function. In the final step, we use either use the bounded loss or sub-gamma assumption, together with Markov's inequality, to bound the cumulant-generating function with high probability, giving rise to $C(\delta, \lambda, \beta)$.

\looseness -1 
For bounded losses, Theorem \ref{theorem:meta-learning-bound-bounded-lossfn} provides a structurally similar, but tighter bound than \citet{pentina2014pac}. In particular, by using an improved proof technique, we are able to avoid a union bound argument, allowing us to reduce the negative influence of the confidence parameter $\delta$. Unlike in the bound of \citet{amit2017meta}, the contribution of the KL-complexity term $D_{KL}(\mathcal{Q}||\mathcal{P})$ vanishes as $n \rightarrow \infty$. As a result, we find that the bound in Theorem \ref{theorem:meta-learning-bound-bounded-lossfn} is much tighter than \citep{amit2017meta} for most practical instantiations.
In contrast to \citet{pentina2014pac} and \citet{amit2017meta}, our theorem also provides guarantees for \emph{unbounded} loss functions under moment constraints (see Appendix \ref{appendix:proof_theorem_meta_pac_bound} for details). This makes Theorem ~\ref{theorem:meta-learning-bound-bounded-lossfn} particularly useful for probabilistic models in which the loss function coincides with the inherently unbounded negative log-likelihood.

\looseness -1 Common choices for $\lambda$ and $\beta$ are either a) $\lambda = \sqrt{n}$, $\beta = \sqrt{m}$ or b) $\lambda =n$, $\beta = m$ \citep{germain2016pac}. Choice a) yields consistent bounds, meaning that the gap between the transfer error and the bound vanishes as $n, m \rightarrow \infty$. In case of b), the bound always maintains a gap since $C(\delta, n, m)$ does not converge to zero. However, the KL terms decay faster, which can be advantageous for smaller sample sizes. For instance, despite their lack of consistency, sub-gamma bounds with $\beta = m$ have been shown to be much tighter in simple Bayesian linear regression scenarios with limited data ($m \lesssim 10^4$) \citep{germain2016pac}.



Previous work \citep{pentina2014pac, amit2017meta} proposes meta-learning algorithms that minimize uniform meta-generalization bounds like the one in Theorem~\ref{theorem:meta-learning-bound-bounded-lossfn}. However, in practice, the posterior $Q(S,P)$ is often intractable and thus the solution of a numerical optimization problem, like variational inference in case of BNNs. Hence, minimizing the bound in (\ref{eq:meta-learning-bound}) turns into a difficult two-level optimization problem which becomes nearly infeasible to solve for rich hypothesis spaces such as neural networks.

While Theorem~\ref{theorem:meta-learning-bound-bounded-lossfn} holds for any base learner $Q(S,P)$,  we would preferably want to use a base learner that gives us {\em optimal performance guarantees}. As discussed in Sec.~\ref{sec:background}, the Gibbs posterior not only minimizes PAC-Bayesian error bounds, but also generalizes the Bayesian posterior. Assuming a Gibbs posterior as base learner, the bound in (\ref{eq:meta-learning-bound}) can be restated in terms of the partition function $Z_\beta(S_i, P)$:

\begin{corollary}
\label{cor:bayesian_learner_PAC_bound}
When choosing a Gibbs posterior $Q^*(S_i, P)\\ := P(h) \exp (- \beta \hat{\calL}(S_i,h)) / Z_\beta(S_i, P)$ as a base learner, under the same assumptions as in Theorem~\ref{theorem:meta-learning-bound-bounded-lossfn}, we have \vspaceequation 
\begin{align} 
\begin{split} \label{eq:meta-level_pac_bound_with_mll}
\calL(\calQ, \calT) & \leq ~   - \frac{1}{n} \sum_{i=1}^n \frac{1}{\beta} \E_{P \sim \calQ} \left[\ln Z_\beta(S_i, P) \right]  \\ & + \left(\frac{1}{\lambda} + \frac{1}{n\beta}\right)  D_{KL}(\calQ||\calP) + C(\delta, \lambda, \beta) \;. \hspace{-5pt} \vspaceequation
\end{split}
\end{align}
with probability at least $1-\delta$.  \vspaceequation
\end{corollary}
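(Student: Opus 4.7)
The corollary is a direct substitution result: we just plug the Gibbs base learner $Q^*(S_i,P)$ into the bound of Theorem~\ref{theorem:meta-learning-bound-bounded-lossfn} and collapse the per-task empirical error together with the inner KL divergence into a single log-partition term. The essential fact to exploit is that $Q^*$ is, by construction, the exact minimizer of the variational free-energy functional appearing in Lemma~\ref{lemma:optimal_gibbs_posterior}, so that the value of this functional at $Q^*$ admits a closed-form expression in terms of $\ln Z_\beta$. There is no new probabilistic argument required; the concentration work has already been done in Theorem~\ref{theorem:meta-learning-bound-bounded-lossfn}.

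\textbf{Key identity.} The core step is to verify, for each fixed task $i$ and each prior $P$ in the support of $\calQ$, the identity
\begin{equation*}
\beta \, \hat{\calL}\bigl(Q^*(S_i,P), S_i\bigr) + D_{KL}\bigl(Q^*(S_i,P) \,\|\, P\bigr) = -\ln Z_\beta(S_i, P).
\end{equation*}
To see this, I would expand the KL divergence directly using the explicit Gibbs form $Q^*(h) = P(h)\, e^{-\beta \hat{\calL}(h, S_i)} / Z_\beta(S_i,P)$:
\begin{equation*}
D_{KL}(Q^* \| P) = \E_{h \sim Q^*}\!\left[\ln \frac{Q^*(h)}{P(h)}\right] = -\beta \, \E_{h \sim Q^*}\!\left[\hat{\calL}(h, S_i)\right] - \ln Z_\beta(S_i, P).
\end{equation*}
Recognizing $\E_{h \sim Q^*}[\hat{\calL}(h, S_i)] = \hat{\calL}(Q^*(S_i, P), S_i)$ and rearranging yields the displayed identity. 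Equivalently, this can be read off from Lemma~\ref{lemma:optimal_gibbs_posterior}: the optimal value of the variational objective $\beta \E_{Q}[g] + D_{KL}(Q\|P)$ with $g(h) = \hat{\calL}(h,S_i)$ is precisely $-\ln Z_\beta(S_i,P)$.

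\textbf{Assembly.} Dividing the identity by $\beta$, taking the expectation over $P \sim \calQ$, and averaging over the $n$ tasks gives
\begin{equation*}
\hat{\calL}(\calQ, S_1,\ldots,S_n) + \frac{1}{n}\sum_{i=1}^n \frac{1}{\beta} \E_{P \sim \calQ}\!\left[D_{KL}(Q^*(S_i,P)\|P)\right] = -\frac{1}{n}\sum_{i=1}^n \frac{1}{\beta}\E_{P \sim \calQ}\!\left[\ln Z_\beta(S_i,P)\right].
\end{equation*}
Substituting the left-hand side of this equality for the corresponding two terms in the bound of Theorem~\ref{theorem:meta-learning-bound-bounded-lossfn} immediately produces the claimed inequality~(\ref{eq:meta-level_pac_bound_with_mll}) with probability at least $1-\delta$, and the constants $C(\delta,\lambda,\beta)$ are inherited without change from the bounded or sub-gamma case of Theorem~\ref{theorem:meta-learning-bound-bounded-lossfn}.

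\textbf{Potential obstacle.} There is no significant analytical difficulty; the only points requiring care are purely bookkeeping. First, one must ensure that $Z_\beta(S_i, P)$ is finite and that the two expectations being combined are integrable, so that the pointwise identity lifts to the expectation under $\calQ$; this is guaranteed by the assumption in Section~\ref{sec:background} that the relevant KL divergence exists, together with the loss-function assumptions (bounded or sub-gamma) already used in Theorem~\ref{theorem:meta-learning-bound-bounded-lossfn}. Second, one should note that the Gibbs posterior $Q^*(S_i, P)$ depends on $P$, so the expectation $\E_{P \sim \calQ}$ must be applied after forming $\ln Z_\beta(S_i, P)$, which is exactly how it appears in~(\ref{eq:meta-level_pac_bound_with_mll}).
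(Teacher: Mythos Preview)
Your proposal is correct and follows essentially the same route as the paper: both verify the identity $\beta\,\hat{\calL}(Q^*(S_i,P),S_i)+D_{KL}(Q^*(S_i,P)\|P)=-\ln Z_\beta(S_i,P)$ by expanding the KL divergence with the explicit Gibbs density, then average over tasks and $P\sim\calQ$ and substitute into Theorem~\ref{theorem:meta-learning-bound-bounded-lossfn}. The paper additionally spells out, via the minimizing property of $Q^*$, that the resulting bound is tighter than the generic one in Theorem~\ref{theorem:meta-learning-bound-bounded-lossfn}, but that extra remark is not needed for the corollary as stated.
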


\looseness -1 Since this bound assumes a \emph{PAC-optimal base learner}, it is tighter than the bound in (\ref{eq:meta-learning-bound}), which holds for any (potentially sub-optimal) $Q \in \calM(\calH)$. More importantly, (\ref{eq:meta-level_pac_bound_with_mll}) avoids the explicit dependence on $Q(S, P)$, {\em turning the previously mentioned two-level optimization problem into a standard stochastic optimization problem}. Moreover, if we choose the negative log-likelihood as the loss function and $\lambda=n, \beta_i=m_i$, then $\ln Z_\beta(S_i, P)$ {\em coincides with the marginal log-likelihood}, which is tractable for various popular learning models, such as Gaussian processes. 

The bound in Corollary \ref{cor:bayesian_learner_PAC_bound} consists of the expected generalized marginal log-likelihood under the hyper-posterior $\calQ$ as well as the KL-divergence term, which serves as a {\em regularizer on the meta-level}. As the number of training tasks $n$ grows, the relative weighting of the KL term shrinks. This is consistent with the general notion that regularization should be strong if only little data is available and vanish asymptotically as $n, m \rightarrow \infty$. 
%


\looseness -1 \textbf{Meta-Learning the hyper-posterior.} A natural way to obtain a PAC-Bayesian meta-learning algorithm could be to minimize the bound in Corollary \ref{cor:bayesian_learner_PAC_bound} w.r.t. $\calQ$. However, we can even derive the closed-form solution of such PAC-Bayesian meta-learning problem, that is, the minimizing  hyper-posterior $\calQ^*$.
For that, we exploit once more the insight that the minimizer of (\ref{eq:meta-level_pac_bound_with_mll}) can be written as a Gibbs distribution (cf. Lemma~\ref{lemma:optimal_gibbs_posterior}), giving us the following result:

\begin{proposition} \label{proposition:pacoh_optimal_hyper_posterior}
\textbf{(PAC-Optimal Hyper-Posterior)} Given a hyper-prior $\calP$ and datasets $S_1, ..., S_n$, the hyper-posterior minimizing the meta-learning bound in (\ref{eq:meta-level_pac_bound_with_mll}) is given by 
\begin{equation} \label{equation:pacoh_dl_optimal_hyper_posterior}
  \calQ^*(P) = \frac{\calP(P) \exp \left( \frac{\lambda}{n\beta + \lambda}\sum_{i=1}^n \ln Z_\beta(S_i, P) \right) }{Z^{\rom{2}}(S_1, ..., S_n, \calP)} 
\end{equation}
with
$
Z^{\rom{2}} = \E_{P \sim \calP} \left[ \exp \left(  \frac{\lambda}{n\beta + \lambda} \sum_{i=1}^n \ln Z_\beta(S_i, P) \right)  \right] \;.
$
\end{proposition}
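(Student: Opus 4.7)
The plan is to observe that the right-hand side of the bound in (\ref{eq:meta-level_pac_bound_with_mll}), viewed as a functional of $\calQ \in \calM(\calM(\calH))$, has exactly the structure treated by Lemma~\ref{lemma:optimal_gibbs_posterior}: a linear term $\E_{P\sim\calQ}[g(P)]$ plus a $D_{KL}(\calQ\|\calP)$ regularizer, where the role of the hypothesis space is now played by $\calM(\calH)$ (with hyper-prior $\calP$) and the role of the ``loss'' $g$ is played by the negative generalized log marginal likelihoods $-\frac{1}{\beta}\ln Z_\beta(S_i,P)$ averaged over tasks. The data-independent constant $C(\delta,\lambda,\beta)$ plays no role in the minimization and is discarded.

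Concretely, I would first drop $C(\delta,\lambda,\beta)$ and write the $\calQ$-dependent portion of the upper bound as
\begin{equation*}
F(\calQ) \;=\; \frac{1}{n}\sum_{i=1}^{n}\E_{P\sim\calQ}\!\left[-\tfrac{1}{\beta}\ln Z_\beta(S_i,P)\right] + \Bigl(\tfrac{1}{\lambda}+\tfrac{1}{n\beta}\Bigr)\,D_{KL}(\calQ\|\calP).
\end{equation*}
Pulling the sum inside the expectation and multiplying $F$ by the positive constant $\bigl(\tfrac{1}{\lambda}+\tfrac{1}{n\beta}\bigr)^{-1} = \tfrac{n\beta\lambda}{n\beta+\lambda}$, which does not affect the argmin, gives the equivalent objective
\begin{equation*}
\E_{P\sim\calQ}\!\left[\,g(P)\,\right] + D_{KL}(\calQ\|\calP), \qquad g(P) \;:=\; -\,\frac{\lambda}{n\beta+\lambda}\sum_{i=1}^{n}\ln Z_\beta(S_i,P),
\end{equation*}
after the cancellation $\tfrac{1}{n\beta}\cdot\tfrac{n\beta\lambda}{n\beta+\lambda}=\tfrac{\lambda}{n\beta+\lambda}$.

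At this point Lemma~\ref{lemma:optimal_gibbs_posterior} applies verbatim, with the ``base space'' $\calH$ replaced by $\calM(\calH)$, the ``prior'' replaced by $\calP$, and the parameter $\beta$ in the lemma set to $1$. The lemma yields the unique minimizing probability measure
\begin{equation*}
\calQ^*(P) \;=\; \frac{\calP(P)\,e^{-g(P)}}{\E_{P\sim\calP}\bigl[e^{-g(P)}\bigr]} \;=\; \frac{\calP(P)\exp\!\Bigl(\tfrac{\lambda}{n\beta+\lambda}\sum_{i=1}^{n}\ln Z_\beta(S_i,P)\Bigr)}{Z^{\rom{2}}(S_1,\dots,S_n,\calP)},
\end{equation*}
which is exactly (\ref{equation:pacoh_dl_optimal_hyper_posterior}).

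I do not anticipate a serious obstacle: the argument is essentially bookkeeping, with the only mild subtlety being the correct rescaling so that the KL coefficient becomes $1$ and one can invoke Lemma~\ref{lemma:optimal_gibbs_posterior} directly. One should also confirm that Lemma~\ref{lemma:optimal_gibbs_posterior}, as stated for densities on a ``set $\calH$,'' remains valid when the underlying space is the (infinite-dimensional) space of priors $\calM(\calH)$; this is standard, since the variational characterization of the Gibbs measure holds for any measurable space with a reference measure $\calP$ such that $\E_{P\sim\calP}[e^{-g(P)}]<\infty$, and the latter finiteness is implicit because the bound in (\ref{eq:meta-level_pac_bound_with_mll}) is assumed meaningful.
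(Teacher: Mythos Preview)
Your proposal is correct and follows essentially the same route as the paper: drop the $\calQ$-independent constant, rescale by $\bigl(\tfrac{1}{\lambda}+\tfrac{1}{n\beta}\bigr)^{-1}=\tfrac{n\beta\lambda}{n\beta+\lambda}$ so the KL coefficient becomes $1$, and then invoke Lemma~\ref{lemma:optimal_gibbs_posterior} on the space $\calM(\calH)$ with hyper-prior $\calP$. Your bookkeeping is in fact slightly cleaner than the paper's (you absorb the coefficient into $g$ and set the lemma's temperature to $1$, whereas the paper carries the coefficient as the lemma's $\beta$), and your remark about the validity of the Gibbs variational principle on $\calM(\calH)$ is a welcome addition that the paper leaves implicit.
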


This gives us a tractable expression for the {\em PACOH} $\calQ^*(P)$ up to the (level-\rom{2}) partition function $Z^{\rom{2}}$, which is constant with respect to $P$. We refer to $\calQ^*$ as PAC-optimal, as it provides the best possible meta-generalization guarantees among all meta-learners in the sense of Theorem \ref{theorem:meta-learning-bound-bounded-lossfn}.

\vspacecaption
\section{Meta-Learning using the PACOH}
\vspacecaptionlow
\label{sec:method2}
\looseness -1 After having introduced the closed-form solution of the PAC-Bayesian meta-learning problem in Sec.~\ref{sec:method1}, we now discuss how to translate the \emph{PACOH} into a practical meta-learning algorithm when employing GPs and BNNs as base learners.

\vspacesubcaption
\subsection{Approximating the PACOH}
\vspacesubcaption
\looseness -1 Given the hyper-prior and (level-I) log-partition function $\ln Z(S_i, P)$, we can compute the PACOH $\calQ^*$ up to the normalization constant $Z^{\rom{2}}$. Such a setup lends itself to approximate inference methods \citep{Blei2016}. In particular, we employ \emph{Stein Variational Gradient Descent (SVGD)} \citep{Liu2016} which approximates $\calQ^*$ as a set of particles $\hat{\calQ} = \{P_{\phi_1}, ..., P_{\phi_K}\}$\footnote{Note that any other approximate inference can be employed instead. We chose SVGD as we found it to work best in practice.}. Here, $P_\phi$ denotes a prior with parameters $\phi$. 
%
Alg.~\ref{algo:pacoh_generic} summarizes the resulting generic meta-learning procedure. Initially, we sample $K$ particles $\phi_k \sim \calP$ from the hyper-posterior. For notational brevity, we stack the particles into a $K \times \text{dim}(\phi)$ matrix $\bm{\phi} :=[\phi_1, ..., \phi_K]^\top$.
Then, in each iteration, we estimate the score of $\calQ^*$, \vspaceequation
\begin{align*} 
\nabla_{\phi_k} \ln \calQ^*(\phi_k) = \nabla_{\phi_k} \ln \calP(\phi_k) + \frac{\lambda}{n\beta + \lambda} \sum_{i=1}^n  \nabla_{\phi_k} \ln Z_{i,k}    \vspaceequation
\end{align*}
wherein $\ln Z_{i,k} = \ln Z_\beta (S_i, P_{\phi_k})$, and update the particle matrix using the SVGD update rule:   \vspaceequation
\begin{equation}
    \bm{\phi} \leftarrow \bm{\phi} + \eta ~ \mathbf{K} ~ \nabla_{\bm{\phi}} \mathbf{ln} \tilde{\calQ}^* + \nabla_{\bm{\phi}} \mathbf{K} \;; \vspaceequation
\end{equation}
where $\nabla_{\bm{\phi}} \mathbf{ln} \tilde{\calQ}^*:= [\nabla_{\phi_1} \ln \calQ^*(\phi_1), ..., \nabla_{\phi_K} \ln \calQ^*(\phi_K) ]^\top$ denotes the matrix of stacked score gradients, $\mathbf{K} := [k(\phi_k, \phi_{k'})]_{k,k'}$ the kernel matrix induced by a kernel function $k(\cdot,\cdot)$ and $\eta$ the step size for the SVGD updates.

To this point, how to parametrize the prior $P_\phi$ and how to estimate the generalized marginal log-likelihood $\ln Z_{i,k} = \ln Z_\beta(S_i, P_{\phi_k})$ (\texttt{MLL\_Estimator}) in Alg.~\ref{algo:pacoh_generic} have remained unspecified. In the following two subsections, we discuss these components in more detail for GPs and BNNs.

\begin{algorithm}[t]
\caption{PACOH with SVGD approximation of $\calQ^*$}
\label{algo:pacoh_generic}
\begin{algorithmic}
\STATE \textbf{Input:} hyper-prior $\calP$, datasets $S_1, ..., S_n$
\STATE \textbf{Input:} SVGD kernel function $k(\cdot, \cdot)$, step size $\eta$
\STATE $\bm{\phi} :=[\phi_1, ..., \phi_K] \text{ , with  } \phi_k \sim \calP$  \hfill // init prior particles
\WHILE{ not converged} 
 	\FOR{$k=1,...,K$} 
 	    \FOR{$i=1,...,n$} 
 	        \STATE $\ln Z_{i,k} \leftarrow$ \texttt{MLL\_Estimator}$(S_i, \phi_k, \beta)$
 	    \ENDFOR
     	\STATE $  \nabla_{\phi_k} \ln \tilde{\calQ}^*\leftarrow \nabla_{\phi_k} \ln \calP + \frac{\lambda}{\lambda + n\beta}  \sum_{i=1}^{n} \nabla_{\phi_k} \ln Z_{i,k}$
     \ENDFOR
	\STATE $\bm{\phi} \leftarrow \bm{\phi} + \eta ~ \mathbf{K} ~ \nabla_{\bm{\phi}} \mathbf{ln} \tilde{\calQ}^* + \nabla_{\bm{\phi}} \mathbf{K}$  \hfill // SVGD update
\ENDWHILE
\STATE \textbf{Output:} set of priors $\{P_{\phi_1}, ..., P_{\phi_K}\}$ 
\end{algorithmic}
\end{algorithm} 
\vspacesubcaption
\subsection{Meta-Learning Gaussian Process Priors}
\vspacesubcaption
\looseness -1 
\textbf{Setup.} In GP regression, each data point corresponds to a tuple $z_{i,j} = (x_{i,j},y_{i,j})$. For the $i$-th dataset, we write $S_i = (\bX_i, \by_i)$, where $\bX_i = (x_{i,1}, ..., x_{i,m_i})^\top$ and $\by_i = (y_{i,1}, ..., y_{i,m_i})^\top$. GPs are a Bayesian method in which the prior $P_\phi(h) = \mathcal{GP} \left(h| m_\phi(x), k_\phi(x, x') \right)$ is specified by a kernel $k_\phi: \calX \times \calX \rightarrow \R$ and a mean function $m_\phi: \calX \rightarrow \R$.
Similar to \citet{wilson2016deep} and \citet{fortuin2019deep}, we instantiate $m_\phi$ and $k_\phi$ as neural networks, and aim to meta-learn the parameter vector $\phi$.
Moreover, we use $\lambda = n$, $\beta = m$, the negative log-likelihood as loss function and a Gaussian hyper-prior $\calP = \calN(0, \sigma^2_{\calP} I)$ over the GP prior parameters $\phi$. 

\textbf{Algorithm.}
\looseness -1 In our setup, $\ln Z_m(S_i, P_\phi) = \ln p(\by_i | \bX_i, \phi)$ is the marginal log-likelihood of the GP which is available in closed form. In particular, the \texttt{MLL\_Estimator} is given by (\ref{eq:mll_gp}) in Appx.~\ref{appendix:gp_regression}. Thus, the score $\nabla_{\phi} \ln \calQ^*(\phi)$ is tractable, allowing us to perform SVGD efficiently. Alg.~\ref{algo:pacoh_gp_batched} in Appx.~\ref{appendix:mll_estimate} summarizes the proposed meta-learning method which we refer to as \emph{PACOH-GP}.

\vspacesubcaption
\subsection{Meta-Learning Bayesian Neural Network Priors}
\vspacesubcaption
\looseness -1 \textbf{Setup.} Let $h_\theta: \calX \rightarrow \calY$ be a function parametrized by a neural network (NN) with weights $\theta \in \Theta$. Using the NN mapping, we define a conditional distribution $p(y|x,\theta)$. For regression, we may set $p(y|x,\theta) = \calN(y|h_\theta(x), \sigma^2)$, where $\sigma^2$ is the observation noise variance. We treat $\ln \sigma$ as a learnable parameter similar to the neural network weights $\theta$ so that a hypothesis coincides with a tuple $h = (\theta, \ln \sigma)$.
For classification, we choose $p(y|x,\theta) = \mathrm{Categorical}(\mathrm{softmax}(h_\theta(x)))$. Our loss function is the negative log-likelihood $l(\theta,z) = - \ln p(y|x, \theta)$.

\looseness -1 Next, we define a family of priors $\{P_\phi: \phi \in \Phi\}$ over the NN parameters $\theta$. For computational convenience, we employ diagonal Gaussian priors, that is, $P_{\phi_l} = \calN(\mu_{P_k}, \text{diag}(\sigma_{P_k}^2))$ with $\phi:= (\mu_{P_k}, \ln \sigma_{P_k})$. Note that we represent $\sigma_{P_k}$ in the log-space to avoid additional positivity constraints. In fact, any parametric distribution that supports re-parametrized sampling and has a tractable log-density (e.g., normalizing flows \citep[c.f.,][]{rezende2015variational}) could be used. 
Moreover, we use a zero-centered, spherical Gaussian hyper-prior $\calP := \calN(0,\sigma_{\calP}^2 I)$ over the prior parameters $\phi$.

\textbf{Approximating the marginal log-likelihood.} 
Unlike for GPs, the (generalized) marginal log-likelihood (MLL) 
\begin{equation}
     \ln Z_\beta(S_i, P_\phi) = \ln \E_{\theta \sim P_\phi} \left[  e^{- \beta_i \hat{\calL}(\theta, S_i)} \right]
\end{equation}
is intractable for BNNs. Estimating and optimizing $\ln Z_\beta(S_i, P_\phi)$ is not only challenging due to the high-dimensional expectation over $\Theta$ but also due to numerical instabilities inherent in computing $e^{- \beta_i \hat{\calL}(\theta,S_i)}$ when $\beta_i = m$ is large.
Aiming to overcome these issues, we compute numerically stable Monte Carlo estimates of $\nabla_{\phi} \ln Z_\beta(S_i, P_{\phi_k})$ by combining the LogSumExp (LSE) with the re-parametrization trick \citep{kingma2014auto}. In particular, the \texttt{MLL\_Estimator} draws $L$ samples $\theta_l := f(\phi_k, \eps_l) = \mu_{P_k} + \sigma_{P_k} \odot \eps_l, ~ \eps_l \sim N(0,I) $ and computes the generalized MLL estimate as follows: \vspaceequation
\begin{equation} \label{eq:mll_estimator}
\ln \tilde{Z}_{\beta_i}(S_i, P_\phi) := ~ \text{LSE}_{l=1}^L\left( - \beta_i  \hat{\calL}(\theta_l, S_i) \right) - \ln L  \vspaceequation
\end{equation} 
Note that $\ln \tilde{Z}_\beta(S_i, P_\phi)$ is a consistent but not an unbiased estimator of $\ln Z_\beta(S_i, P_\phi)$ (see Appx. \ref{appendix:mll_estimate} for details).

\looseness -1 \textbf{Algorithm.} Alg. \ref{algo:pacoh_nn} in Appx. \ref{appendix:mll_estimate} summarizes the proposed meta-learning method which we henceforth refer to as \emph{PACOH-NN}. To estimate the score $ \nabla_{\phi_{k'}} \ln \calQ^*(\phi_{k'})$, we can even use mini-batching on the task level. This mini-batched version, outlined in Alg.~\ref{algo:pacoh_nn_batched}, maintains $K$ particles to approximate the hyper-posterior, and in each forward step samples $L$ NN-parameters (of dimensionaly $|\Theta|$) per prior particle, that are deployed on a mini-batch of $n_{bs}$ tasks, to estimate the score of $\calQ^*$. As a result, the total space complexity is on the order of $\mathcal{O}(|\Theta|K + L)$ and the computational complexity of the algorithm for a single iteration is $\mathcal{O}(K^2 + K L n_{bs})$.

\looseness -1 A key advantage of \emph{PACOH-NN} over previous methods for meta-learning BNN priors \citep[e.g.,][]{pentina2014pac, amit2017meta} is that it {\em turns the previously nested optimization problem into a much simpler standard stochastic optimization problem}. This makes meta-learning not only much more stable but also more scalable. In particular, we do not need to explicitly compute the task posteriors $Q_i$ and can employ mini-batching over tasks. Thus, the computational and space complexities {\em do not} depend on the number of tasks $n$. In comparison, \emph{MLAP} \citep{amit2017meta} has a memory footprint of $\mathcal{O}(|\Theta|n)$ making meta-learning prohibitive even for moderately many (e.g., 50) tasks.


\label{sec:experiments}
\begin{figure}[t]
\begin{subfigure}{0.49\textwidth}
    \centering
    \includegraphics[width=1.0\textwidth]{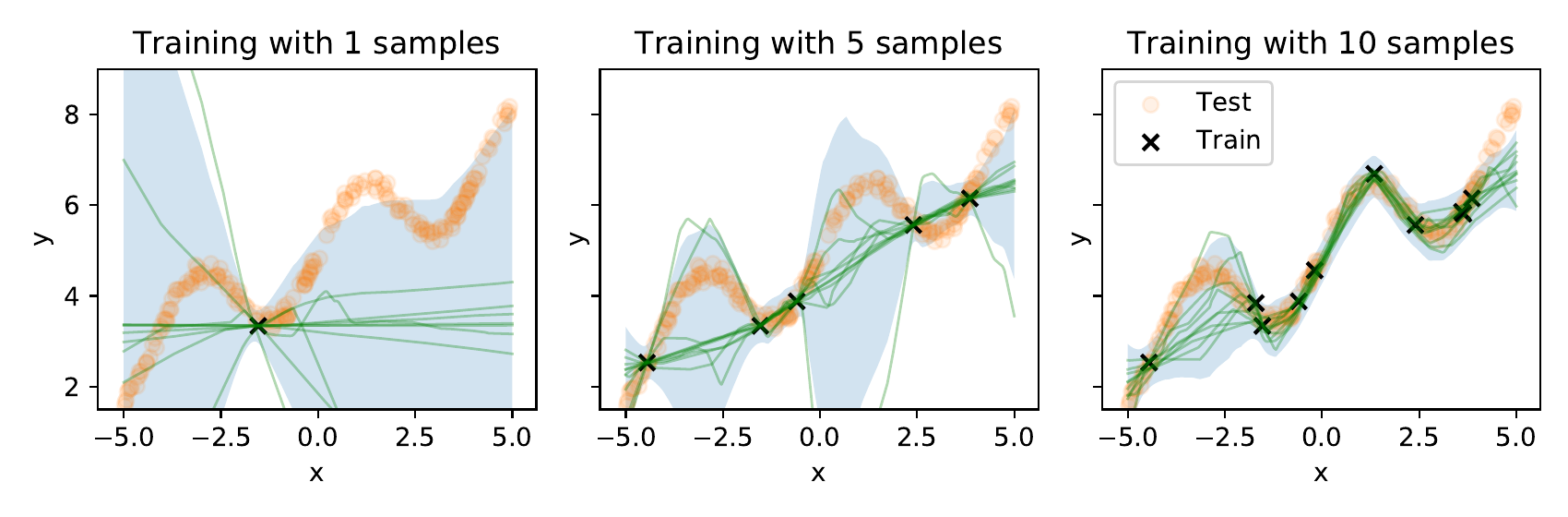}
        \vspace{-12pt}
\end{subfigure}%
\hfill
\begin{subfigure}{0.49\textwidth}
    \centering
        \vspace{-4pt}
    \includegraphics[trim={0 0.0cm 0 0.0cm}, width=1.00\textwidth]{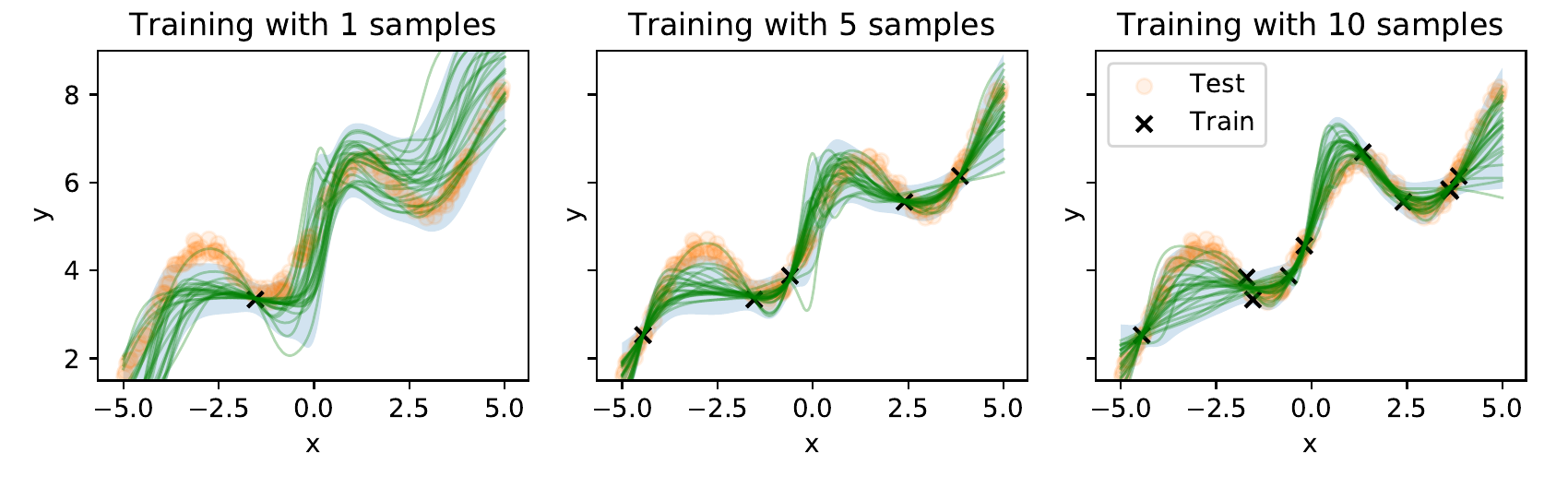} 
    \vspace{-12pt}
\end{subfigure}%
\vspace{-8pt}
\caption{BNN posterior predictions with (top) standard Gaussian prior vs. (bottom) meta-learned  prior. Meta-learning with \emph{PACOH-NN} was conducted on the \emph{Sinusoids} environment.}
\label{fig:sine_ilustrations}
\vspace{-8pt}
\end{figure}
\vspacecaption
\section{Experiments}
\vspacecaptionlow

We now empirically evaluate the two methods \emph{PACOH-GP}\footnote{The source code for \emph{PACOH-GP} is available at  \href{https://tinyurl.com/pacoh-gp-code}{\texttt{tinyurl.com/pacoh-gp-code}}.} and \emph{PACOH-NN}\footnote{An implementation of \emph{PACOH-NN} can be found at
\href{https://tinyurl.com/pacoh-nn-code}{\texttt{tinyurl.com/pacoh-nn-code}}.} that were introduced in Section~\ref{sec:method2}.
\looseness -1 Comparing them to existing meta-learning approaches on various regression and classification environments, we demonstrate that our \emph{PACOH}-based methods (i) outperform previous meta-learning algorithms in {\em predictive accuracy}, (ii) improve the calibration of {\em uncertainty estimates}, (iii) are much more {\em scalable} than previous PAC-Bayesian meta-learners, and (iv) effectively combat {\em meta-overfitting}. Finally, we showcase how meta-learned \emph{PACOH-NN} priors can be harnessed in a real-world {\em sequential decision making} task concerning peptide-based vaccine development.

\begin{table*}[t]
\centering
\resizebox{0.9 \textwidth}{!}{\begin{tabular}{l||c|c|c|c|c}
& \multicolumn{1}{c|}{Cauchy} & \multicolumn{1}{c|}{SwissFel} & \multicolumn{1}{c|}{Physionet-GCS} & \multicolumn{1}{c|}{Physionet-HCT} & \multicolumn{1}{c}{Berkeley-Sensor} \\
\hline
Vanilla GP & $0.275 \pm 0.000$ & $0.876 \pm 0.000$ & $2.240 \pm 0.000$ & $2.768 \pm 0.000 $ & $0.276 \pm 0.000 $ \\
Vanilla BNN \citep{Liu2016} & $0.327 \pm 0.008$ & $0.529 \pm 0.022$ & $2.664 \pm 0.274$ & $3.938 \pm 0.869$ & $0.109 \pm 0.004$ \\
\hline
MLL-GP \citep{fortuin2019deep} & $0.216 \pm 0.003$ & $0.974 \pm 0.093$ & $1.654 \pm 0.094$ & $2.634 \pm 0.144$ & $0.058 \pm 0.002$  \\
MLAP \citep{amit2017meta}& $0.219 \pm 0.004$ & $0.486 \pm 0.026$ & $2.009 \pm 0.248$ & $2.470 \pm 0.039$ & $0.050 \pm 0.005$ \\
MAML \citep{finn2017model} & $0.219 \pm 0.004$ & $0.730 \pm 0.057$ & $1.895 \pm 0.141$ & $2.413 \pm 0.113$ & $\mathbf{0.045 \pm 0.003}$ \\
BMAML \citep{kim2018bayesian}  & $0.225 \pm 0.004$ & $0.577 \pm 0.044$ & $1.894 \pm 0.062$ & $2.500 \pm 0.002$ & $0.073 \pm 0.014$ \\
NP \citep{garnelo2018neural}  & $0.224 \pm 0.008$ & $0.471 \pm 0.053$ & $2.056 \pm 0.209$ & $2.594 \pm 0.107$ & $0.079 \pm 0.007$ \\  \hline
PACOH-GP (ours) & $0.209  \pm 0.008$ & $0.376 \pm 0.024$ & $\mathbf{1.498 \pm 0.081}$ & $\mathbf{2.361 \pm 0.047}$ & $0.065 \pm 0.005$ \\ 
PACOH-NN (ours) & $\mathbf{0.195 \pm 0.001}$ & $\mathbf{0.372 \pm 0.002}$ & $\mathbf{1.561 \pm 0.061}$ & $\mathbf{2.405 \pm 0.017}$ & $\mathbf{0.043 \pm 0.001}$ \\ \hline

\end{tabular}}
\vspace{-4pt}
\caption{Comparison of standard and meta-learning algorithms in terms of test RMSE in 5 meta-learning environments for regression. Reported are mean and standard deviation across 5 seeds. Our proposed method \emph{PACOH} achieves the best performance across all tasks.  \vspace{-4pt}
} \label{tab:reg_rmse}
\end{table*}

\begin{table*}[t]
\centering
\resizebox{0.9\textwidth}{!}{
\begin{tabular}{l||c|c|c|c|c}
& \multicolumn{1}{c|}{Cauchy} & \multicolumn{1}{c|}{SwissFel} & \multicolumn{1}{c|}{Physionet-GCS} & \multicolumn{1}{c|}{Physionet-HCT} & \multicolumn{1}{c}{Berkeley-Sensor} \\
\hline
Vanilla GP & $0.087 \pm 0.000$ & $0.135 \pm 0.000$ & $0.268 \pm 0.000$ & $\mathbf{0.277 \pm 0.000}$ & $0.119 \pm 0.000 $ \\
Vanilla BNN  \citep{Liu2016} & $0.055 \pm 0.006$ & $0.085 \pm 0.008$ & $0.277 \pm 0.013$ & $0.307 \pm 0.009$ & $0.179 \pm 0.002$ \\
\hline
MLL-GP \citep{fortuin2019deep} & $0.059 \pm 0.003$ & $0.096 \pm 0.009$ & $ 0.277 \pm 0.009$ & $0.305 \pm 0.014$ & $0.153 \pm 0.007$  \\
MLAP \citep{amit2017meta} & $0.086 \pm 0.015$ & $0.090 \pm 0.021$ & $0.343 \pm 0.017$ & $0.344 \pm 0.016$ & $0.108 \pm 0.024$ \\
BMAML \citep{kim2018bayesian}  & $0.061 \pm 0.007$ & $0.115 \pm 0.036 $ & $0.279 \pm  0.010$ & $0.423 \pm 0.106$ & $0.161 \pm 0.013$ \\ 
NP \citep{garnelo2018neural} & $0.057 \pm 0.009$ & $0.131 \pm 0.056$ & $0.299 \pm 0.012$ & $0.319 \pm 0.004$ & $0.210 \pm 0.007$ \\ \hline
PACOH-GP (ours) & $0.056 \pm 0.004$ & $0.038 \pm 0.006$ & $\mathbf{0.262 \pm  0.004}$ & $0.296 \pm 0.003$ & $0.098 \pm 0.005$ \\ 
PACOH-NN (ours) & $\mathbf{0.046 \pm 0.001}$ & $\mathbf{0.027 \pm 0.003}$ & $0.267 \pm 0.005$ & $0.302 \pm 0.003$ & $\mathbf{0.067 \pm 0.005}$ \\
\hline
\end{tabular}
}
\vspace{-4pt}
\caption{\looseness -1 Comparison of standard and meta-learning methods in terms of the test calibration error in 5 regression environments. We report the mean and standard deviation across 5 random seeds. PACOH yields the best uncertainty calibration in the majority of environments. \vspace{-8pt}} \label{table:cal_err_reg}
\end{table*}

\vspacesubcaption
\subsection{Experiment Setup} \label{sec:exp_setup}
\vspacesubcaption
\textbf{Regression environments.} 
In our experiments, we consider two synthetic and four real-world meta-learning environments for \emph{regression}. As a synthetic environment we employ \emph{Sinusoids} of varying amplitude, phase and slope as well as a 2-dimensional mixture of \emph{Cauchy} distributions plus random GP functions. 
As real-world environments, we use datasets corresponding to different calibration sessions of the Swiss Free Electron Laser (\emph{SwissFEL}) \citep{milne2017swissfel, kirschner2019swissfel}, as well as data from the \emph{PhysioNet} 2012 challenge, which consists of time series of electronic health measurements from intensive care patients \citep{silva2012predicting}, in particular the Glasgow Coma Scale (\emph{GCS}) and the hematocrit value (\emph{HCT}). Here, the different tasks correspond to different patients. Moreover, we employ the Intel Berkeley Research Lab temperature sensor dataset (\emph{Berkeley-Sensor}) \citep{intel_sensor_data} where the tasks require auto-regressive prediction of temperature measurements corresponding to sensors installed in different locations of the building. Further details can be found in Appendix~\ref{appendix:meta-envs}.

\textbf{Classification environments.}
We conduct experiments with the multi-task \emph{classification} environment \emph{Omniglot} \citep{lake2015human}, consisting of handwritten letters across 50 alphabets. Unlike previous work \citep[e.g.,][]{finn2017model} we do not perform data augmentation and do not recombine letters of different alphabets, thus preserving the data's original structure and mitigating the need for prior knowledge. In particular, one task corresponds to 5-way 5-shot classification of letters within an alphabet. This leaves us with much fewer tasks (30 meta-train, 20 meta-test tasks), making the environment more challenging and interesting for uncertainty quantification. This also allows us to include \emph{MLAP} in the experiment which hardly scales to more than 50 tasks. 

 \textbf{Baselines.} 
We use a \emph{Vanilla GP} with squared exponential kernel and a \emph{Vanilla BNN} with a zero-centered, spherical Gaussian prior and SVGD posterior inference \citep{Liu2016} as baselines.
Moreover, we compare our proposed approach against various popular meta-learning algorithms, including model-agnostic meta-learning (\emph{MAML})~\citep{finn2017model}, Bayesian MAML (\emph{BMAML})~\citep{kim2018bayesian} and the PAC-Bayesian approach by \citet{amit2017meta} (\emph{MLAP}). For regression experiments, we also report results for neural processes (\emph{NPs}) \citep{garnelo2018neural} and a GP with neural-network-based mean and kernel function, meta-learned by maximizing the marginal log-likelihood (\emph{MLL-GP})~\citep{fortuin2019deep}.
Among all, \emph{MLAP} is the most similar to our approach as it is neural-network-based and minimizes PAC-Bayesian bounds on the transfer error. Though, unlike \emph{PACOH-NN}, it relies on nested optimization of the task posteriors $Q_i$ and the hyper-posterior $\calQ$. \emph{MLL-GP}  is similar to \emph{PACOH-GP} insofar that it also maximizes the sum of marginal log-likelihoods $\ln Z_m(S_i, P_\phi)$ across tasks. However, unlike \emph{PACOH-GP} it lacks any form of meta-level regularization.

\vspacesubcaption
\subsection{Experiment Results} \label{sec:exp_reg_class}
\vspacesubcaption

\textbf{Qualitative example.} Figure \ref{fig:sine_ilustrations} illustrates \emph{BNN} predictions on a sinusoidal regression task with a standard Gaussian prior as well as a \emph{PACOH-NN} prior meta-learned with 20 tasks from the \textit{Sinusoids} environment. We can see that the standard Gaussian prior provides poor inductive bias, not only leading to bad mean predictions away from the test points but also to poor 95\% confidence intervals (blue shaded areas). In contrast, the meta-learned \emph{PACOH-NN} prior encodes useful inductive bias towards sinusoidal function shapes, leading to better predictions and uncertainty estimates, even with minimal training data.

\textbf{PACOH improves the predictive accuracy.}
\looseness -1 Using the meta-learning environments and baseline methods that we introduced in Sec. \ref{sec:exp_setup}, we perform a comprehensive benchmark study. Table~\ref{tab:reg_rmse} reports the results on the regression environments in terms of the root mean squared error (RMSE) on unseen test tasks. Among the approaches, \emph{PACOH-NN} and \emph{PACOH-GP} {\em consistently perform best or are among the best methods}. Similarly, \emph{PACOH-NN} achieves the {\em highest accuracy} in the Omniglot classification environment (cf. Table \ref{tab:classification}). Overall, this demonstrates that the introduced meta-learning framework is not only sound, but also yields state-of-the-art empirical performance in practice.

%
%

\begin{table}[t]
\centering
\resizebox{\linewidth}{!}{
\begin{tabular}{l||c|c}
& Accuracy & Calibration error \\ \hline
Vanilla BNN  \citep{Liu2016}& $0.795 \pm 0.006$   & $0.135 \pm 0.009$  \\ \hline

MLAP \citep{amit2017meta} &  $0.700 \pm 0.0135 $  & $0.108 \pm 0.010$ \\ 

MAML \citep{finn2017model}  &  $0.693 \pm 0.013$ & $0.109 \pm 0.011$   \\
BMAML \citep{kim2018bayesian}  & $0.764 \pm 0.025$  & $0.191 \pm 0.018$  \\ \hline
PACOH-NN (ours) & $\mathbf{0.885\pm 0.090}$ & $\mathbf{0.091 \pm 0.010}$ \\ \hline
\end{tabular}}
\vspace{-8pt}
\caption{Comparison of meta-learning algorithms in terms of test accuracy and calibration error on the \emph{Omniglot} environment. Among the methods, PACOH-NN makes the most accurate and best-calibrated class predictions.\vspace{-8pt}
}
\label{tab:classification}
\end{table}

\textbf{PACOH improves the predictive uncertainty.}
\looseness -1 We hypothesize that by acquiring the prior in a principled data-driven manner (e.g., with \emph{PACOH}), we can improve the quality of the GP's and BNN's uncertainty estimates. To investigate the effect of meta-learned priors on the uncertainty estimates of the base learners, we compute the probabilistic predictors' calibration errors, reported in Table~\ref{table:cal_err_reg} and~\ref{tab:classification}. The {\em calibration error} measures the discrepancy between the predicted confidence regions and the actual frequencies of test data in the respective areas~\citep{Kuleshov2018}. Note that, since \emph{MAML} only produces point predictions, the concept of calibration does not apply to it. We observe that meta-learning priors with \emph{PACOH-NN} consistently improves the Vanilla BNN's uncertainty estimates. Similarly, \emph{PACOH-GP} yields a lower calibration error than the Vanilla GP in the majority of the envionments. For meta-learning environments where the task similarity is high, like \emph{SwissFEL} and \emph{Berkeley-Sensor}, the improvement is substantial. 

\looseness -1 \textbf{PACOH is scalable.} Unlike \emph{MLAP}~\citep{amit2017meta}, \emph{PACOH-NN} does not need to maintain posteriors $Q_i$ for the meta-training tasks and can use mini-batching on the task level. As a result, it is {\em computationally much faster and more scalable} than previous PAC-Bayesian meta-learners. This is reflected in its computation and memory complexity, discussed in Section~\ref{sec:method2}. Figure~\ref{fig:complexity_analysis} showcases this computational advantage during meta-training with \emph{PACOH-NN} and \emph{MLAP} on the \emph{Sinusoids} environment with varying number of tasks, reporting the maximum memory requirements, as well as the training time. While \emph{MLAP's} memory consumption and compute time grow linearly and become prohibitively large even for less than 100 tasks, \emph{PACOH-NN} maintains a constant memory and compute load as the number of tasks grow.

\begin{figure}
\centering
\vspace{-4pt}
\includegraphics[width=\linewidth]{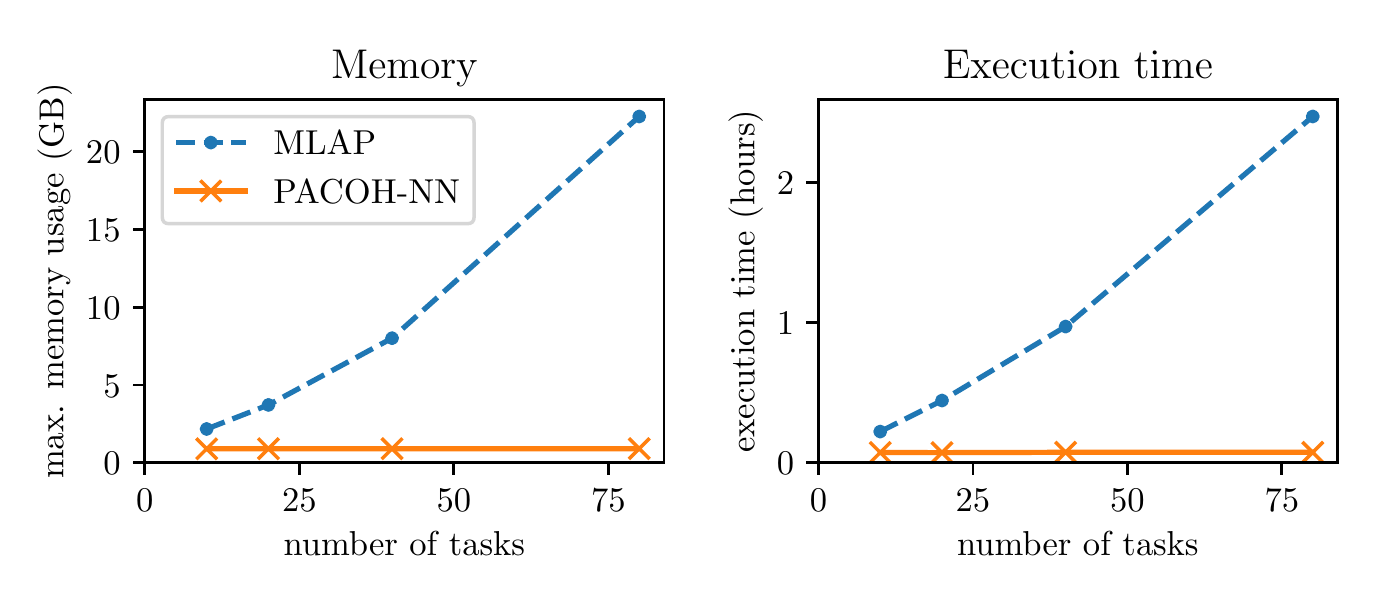}
\vspace{-24pt}
\caption{Comparison of \emph{PACOH-NN} and \emph{MLAP} in memory footprint and compute time, as the number of meta-training task grows. \emph{PACOH-NN} scales much better in the number of tasks than \emph{MLAP}. \vspace{-18pt}}
\label{fig:complexity_analysis}
\end{figure}

\looseness -1 \textbf{PACOH combats meta-overfitting.} 
As \citet{qin2018rethink} and \citet{yin2020meta} point out, many  popular meta-learners \citep[e.g.,][]{finn2017model, garnelo2018neural} require a large number of meta-training tasks to generalize well. When presented with only a limited number of tasks, such algorithms suffer from severe meta-overfitting, adversely impacting their performance on unseen tasks from $\calT$. This can even lead to {\em negative transfer}, such that meta-learning actually hurts the performance when compared to standard learning. In our experiments, we also observe such failure cases: For instance, in the classification environment (cf. Table \ref{tab:classification}), \emph{MAML} fails to improve upon the Vanilla BNN. Similarly, in the regression environments (cf. Table \ref{tab:classification}) we find that \emph{NPs}, \emph{BMAML} and \emph{MLL-GP} often yield worse-calibrated predictive distributions than the Vanilla BNN and GP respectively.
In contrast, thanks to its theoretically principled construction, \emph{PACOH-NN} is able to achieve positive transfer even when the tasks are diverse and small in number. In particular, the hyper-prior acts as meta-level regularizer by penalizing complex priors that are unlikely to convey useful inductive bias for unseen learning tasks.

\looseness -1 To investigate the importance of meta-level regularization through the hyper-prior in more detail, we compare the performance of our proposed method \emph{PACOH-GP} to \emph{MLL-GP} \citep{fortuin2019deep} which also maximizes the sum of GP marginal log-likelihoods across tasks but has no hyper-prior nor meta-level regularization. 
Fig.~\ref{fig:meta_overfitting} shows that MLL-GP performs significantly better on the meta-training tasks than on the meta-test tasks in both of our synthetic regression environments. This gap between meta-train performance and meta-test performance signifies overfitting on the meta-level.
In contrast, our method hardly exhibits this gap and consistently outperforms MLL-GP. As expected, this effect is particularly pronounced when the number of meta-training tasks is small (i.e., less than 100) and vanishes as $n$ becomes large.
Similar results for other meta-learning methods can be found in Appendix~\ref{appendix:exp-results}.
Once more, this demonstrates the importance of meta-level regularization, and shows that our proposed framework effectively addresses the problem of meta-overfitting.

\begin{figure}
\centering
\includegraphics[width=1.0\linewidth]{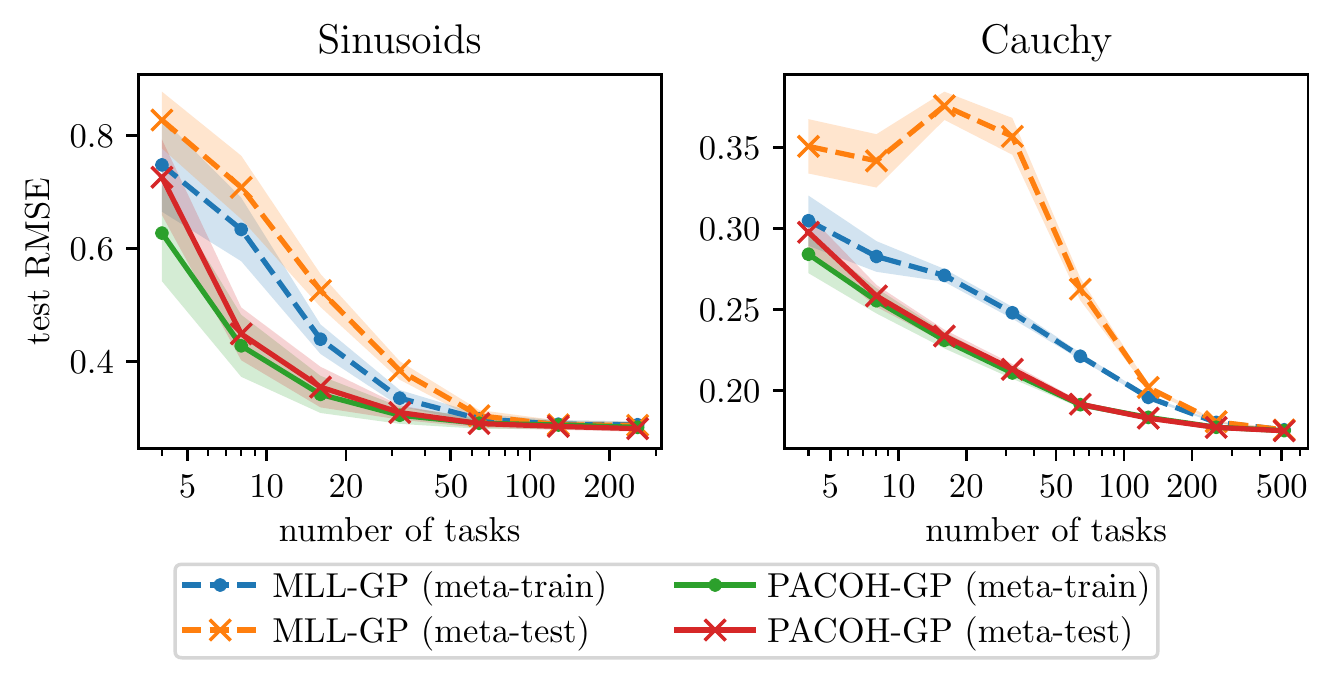}\vspace{-6pt}
\caption{ \looseness -1 Test RMSE on meta-training and meta-testing tasks as a function of the number of meta-training tasks for PACOH-GP  and MLL-GP. The performance gap between the train and test tasks demonstrates overfitting in the MLL method, while PACOH performs consistently better and barely overfits. \vspace{-8pt}} 
\label{fig:meta_overfitting}
\end{figure}

\vspacesubcaption
\subsection{Meta-Learning for Sequential Decision Making} \label{sec:bandit}
\vspacesubcaption
 

\looseness -1 Finally, we showcase how a relevant real-world application such as {\em vaccine design} can benefit from our proposed method. The goal is to discover peptides which bind to major histocompatibility complex class-I molecules (MHC-I).
Following the Bayesian optimization (BO) setup of \citet{krause2011contextual}, each task corresponds to searching for maximally binding peptides, a vital step in the design of peptide-based vaccines. The tasks differ in their targeted MHC-I allele, corresponding to different genetic variants of the MHC-I protein. We use data from \citet{widmer2010inferring}, which contains the standardized binding affinities ($\text{IC}_{50}$values) of different peptide candidates (encoded as 45-dimensional feature vectors) to the MHC-I alleles. 

\begin{figure}
\centering
\vspace{-2pt}
\includegraphics[width=\linewidth]{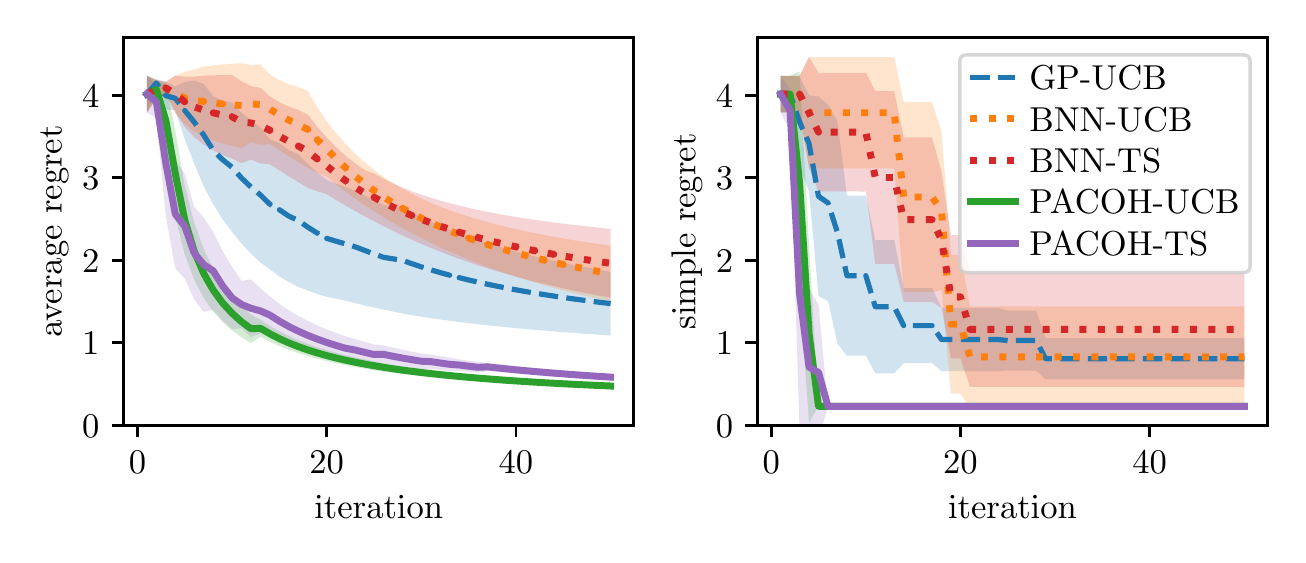}
\vspace{-24pt}
\caption{MHC-I peptide design task: Regret for different priors and bandit algorithms. A meta-learned \emph{PACOH-NN} prior substantially improves the regret, compared to a standard BNN/GP prior. \vspace{-24pt}}
 \label{fig:bandit_mhc}
\end{figure}

\looseness -1 We use 5 alleles (tasks) to meta-learn a BNN prior with \emph{PACOH-NN} and leave the most genetically dissimilar allele (A-6901) for our bandit task. In each iteration, the experimenter (i.e., the Bayesian optimization algorithm) chooses to test one peptide among the pool of more than 800 candidates and receives its binding affinity as a reward feedback. In particular, we employ UCB \citep{auer2002using} and Thompson-Sampling (TS) \citep{thompson1933likelihood} as bandit algorithms, comparing the BNN-based bandits with meta-learned prior (\emph{PACOH-UCB/TS}) against a zero-centered Gaussian BNN prior (\emph{BNN-UCB/TS}) and a Gaussian process (\emph{GP-UCB}) \citep{srinivas2009gaussian}. 

\looseness -1 Fig.~\ref{fig:bandit_mhc} reports the respective average regret and simple regret over 50 iterations.
Unlike the bandit algorithms with standard BNN/GP prior, \emph{PACOH-UCB/TS} reaches near optimal regret within less than 10 iterations and after 50 iterations still maintains a significant performance advantage. This highlights the importance of \emph{transfer (learning)} for solving real-world problems and demonstrates the effectiveness of \emph{PACOH-NN} to this end. While the majority of meta-learning methods rely on a large number of meta-training tasks \citep{qin2018rethink}, \emph{PACOH-NN} allows us to achieve promising positive transfer, even in complex real-world scenarios with only a handful (in this case 5) of tasks.
\vspacecaption
\section{Conclusion}
\vspacecaptionlow
\label{sec:conclusion}

\looseness -1 
We presented PACOH, a novel, theoretically principled, and scalable PAC-Bayesian meta-learning approach. PACOH outperforms existing methods in terms of predictive performance and uncertainty calibration, while providing PAC-Bayesian guarantees without relying on costly nested optimization.
It can be used with different base learners (e.g., GPs or BNNs) and achieves positive transfer in regression and classification with as little as five meta-tasks. BNNs meta-learnt with PACOH can be effectively used for sequential decision making, as demonstrated by our vaccine design application.
We believe our approach provides an important step towards learning useful inductive bias from data in a flexible, scalable, and  principled manner.

\section*{Acknowledgements}
\looseness -1 This project has received funding from the European Research Council (ERC) under the European Union’s Horizon 2020 research and innovation program grant agreement No 815943 and was supported by Oracle Cloud Services.
Vincent Fortuin is funded by a PhD fellowship from the Swiss Data Science Center and by the grant 2017-110 of the Strategic Focus Area ``Personalized Health
and Related Technologies (PHRT)'' of the ETH Domain.
We thank David Lindner, Gideon Dresdner, and Claire Vernade for their valuable feedback.

\bibliography{references}

\begin{thebibliography}{50}
\providecommand{\natexlab}[1]{#1}
\providecommand{\url}[1]{\texttt{#1}}
\expandafter\ifx\csname urlstyle\endcsname\relax
  \providecommand{\doi}[1]{doi: #1}\else
  \providecommand{\doi}{doi: \begingroup \urlstyle{rm}\Url}\fi

\bibitem[Alquier et~al.(2016)Alquier, Ridgway, Chopin, and Teh]{Alquier2016a}
Alquier, P., Ridgway, J., Chopin, N., and Teh, Y.~W.
\newblock {On the properties of variational approximations of Gibbs
  posteriors}.
\newblock \emph{Journal of Machine Learning Research}, 2016.

\bibitem[Amit \& Meir(2018)Amit and Meir]{amit2017meta}
Amit, R. and Meir, R.
\newblock {Meta-learning by adjusting priors based on extended PAC-Bayes
  theory}.
\newblock In \emph{International Conference on Machine Learning}, 2018.

\bibitem[Andrychowicz et~al.(2016)Andrychowicz, Denil, Colmenarejo, Hoffman,
  Pfau, Schaul, Shillingford, and De~Freitas]{Andrychowicz2016}
Andrychowicz, M., Denil, M., Colmenarejo, S.~G., Hoffman, M.~W., Pfau, D.,
  Schaul, T., Shillingford, B., and De~Freitas, N.
\newblock {Learning to learn by gradient descent by gradient descent}.
\newblock In \emph{Advances in Neural Information Processing Systems}, 2016.

\bibitem[Auer(2002)]{auer2002using}
Auer, P.
\newblock Using confidence bounds for exploitation-exploration trade-offs.
\newblock \emph{Journal of Machine Learning Research}, 3\penalty0
  (Nov):\penalty0 397--422, 2002.

\bibitem[Baxter(2000)]{baxter2000model}
Baxter, J.
\newblock A model of inductive bias learning.
\newblock \emph{Journal of Artificial Intelligence Research}, 2000.

\bibitem[Bergstra et~al.(2013)Bergstra, Yamins, and Cox]{bergstra13}
Bergstra, J., Yamins, D., and Cox, D.
\newblock Making a science of model search: Hyperparameter optimization in
  hundreds of dimensions for vision architectures.
\newblock In \emph{International Conference on Machine Learning}, 2013.

\bibitem[Blei et~al.(2016)Blei, Kucukelbir, and McAuliffe]{Blei2016}
Blei, D.~M., Kucukelbir, A., and McAuliffe, J.~D.
\newblock {Variational Inference: {A} Review for Statisticians}.
\newblock \emph{arXiv preprint arXiv:1601.00670}, 2016.

\bibitem[Boucheron et~al.(2013)Boucheron, Lugosi, and Massart]{Boucheron2013}
Boucheron, S., Lugosi, G., and Massart, P.
\newblock {Concentration inequalities : a nonasymptotic theory of
  independence}.
\newblock chapter 2.4, pp.\  27--30. Oxford University Press, 2013.

\bibitem[Catoni(2007)]{catoni2007pac}
Catoni, O.
\newblock {PAC-Bayesian supervised classification: the thermodynamics of
  statistical learning}.
\newblock \emph{arXiv}, 2007.

\bibitem[Chen et~al.(2017)Chen, Hoffman, Colmenarejo, Denil, Lillicrap,
  Botvinick, and De~Freitas]{Chen2017a}
Chen, Y., Hoffman, M.~W., Colmenarejo, S.~G., Denil, M., Lillicrap, T.~P.,
  Botvinick, M., and De~Freitas, N.
\newblock {Learning to Learn without Gradient Descent by Gradient Descent}.
\newblock In \emph{International Conference on Machine Learning}, 2017.

\bibitem[Dziugaite \& Roy(2016)Dziugaite and Roy]{dziugaite2017computing}
Dziugaite, G.~K. and Roy, D.~M.
\newblock Computing nonvacuous generalization bounds for deep (stochastic)
  neural networks with many more parameters than training data.
\newblock In \emph{Uncertainty in Artificial Intelligence}, 2016.

\bibitem[Finn et~al.(2017)Finn, Abbeel, and Levine]{finn2017model}
Finn, C., Abbeel, P., and Levine, S.
\newblock Model-agnostic meta-learning for fast adaptation of deep networks.
\newblock In \emph{International Conference on Machine Learning}, 2017.

\bibitem[Finn et~al.(2018)Finn, Xu, and Levine]{finn2018probabilistic}
Finn, C., Xu, K., and Levine, S.
\newblock Probabilistic model-agnostic meta-learning.
\newblock In \emph{Advances in Neural Information Processing Systems}, 2018.

\bibitem[Fortuin \& R{\"a}tsch(2019)Fortuin and R{\"a}tsch]{fortuin2019deep}
Fortuin, V. and R{\"a}tsch, G.
\newblock {Deep Mean Functions for Meta-Learning in Gaussian Processes}.
\newblock \emph{arXiv preprint: arXiv:1901.08098}, 2019.

\bibitem[Fortuin et~al.(2019)Fortuin, R{\"a}tsch, and
  Mandt]{fortuin2019multivariate}
Fortuin, V., R{\"a}tsch, G., and Mandt, S.
\newblock Multivariate time series imputation with variational autoencoders.
\newblock \emph{arXiv preprint arXiv:1907.04155}, 2019.

\bibitem[Garnelo et~al.(2018)Garnelo, Schwarz, Rosenbaum, Viola, Rezende,
  Eslami, and Teh]{garnelo2018neural}
Garnelo, M., Schwarz, J., Rosenbaum, D., Viola, F., Rezende, D.~J., Eslami, S.,
  and Teh, Y.~W.
\newblock Neural processes.
\newblock \emph{arXiv preprint arXiv:1807.01622}, 2018.

\bibitem[Germain et~al.(2016)Germain, Bach, Lacoste, and
  Lacoste-Julien]{germain2016pac}
Germain, P., Bach, F., Lacoste, A., and Lacoste-Julien, S.
\newblock Pac-bayesian theory meets bayesian inference.
\newblock In \emph{Advances in Neural Information Processing Systems}, 2016.

\bibitem[Guedj(2019)]{guedj2019primer}
Guedj, B.
\newblock {A primer on PAC-Bayesian learning}.
\newblock In \emph{2nd Congress of the French Mathematical Society}, 2019.

\bibitem[Guo et~al.(2017)Guo, Pleiss, Sun, and Weinberger]{guo2017calibration}
Guo, C., Pleiss, G., Sun, Y., and Weinberger, K.~Q.
\newblock On calibration of modern neural networks.
\newblock In \emph{International Conference on Machine Learning}, pp.\
  1321--1330, 2017.

\bibitem[Hochreiter et~al.(2001)Hochreiter, Younger, and
  Conwell]{Hochreiter2001}
Hochreiter, S., Younger, A.~S., and Conwell, P.~R.
\newblock {Learning To Learn Using Gradient Descent}.
\newblock In \emph{International Conference on Artificial Neural Networks},
  2001.

\bibitem[Kim et~al.(2018)Kim, Yoon, Dia, Kim, Bengio, and Ahn]{kim2018bayesian}
Kim, T., Yoon, J., Dia, O., Kim, S., Bengio, Y., and Ahn, S.
\newblock Bayesian model-agnostic meta-learning.
\newblock In \emph{Advances in Neural Information Processing Systems}, 2018.

\bibitem[Kingma \& Welling(2014)Kingma and Welling]{kingma2014auto}
Kingma, D.~P. and Welling, M.
\newblock Auto-encoding variational bayes.
\newblock In \emph{International Conference on Learning Representations}, 2014.

\bibitem[Kirschner et~al.(2019{\natexlab{a}})Kirschner, Mutn\'y, Hiller,
  Ischebeck, and Krause]{kirschner2019linebo}
Kirschner, J., Mutn\'y, M., Hiller, N., Ischebeck, R., and Krause, A.
\newblock {Adaptive and Safe Bayesian Optimization in High Dimensions via
  One-Dimensional Subspaces}.
\newblock In \emph{International Conference on Machine Learning},
  2019{\natexlab{a}}.

\bibitem[Kirschner et~al.(2019{\natexlab{b}})Kirschner, Nonnenmacher, Mutn\'y,
  Hiller, Adelmann, Ischebeck, and Krause]{kirschner2019swissfel}
Kirschner, J., Nonnenmacher, M., Mutn\'y, M., Hiller, N., Adelmann, A.,
  Ischebeck, R., and Krause, A.
\newblock {Bayesian Optimization for Fast and Safe Parameter Tuning of
  SwissFEL}.
\newblock In \emph{International Free-Electron Laser Conference (FEL2019)},
  2019{\natexlab{b}}.

\bibitem[Krause \& Ong(2011)Krause and Ong]{krause2011contextual}
Krause, A. and Ong, C.~S.
\newblock Contextual gaussian process bandit optimization.
\newblock In \emph{Advances in Neural Information Processing Systems}, 2011.

\bibitem[Kuleshov et~al.(2018)Kuleshov, Fenner, and Ermon]{Kuleshov2018}
Kuleshov, V., Fenner, N., and Ermon, S.
\newblock Accurate uncertainties for deep learning using calibrated regression.
\newblock In \emph{International Conference on Machine Learning}, 2018.

\bibitem[Lake et~al.(2015)Lake, Salakhutdinov, and Tenenbaum]{lake2015human}
Lake, B.~M., Salakhutdinov, R., and Tenenbaum, J.~B.
\newblock Human-level concept learning through probabilistic program induction.
\newblock \emph{Science}, 2015.

\bibitem[Lever et~al.(2013)Lever, Laviolette, and Shawe-Taylor]{Lever2013}
Lever, G., Laviolette, F., and Shawe-Taylor, J.
\newblock {Tighter PAC-Bayes bounds through distribution-dependent priors}.
\newblock \emph{Theoretical Computer Science}, 473:\penalty0 4--28, feb 2013.
\newblock ISSN 0304-3975.
\newblock \doi{10.1016/J.TCS.2012.10.013}.

\bibitem[Liu \& Wang(2016)Liu and Wang]{Liu2016}
Liu, Q. and Wang, D.
\newblock {Stein Variational Gradient Descent: A General Purpose Bayesian
  Inference Algorithm}.
\newblock In \emph{Advances in Neural Information Processing Systems}, 2016.

\bibitem[Madden(2004)]{intel_sensor_data}
Madden, S.
\newblock Intel lab data.
\newblock \url{http://db.csail.mit.edu/labdata/labdata.html}, 2004.
\newblock Accessed: Sep 8, 2020.

\bibitem[McAllester(1999)]{mcallester1999some}
McAllester, D.~A.
\newblock {Some PAC-Bayesian theorems}.
\newblock \emph{Machine Learning}, 1999.

\bibitem[Milne et~al.(2017)Milne, Schietinger, Aiba, Alarcon, Alex, Anghel,
  Arsov, Beard, Beaud, Bettoni, et~al.]{milne2017swissfel}
Milne, C.~J., Schietinger, T., Aiba, M., Alarcon, A., Alex, J., Anghel, A.,
  Arsov, V., Beard, C., Beaud, P., Bettoni, S., et~al.
\newblock Swissfel: the swiss x-ray free electron laser.
\newblock \emph{Applied Sciences}, 2017.

\bibitem[Nichol et~al.(2018)Nichol, Achiam, and Schulman]{nichol2018firstorder}
Nichol, A., Achiam, J., and Schulman, J.
\newblock {On First-Order Meta-Learning Algorithms}.
\newblock \emph{arXiv preprint arXiv:1803.02999}, 2018.

\bibitem[Parrado-Hernandez et~al.(2012)Parrado-Hernandez, Ambroladze,
  Shawe-Taylor, and Sun]{parrado12pac}
Parrado-Hernandez, E., Ambroladze, A., Shawe-Taylor, J., and Sun, S.
\newblock {PAC-Bayes Bounds with Data Dependent Priors}.
\newblock \emph{Journal of Machine Learning Research}, 2012.

\bibitem[Pentina \& Lampert(2014)Pentina and Lampert]{pentina2014pac}
Pentina, A. and Lampert, C.
\newblock {A PAC-Bayesian bound for lifelong learning}.
\newblock In \emph{International Conference on Machine Learning}, 2014.

\bibitem[Qin et~al.(2018)Qin, Zhang, Zhao, Wang, Shi, Qi, Shi, and
  Lei]{qin2018rethink}
Qin, Y., Zhang, W., Zhao, C., Wang, Z., Shi, H., Qi, G., Shi, J., and Lei, Z.
\newblock Rethink and redesign meta learning.
\newblock \emph{arXiv preprint arXiv:1812.04955}, 2018.

\bibitem[Rasmussen \& Ghahramani(2001)Rasmussen and Ghahramani]{Rasmussen2001}
Rasmussen, C.~E. and Ghahramani, Z.
\newblock {Occam's Razor}.
\newblock In \emph{NIPS}, volume~13, pp.\  294----300, 2001.

\bibitem[Rasmussen \& Williams(2006)Rasmussen and
  Williams]{rasmussen2003gaussian}
Rasmussen, C.~E. and Williams, C. K.~I.
\newblock \emph{Gaussian processes in machine learning}.
\newblock 2006.

\bibitem[Rezende \& Mohamed(2015)Rezende and Mohamed]{rezende2015variational}
Rezende, D.~J. and Mohamed, S.
\newblock Variational inference with normalizing flows.
\newblock \emph{International Conference on Machine Learning}, 2015.

\bibitem[Rothfuss et~al.(2019{\natexlab{a}})Rothfuss, Ferreira, Boehm, Walther,
  Ulrich, Asfour, and Krause]{rothfuss2019noise}
Rothfuss, J., Ferreira, F., Boehm, S., Walther, S., Ulrich, M., Asfour, T., and
  Krause, A.
\newblock {Noise Regularization for Conditional Density Estimation}.
\newblock \emph{arXiv preprint arXiv:1907.08982}, 2019{\natexlab{a}}.

\bibitem[Rothfuss et~al.(2019{\natexlab{b}})Rothfuss, Lee, Clavera, Asfour, and
  Abbeel]{rothfuss2019promp}
Rothfuss, J., Lee, D., Clavera, I., Asfour, T., and Abbeel, P.
\newblock {ProMP: Proximal Meta-Policy Search}.
\newblock In \emph{International Conference on Learning Representations},
  2019{\natexlab{b}}.

\bibitem[Silva et~al.(2012)Silva, Moody, Scott, Celi, and
  Mark]{silva2012predicting}
Silva, I., Moody, G., Scott, D.~J., Celi, L.~A., and Mark, R.~G.
\newblock Predicting in-hospital mortality of icu patients: The
  physionet/computing in cardiology challenge 2012.
\newblock In \emph{Computing in Cardiology}, 2012.

\bibitem[Snell et~al.(2017)Snell, Swersky, and Zemel]{snell2017prototypical}
Snell, J., Swersky, K., and Zemel, R.
\newblock Prototypical networks for few-shot learning.
\newblock In \emph{Advances in Neural Information Processing Systems}, 2017.

\bibitem[Srinivas et~al.(2009)Srinivas, Krause, Kakade, and
  Seeger]{srinivas2009gaussian}
Srinivas, N., Krause, A., Kakade, S.~M., and Seeger, M.
\newblock Gaussian process optimization in the bandit setting: No regret and
  experimental design.
\newblock In \emph{International Conference on Machine Learning}, 2009.

\bibitem[Thompson(1933)]{thompson1933likelihood}
Thompson, W.~R.
\newblock On the likelihood that one unknown probability exceeds another in
  view of the evidence of two samples.
\newblock \emph{Biometrika}, 1933.

\bibitem[Thrun \& Pratt(1998)Thrun and Pratt]{thrun1998}
Thrun, S. and Pratt, L. (eds.).
\newblock \emph{Learning to Learn}.
\newblock Kluwer Academic Publishers, 1998.

\bibitem[Vinyals et~al.(2016)Vinyals, Blundell, Lillicrap, Wierstra,
  et~al.]{vinyals2016matching}
Vinyals, O., Blundell, C., Lillicrap, T., Wierstra, D., et~al.
\newblock Matching networks for one shot learning.
\newblock In \emph{Advances in Neural Information Processing Systems}, 2016.

\bibitem[Widmer et~al.(2010)Widmer, Toussaint, Altun, and
  R{\"a}tsch]{widmer2010inferring}
Widmer, C., Toussaint, N.~C., Altun, Y., and R{\"a}tsch, G.
\newblock Inferring latent task structure for multitask learning by multiple
  kernel learning.
\newblock \emph{BMC bioinformatics}, 2010.

\bibitem[Wilson et~al.(2016)Wilson, Hu, Salakhutdinov, and
  Xing]{wilson2016deep}
Wilson, A.~G., Hu, Z., Salakhutdinov, R., and Xing, E.~P.
\newblock Deep kernel learning.
\newblock In \emph{Artificial Intelligence and Statistics}, pp.\  370--378,
  2016.

\bibitem[Yin et~al.(2020)Yin, Tucker, Zhou, Levine, and Finn]{yin2020meta}
Yin, M., Tucker, G., Zhou, M., Levine, S., and Finn, C.
\newblock Meta-learning without memorization.
\newblock In \emph{International Conference on Learning Representations}, 2020.

\end{thebibliography}
\bibliographystyle{icml2021}

\onecolumn
\newpage

\beginsupplement

\appendix

\section*{Appendix}

\section{Proofs and Derivations} \label{appendix:proofs}

\subsection{Proof of Theorem~\ref{theorem:meta-learning-bound-bounded-lossfn}}
\label{appendix:proof_theorem_meta_pac_bound}

\begin{lemma} \label{lemma:concentration_sum_of_rvs}  \textbf{(Change of measure inequality)}
Let $f$ be a random variable taking values in a set $A$ and let $X_1, ..., X_l$ be independent random variables, with $X_k \in A$ with distribution $\mu_k$. For functions $g_k: A \times A \rightarrow \R, k=1,...,l$, let $\xi_k(f)=\E_{X_k \sim \mu_k} \left[g_k(f, X_k)\right]$ denote the expectation of $g_k$ under $\mu_k$ for any fixed $f \in A$. Then for any fixed distributions $\pi, \rho$ $\in \mathcal{M}(A)$ and any $\lambda > 0$, we have that
\begin{align}
\E_{f\sim\rho} \left[ \sum_{k=1}^l \xi_k(f)\ - g_k(f,X_k) \right] \leq \frac{1}{\lambda} \left( D_{KL}(\rho||\pi) + \ln \E_{f\sim\pi} \left[ e^{ \lambda \big( \sum_{k=1}^l \xi_k(f) - g_k(f,X_k) \big)}  \right] \right) .
\end{align}
\end{lemma}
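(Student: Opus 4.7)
The plan is to derive this as a direct consequence of the Donsker--Varadhan variational representation of the KL divergence, applied to a single well-chosen test function. Concretely, I would first prove the following auxiliary fact: for any measurable $\phi : A \to \mathbb{R}$ and any $\rho, \pi \in \mathcal{M}(A)$ with $\rho \ll \pi$,
\begin{equation*}
\mathbb{E}_{f \sim \rho}[\phi(f)] \;\le\; D_{KL}(\rho \Vert \pi) + \ln \mathbb{E}_{f \sim \pi}\!\left[e^{\phi(f)}\right].
\end{equation*}
The cleanest way to establish this is to introduce the tilted (Gibbs) measure $\tilde\pi(df) := e^{\phi(f)}\,\pi(df)/Z_\phi$ with $Z_\phi = \mathbb{E}_\pi[e^{\phi}]$, and then invoke the non-negativity of $D_{KL}(\rho \Vert \tilde\pi)$. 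Writing out $\ln(d\tilde\pi/d\pi) = \phi - \ln Z_\phi$, the inequality $D_{KL}(\rho \Vert \tilde\pi) \ge 0$ rearranges precisely to the display above. If $\rho \not\ll \pi$, the right-hand side is $+\infty$ and the claim is vacuous.

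Second, I would fix an arbitrary realization of the independent variables $X_1, \ldots, X_l$ (the lemma is to be read pathwise in these) and apply the auxiliary inequality with the specific choice
\begin{equation*}
\phi(f) \;:=\; \lambda \sum_{k=1}^l \bigl( \xi_k(f) - g_k(f, X_k) \bigr),
\end{equation*}
which, once the $X_k$ are fixed, is a deterministic function of $f$ alone. Dividing both sides by $\lambda > 0$ then gives exactly
\begin{equation*}
\mathbb{E}_{f \sim \rho}\!\left[\sum_{k=1}^l \xi_k(f) - g_k(f, X_k)\right] \le \frac{1}{\lambda}\!\left( D_{KL}(\rho \Vert \pi) + \ln \mathbb{E}_{f \sim \pi}\!\left[ e^{\lambda \sum_{k=1}^l (\xi_k(f) - g_k(f, X_k))} \right] \right),
\end{equation*}
which is the statement of the lemma.

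There is no real obstacle here; the entire content of the argument is the Donsker--Varadhan inequality, and the role of the lemma is purely to package it in a form tailored to the subsequent two-level PAC-Bayes derivation (where $l$ will be either the number of samples per task or the number of tasks, and $f$ will correspondingly be a hypothesis or a prior). The only mild subtleties to note are (i) the statement is a conditional/pathwise inequality with respect to $X_1,\ldots,X_l$, so no Fubini argument is needed inside the proof itself, and (ii) the absolute continuity assumption $\rho \ll \pi$ is implicit since otherwise the right-hand side is trivially $+\infty$.
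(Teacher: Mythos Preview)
Your proof is correct. The Donsker--Varadhan inequality (proved via the tilted Gibbs measure and non-negativity of $D_{KL}(\rho\Vert\tilde\pi)$), followed by the pathwise specialization $\phi(f)=\lambda\sum_k(\xi_k(f)-g_k(f,X_k))$ with the $X_k$ held fixed, is exactly the right argument and yields the stated inequality after dividing by $\lambda$.

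As for comparison: the paper does not actually supply a proof of this lemma. It is stated at the top of Appendix~A.1 as a tool and then invoked twice (once at the sample level, once at the task level) in the proof of Theorem~\ref{theorem:meta-learning-bound-bounded-lossfn}, but no derivation is given. Your argument is the standard one for this change-of-measure inequality, and there is nothing to contrast it against in the paper itself.
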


To prove the Theorem \ref{theorem:meta-learning-bound-bounded-lossfn}, we need to bound the difference between \emph{transfer error} $\calL(\calQ, \calT)$ and the \emph{empirical multi-task error} $\hat{\calL}(\calQ, S_1, ..., S_n)$. To this end, we introduce an intermediate quantity, the \emph{expected multi-task error}:
\begin{equation}
\tilde{\calL}(\calQ, \calD_1, ..., \calD_n) = \E_{P \sim \calQ} \left[ \frac{1}{n} \sum_{i=1}^n \E_{S \sim D_i^{m_i}} \left[ \calL(Q(S, P), \calD_i) \right] \right]
\end{equation}
In the following we invoke Lemma \ref{lemma:concentration_sum_of_rvs} twice. First, in step 1, we bound the difference between $\tilde{\calL}(\calQ, \calD_1, ..., \calD_n)$ and $\hat{\calL}(\calQ, S_1, ..., S_n)$, then, in step 2, the difference between $\calL(\calQ, \calT)$ and $\tilde{\calL}(\calQ, \calD_1, ..., \calD_n)$. Finally, in step 3, we use a union bound argument to combine both results.

\paragraph{Step 1 (Task specific generalization)}
First, we bound the generalization error of the observed tasks $\tau_i=(\calD_i, m_i)$, $i=1,...,n$, when using a learning algorithm $Q: \calM \times \calZ^{m_i} \rightarrow \calM$, which outputs a posterior distribution $Q_i=Q(S_i, P)$ over hypotheses $h$, given a prior distribution $P$ and a dataset $S_i \sim \calD^{m_i}_i$ of size $m_i$. In that, we define $\overline{m} := (\sum_{i=1}^n m_i^{-1})^{-1}$ as the harmonic mean of sample sizes.

In particular, we apply Lemma~\ref{lemma:concentration_sum_of_rvs} to the union of all training sets $S' = \bigcup_{i=1}^n S_i$ with $l=\sum_{i=1}^n m_i$.  Hence, each $X_k$ corresponds to one data point, i.e. $X_k = z_{ij}$ and $\mu_k = \calD_i$. Further, we set $f=(P, h_1, ..., h_n)$ to be a tuple of one prior and $n$ base hypotheses. This can be understood as a two-level hypothesis, wherein $P$ constitutes a hypothesis of the meta-learning problem and $h_i$ a hypothesis for solving the supervised task $\tau_i$. Correspondingly, we take $\pi = (\calQ, Q^n)  = \calP \cdot \prod_{i=1}^n P$ and $\rho = (\calQ, Q^n) = \calQ \cdot \prod_{i=1}^n Q_i$ as joint two-level distributions and $g_k(f, X_k) = \frac{1}{n m_i} l(h_i, z_{ij})$ as summand of the empirical multi-task error. We can now invoke Lemma~\ref{lemma:concentration_sum_of_rvs} to obtain that (\ref{eq:step_1_bound_raw}) and (\ref{eq:two_level_kl})
%
\begin{align} \label{eq:step_1_bound_raw}
\begin{split}
\frac{1}{n} \sum_{i=1}^n \E_{P \sim \calQ} \left[ \calL(Q_i,D_i) \right] \leq &
\frac{1}{n} \sum_{i=1}^n \E_{P \sim \calQ} \left[ \calL(Q_i,S_i) \right] + 
\frac{1}{\gamma} \bigg( D_{KL}\left[ (\calQ, Q^n) || (\calP, P^n) \right] \\ 
&+\ln \E_{P \sim \calP} \E_{h \sim P}  \left[ e^{ \frac{\gamma}{n} \sum_{i=1}^n (\calL(h, \calD_i) - \hat{\calL}(h, S_i)}\right]\bigg)
\end{split}
\end{align}
Using the above definitions, the KL-divergence term can be re-written in the following way:
\begin{align}
    D_{KL}\left[ (\calQ, Q^n) || (\calP, P^n) \right]  &= \E_{P \sim \calQ} \left[ \E_{h \sim Q_i} \left[ \ln \frac{\calQ(P)  \prod_{i=1}^n  \prod Q_i(h)}{\calP(P)  \prod_{i=1}^n  P(h)}\right] \right] \\
    &= \E_{P \sim \calQ} \left[ \ln \frac{\calQ(P)}{\calP(P)}\right] + \sum_{i=1}^n \E_{P \sim \calQ} \left[ \E_{h \sim Q_i} \left[ \ln \frac{Q_i(h)}{P(h)}\right] \right] \\
    &= D_{KL}(\calQ||\calP) + \sum_{i=1}^n \E_{P \sim \calQ} \left[ D_{KL}(Q_i || P)\right] \label{eq:two_level_kl}
\end{align}
Using (\ref{eq:step_1_bound_raw}) and (\ref{eq:two_level_kl}) we can bound the expected multi-task error as follows:
\begin{align} \label{eq:step1_bound}
\begin{split}
\tilde{\calL}(\calQ, \calD_1, ..., \calD_n)  \leq &~ \hat{\calL}(\calQ, S_1, ..., S_n)  + \frac{1}{\gamma} D_{KL}(\calQ||\calP) + \frac{1}{\gamma} \sum_{i=1}^n \E_{P \sim \calQ} \left[ D_{KL}(Q_i || P)\right] \\
&+ \underbrace{ \frac{1}{\gamma} \ln \E_{P \sim \calP} \E_{h \sim P}  \left[ e^{ \frac{\gamma}{n} \sum_{i=1}^n (\calL(h, \calD_i) - \hat{\calL}(h, S_i))}\right]}_{\Upsilon^{\rom{1}}(\gamma)} \\
\end{split}
\end{align}

\paragraph{Step 2 (Task environment generalization)}
Now, we apply Lemma~\ref{lemma:concentration_sum_of_rvs} on the meta-level. For that, we treat each task as random variable and instantiate the components as $X_k = \tau_i$, $l=n$ and $\mu_k = \calT$. Furthermore, we set $\rho = \calQ$, $\pi = \calP$, $f=P$ and $g_k(f, X_k) = \frac{1}{n} \calL(Q_i, \calD_i)$. This allows us to bound the transfer error as 
\begin{equation}  \label{eq:step_2_bound}
\calL(\calQ, \calT) \leq \tilde{\calL}(\calQ, \calD_1, ..., \calD_n) + \frac{1}{\lambda} D_{KL}(\rho||\pi) +  \Upsilon^\rom{2}(\lambda)
\end{equation}
wherein $\Upsilon^\rom{2}(\lambda) = \frac{1}{\lambda} \ln \E_{P \sim \calP} \left[ e^{ \frac{\lambda}{n} \sum_{i=1}^n \E_{(D,S)  \sim \calT} \left[ \calL (Q(P, S), \calD) \right] - \calL (Q(P, S_i), \calD_i) } \right]$.

Combining (\ref{eq:step1_bound}) with (\ref{eq:step_2_bound}), we obtain
\begin{align}
\begin{split}
\calL(\calQ, \calT) \leq &~ \hat{\calL}(\calQ, S_1, ..., S_n)  + \left(\frac{1}{\lambda} + \frac{1}{\gamma} \right) D_{KL}(\calQ||\calP) \\
& +  \frac{1}{\gamma} \sum_{i=1}^n  \E_{P \sim \calQ} \left[ D_{KL}(Q_i || P)\right] + \Upsilon^{\rom{1}}(\gamma) +  \Upsilon^\rom{2}(\lambda)
\end{split}
\end{align}

\paragraph{Step 3 (Bounding the moment generating functions)}

\begin{align}
\begin{split}
e^{(\Upsilon^{\rom{1}}(\gamma) +  \Upsilon^\rom{2}(\lambda))} = & 
 \E_{P \sim \calP} \left[ e^{\frac{\lambda}{n} \sum_{i=1}^n \E_{(D,S)  \sim \calT} \left[ \calL (Q(P, S), \calD) \right] - \calL (Q(P, S_i), \calD_i )} \right]^{1/\lambda} \cdot \\
&\E_{P \sim \calP} \E_{h \sim P}  \left[ e^{ \frac{\gamma}{n} \sum_{i=1}^n (\calL(h, \calD_i) - \hat{\calL}(h, S_i)}\right]^{1/\gamma} \\
= & 
 \E_{P \sim \calP} \left[ \prod_{i=1}^n e^{\left( \frac{\lambda}{n}  \E_{(D,S)  \sim \calT} \left[ \calL (Q(P, S), \calD) \right) \right] - \calL (Q(P, S_i), \calD_i )} \right]^{1/\lambda} \cdot \\
&\E_{P \sim \calP} \E_{h \sim P}  \left[ \prod_{i}^n \prod_{i}^{m_i} e^{ \frac{\gamma}{nm_i} (\calL(h, \calD_i) - l(h_i, z_{ij}) )}\right]^{1/\gamma} \\
\end{split} \label{eq:moment_gen_function_factorized}
\end{align}

\textbf{Case I: bounded loss} 

If the loss function $l(h_i, z_{ij})$ is bounded in $[a,b]$, we can apply Hoeffding's lemma to each factor in (\ref{eq:moment_gen_function_factorized}), obtaining:
\begin{align}
e^{\Upsilon^{\rom{1}}(\gamma) +  \Upsilon^\rom{2}(\lambda)} \leq & ~ \E_{P \sim \calP} \left[ e^{\frac{\lambda^2}{8n} (b - a)^2 } \right]^{1/\lambda}\cdot \E_{P \sim \calP} \E_{h \sim P}  \left[  e^{\frac{\gamma^2}{8n\overline{m}} (b - a)^2 } \right]^{1/\gamma} \\
 = & ~ e^{\left( \frac{\lambda}{8n} + \frac{\gamma}{8n\overline{m}} \right) (b - a)^2 }
\end{align} 

Next, we factor out $\sqrt{n}$ from $\gamma$ and $\lambda$, obtaining
\begin{equation}
e^{\Upsilon^{\rom{1}}(\gamma) +  \Upsilon^\rom{2}(\lambda)} = \left( e^{ \Upsilon^{\rom{1}}(\gamma \sqrt{n}) +  \Upsilon^\rom{2}(\lambda \sqrt{n})} \right)^{\frac{1}{\sqrt{n}}}
\end{equation}
Using
\begin{equation}
\E_{\cal{T}}\E_{\calD_1} ... \E_{\calD_n} \left[e^{ \Upsilon^{\rom{1}}(\gamma \sqrt{n}) +  \Upsilon^\rom{2}(\lambda \sqrt{n})}  \right]  \leq e^{\left( \frac{\lambda}{8\sqrt{n}} + \frac{\gamma}{8 \sqrt{n}\overline{m}} \right) (b - a)^2 }
\end{equation}
we can apply Markov's inequality w.r.t. the expectations over the task distribution $\calT$ and data distributions $\calD_i$ to obtain that
\begin{equation}
\Upsilon^{\rom{1}}(\gamma) +  \Upsilon^\rom{2}(\lambda) \leq \underbrace{\frac{\lambda}{8n} (b - a)^2}_{\Psi^{\rom{1}}(\lambda)} + \underbrace{\frac{\gamma}{8n\overline{m}} (b - a)^2}_{\Psi^{\rom{2}}(\gamma)}  - \frac{1}{\sqrt{n}}\ln \delta
\end{equation}
with probability at least $1-\delta$.

\textbf{Case II: sub-gamma loss} 

First, we assume that, $\forall i=1, ..., n$ the random variables $V^{\rom{1}}_i := \calL(h, \calD_i) - l(h_i, z_{i,j})$ are {\em sub-gamma} with variance factor $s_{\rom{1}}^2$ and scale parameter $c_{\rom{1}}$ under the two-level prior $(\calP, P)$ and the respective data distribution $\calD_i$. That is, their moment generating function can be bounded by that of a Gamma distribution $\Gamma(s_{\rom{1}}^2, c_{\rom{1}})$:
\begin{equation}
\E_{z \sim \calD_i} \E_{P \sim \calP} \E_{h \sim P} \left[ e^{ \gamma (\calL(h, \calD_i) - l(h, z) )}\right] \leq  \exp \left(\frac{\gamma^2 s_{\rom{1}}^2}{2(1- c_{\rom{1}} \gamma)} \right) \quad \forall \gamma \in ( 0, 1 / c_{\rom{1}} )
\end{equation}

Second, we assume that, the random variable $V^{\rom{2}}:= \E_{(D,S)  \sim \calT} \left[ \calL (Q(P, S), \calD) \right] - \calL (Q(P, S_i), \calD_i )$ is {\em sub-gamma} with variance factor $s_{\rom{2}}^2$ and scale parameter $c_{\rom{2}}$ under the hyper-prior $\calP$ and the task distribution $\calT$. That is, its moment generating function can be bounded by that of a Gamma distribution $\Gamma(s_{\rom{2}}^2, c_{\rom{2}})$:
\begin{equation}
 \E_{(\calD, S) \sim \calT} \E_{P \sim \calP} \left[  e^{\lambda ~ \E_{(D,S)  \sim \calT} \left[ \calL (Q(P, S), \calD) \right] - \calL (Q(P, S), \calD )}\right] \leq   \exp \left(\frac{\lambda^2 s_{\rom{2}}^2}{2(1- c_{\rom{2}} \lambda)} \right) \quad \forall \lambda \in ( 0, 1 / c_{\rom{2}} )
\end{equation}

These two assumptions allow us to bound the expectation of (\ref{eq:moment_gen_function_factorized}) as follows:
\begin{align}
\E \left[ e^{\Upsilon^{\rom{1}}(\gamma) +  \Upsilon^\rom{2}(\lambda)}  \right] \leq & ~ \exp \left(\frac{\gamma s_{\rom{1}}^2}{2n\overline{m}(1- c_{\rom{1}} \gamma / (n \overline{m}) } \right) \cdot \exp \left(\frac{\lambda s_{\rom{2}}^2}{2n(1- c_{\rom{2}} \lambda / n)} \right) 
\end{align} 
Next, we factor out $\sqrt{n}$ from $\gamma$ and $\lambda$, obtaining
\begin{equation}
e^{\Upsilon^{\rom{1}}(\gamma) +  \Upsilon^\rom{2}(\lambda)} = \left( e^{ \Upsilon^{\rom{1}}(\gamma \sqrt{n}) +  \Upsilon^\rom{2}(\lambda \sqrt{n})} \right)^{\frac{1}{\sqrt{n}}}
\end{equation}
Finally, by using Markov's inequality we obtain that
\begin{equation}
\Upsilon^{\rom{1}}(\gamma) +  \Upsilon^\rom{2}(\lambda) \leq \underbrace{\frac{\gamma s_{\rom{1}}^2}{2n\overline{m}(1- c_{\rom{1}} \gamma / (n \overline{m}) }}_{\Psi^{\rom{1}}(\gamma)} + \underbrace{\frac{\lambda s_{\rom{2}}^2}{2n(1- c_{\rom{2}} \lambda / n)}}_{\Psi^{\rom{2}}(\lambda)}  - 
\frac{1}{\sqrt{n}} \ln \delta
\end{equation}
with probability at least $1-\delta$.

To conclude the proof, we choose $\gamma = n \beta$ for $\beta > 0$.

\subsection{Proof of Corollary \ref{cor:bayesian_learner_PAC_bound}}

When we choose the posterior $Q$ as the optimal Gibbs posterior $Q^*_i := Q^*(S_i, P)$, it follows that 
\begin{align}
& \hat{\calL}(\calQ, S_1, ..., S_n) + \frac{1}{n} \sum_{i=1}^n \frac{1}{\beta} \E_{P \sim \calQ} \left[ D_{KL}(Q^*_i || P)\right] \\
=~& \frac{1}{n} \sum_{i=1}^n \left( \E_{P \sim \calQ} \E_{h \sim Q^*_i} \left[ \hat{\calL}(h, S_i) \right]  + \frac{1}{\beta} \left( \E_{P \sim \calQ} \left[ D_{KL}(Q^*_i || P)\right] \right) \right) \\
=~& \frac{1}{n} \sum_{i=1}^n \frac{1}{\beta} \left(\E_{P \sim \calQ} \E_{h \sim Q^*_i} \left[ \beta \hat{\calL}(h, S_i) +  \ln \frac{Q^*_i(h)}{P(h)} \right] \right) \label{eq:task_objective_in_bound} \\
=~& \frac{1}{n} \sum_{i=1}^n \frac{1}{\beta} \left(\E_{P \sim \calQ} \E_{h \sim Q^*_i} \left[  \beta \hat{\calL}(h, S_i)  +  \ln \frac{P(h) e^{-  \beta \hat{\calL}(h, S_i) }}{P(h) Z_\beta(S_i, P)} \right] \right) \\
=~& \frac{1}{n} \sum_{i=1}^n \frac{1}{\beta} \left(- \E_{P \sim \calQ} \left[ \ln Z_\beta(S_i, P)\right] \right) \;.
\end{align}

This allows us to write the inequality in (\ref{eq:meta-learning-bound}) as 
\begin{align} \label{eq:meta_pac_bound_mll_appendix}
\calL(\calQ, \calT)  \leq ~  & - \frac{1}{n} \sum_{i=1}^n \frac{1}{\beta} \E_{P \sim \calQ} \left[\ln Z_\beta(S_i, P) \right]  + \left(\frac{1}{\lambda} + \frac{1}{n\beta}\right)  D_{KL}(\calQ||\calP) + C(\delta, n, \overline{m}) \;.
\end{align}

According to Lemma~\ref{lemma:optimal_gibbs_posterior}, the Gibbs posterior $Q^*(S_i, P)$ is the minimizer of (\ref{eq:task_objective_in_bound}), in particular $\forall P \in \calM(\calH), \forall i=1, ..., n:$
\begin{equation}
Q^*(S_i, P) = \frac{P(h)e^{- \beta \hat{\calL}(h,S_i)}}{Z_\beta(S_i,P)} = \argmin_{Q \in \calM(\calH)} \E_{h \sim Q} \left[ \hat{\calL}(h, S_i) \right]  + \frac{1}{\beta} D_{KL}(Q || P)  \;.
\end{equation}

Hence, we can write
\begin{align}
\calL(\calQ, \calT)  \leq & - \frac{1}{n} \sum_{i=1}^n \frac{1}{\beta} \E_{P \sim \calQ} \left[\ln Z_\beta(S_i, P) \right]  + \left(\frac{1}{\lambda} + \frac{1}{n\beta}\right)  D_{KL}(\calQ||\calP) + C(\delta, \lambda, \beta) \\
= & ~ \frac{1}{n} \sum_{i=1}^n \E_{P \sim \calQ} \left[\min_{Q \in \calM(\calH)} \hat{\calL}(Q, S_i)  + \frac{1}{\beta}  D_{KL}(Q || P) \right] \\
& + \left(\frac{1}{\lambda} + \frac{1}{n\beta}\right)  D_{KL}(\calQ||\calP) + C(\delta, n, \overline{m}) \\
\leq & \frac{1}{n} \sum_{i=1}^n \E_{P \sim \calQ} \left[\hat{\calL}(Q, S_i)  + \frac{1}{\beta}  D_{KL}(Q || P) \right] \\ & + \left(\frac{1}{\lambda} + \frac{1}{n\beta}\right)  D_{KL}(\calQ||\calP) + C(\delta, \lambda, \beta) \\
= & ~ \hat{\calL}(\calQ, S_1, ..., S_n) +\left(\frac{1}{\lambda} + \frac{1}{n\beta}\right) D_{KL}(\calQ||\calP) \\ &+  \frac{1}{n} \sum_{i=1}^n \frac{1}{\beta} \E_{P \sim \calQ} \left[ D_{KL}(Q_i || P)\right] + C(\delta, \lambda, \beta) \;,
\end{align}
which proves that the bound for Gibbs-optimal base learners in (\ref{eq:meta_pac_bound_mll_appendix}) and (\ref{eq:meta-level_pac_bound_with_mll}) is tighter than the bound in Theorem~\ref{theorem:meta-learning-bound-bounded-lossfn} which holds uniformly for all $Q \in \calM(\calH)$.

\subsection{Proof of Proposition~\ref{proposition:pacoh_optimal_hyper_posterior}: PAC-Optimal Hyper-Posterior}
\label{appendix:proof_optimal-hyper-posterior}
An objective function corresponding to (\ref{eq:meta-level_pac_bound_with_mll}) reads as
\begin{equation} \label{eq:meta-objective_with_mll}
J(\calQ) =  - \E_{\calQ} \left[ \frac{\lambda}{n\beta + \lambda} \sum_{i=1}^n \ln Z(S_i, P) \right] + D_{KL}(\calQ||\calP) \;.
\end{equation}
To obtain $J(\calQ)$, we omit all additive terms from (\ref{eq:meta-level_pac_bound_with_mll}) that do not depend on $\calQ$ and multiply by the scaling factor $\frac{\lambda n \beta}{n \beta + \lambda}$. Since the described transformations are monotone, the minimizing distribution of $J(\calQ)$, that is,
\begin{equation}
\calQ^* = \argmin_{\calQ \in \calM(\calM(\calH))} J(\calQ) \;,
\end{equation}
is also the minimizer of (\ref{eq:meta-level_pac_bound_with_mll}). More importantly, $J(\calQ)$ is structurally similar to the generic minimization problem in (\ref{eq:gibbs_agrmin}). Hence, we can invoke Lemma~\ref{lemma:optimal_gibbs_posterior} with $A = \calM(\calH)$, $g(a) = - \sum_{i=1}^n \ln Z(S_i, P)$, $\beta = \frac{1}{\sqrt{n\overline{m}} + 1}$, to show that the optimal hyper-posterior is
\begin{equation}
\calQ^*(P) = \frac{\calP(P) \exp \left( \frac{\lambda}{n\beta + \lambda}\sum_{i=1}^n \ln Z_\beta(S_i, P) \right) }{Z^{\rom{2}}(S_1, ..., S_n, \calP)} \;,
\end{equation}
wherein 
\begin{align*}
& Z^{\rom{2}}(S_1, ..., S_n, \calP) = \E_{P \sim \calP} \left[ \exp \left( \frac{\lambda}{n\beta + \lambda} \sum_{i=1}^n \ln Z_\beta(S_i, P) \right)  \right] \;.
\end{align*} $\hfill \Box$

Technically, this concludes the proof of Proposition~\ref{proposition:pacoh_optimal_hyper_posterior}. However, we want to remark the following result:

If we choose $\calQ = \calQ^*$, the PAC-Bayes bound in (\ref{eq:meta-level_pac_bound_with_mll}) can be expressed in terms of the meta-level partition function $Z^{\rom{2}}$, that is,
\begin{align}
\calL(\calQ, \calT)  &\leq - \left(\frac{1}{\lambda} + \frac{1}{n\beta}\right) \ln Z^{\rom{2}}(S_1, ..., S_n, \calP) + C(\delta, \lambda, \beta) \;. \label{eq:pac_bound_z2}
\end{align}
We omit a detailed derivation of (\ref{eq:pac_bound_z2}) since it is similar to the one for Corollary \ref{cor:bayesian_learner_PAC_bound}.

\section{Gaussian process regression}
\label{appendix:gp_regression}

In GP regression, each data point corresponds to a feature-target tuple $z_{i,j} = (x_{i,j},y_{i,j}) \in \R^d \times \R$. For the $i$-th dataset, we write $S_i = (\bX_i, \by_i)$, where $\bX_i = (x_{i,1}, ..., x_{i,m_i})^\top$ and $\by_i = (y_{i,1}, ..., y_{i,m_i})^\top$. GPs are a Bayesian method in which the prior $P_\phi(h) = \mathcal{GP} \left(h| m_\phi(x), k_\phi(x, x') \right)$ is specified by a positive definite kernel $k_\phi: \calX \times \calX \rightarrow \R$ and a mean function $m_\phi: \calX \rightarrow \R$.

 The empirical loss under the GP posterior $Q^*$ coincides with the negative log-likelihood of regression targets $\by_i$, that is,  $\hat{\calL}(Q^*, S_i) = - \frac{1}{m_i} \ln p(\by_i|\bX_i)$. 
Under a Gaussian likelihood $p(\by|\mathbf{h}) = \calN(\by;h(\bx), \sigma^2 I)$, the marginal log-likelihood $\ln Z(S_i, P_\phi) = \ln p(\by_i | \bX_i, \phi)$ can be computed in closed form as
\begin{align} \label{eq:mll_gp}
\begin{split}
\ln p(\by | \bX, \phi) = & - \frac{1}{2} \left( \by- m_{\bX,\phi}) \right)^\top \tilde{K}_{\bX,\phi}^{-1}  \left( \by- m_{\bX,\phi} \right) - \frac{1}{2} \ln |\tilde{K}_{\bX, \phi}| - \frac{m_i}{2} \ln 2 \pi \; , 
\end{split}
\end{align}
where $\tilde{K}_{\bX, \phi} = K_{\bX, \phi} + \sigma^2 I$, with the kernel matrix $K_{\bX, \phi}=(k_\phi(x_l,  x_k))^{m_i}_{l,k=1}$, observation noise variance $\sigma^2$, and mean vector $m_{\bX,\phi} = (m_\phi(x_1),..., m_\phi(x_{m_i}))^\top$.

Previous work on Bayesian model selection in the context of GPs argues that the log-determinant $\frac{1}{2} \ln |\tilde{K}_{\bX, \phi}|$ in the marginal log-likelihood (\ref{eq:mll_gp}) acts as a complexity penalty \citep{Rasmussen2001, rasmussen2003gaussian}. However, we suspect that this complexity regularization is only effective if the class of considered priors is restrictive, for instance if we only optimize a small number of parameters such as the length- and output scale of a squared exponential kernel. If we consider expressive classes of GP priors (e.g., our setup where the mean and kernel function are neural networks), such a complexity penalty could be insufficient to avoid meta-overfitting.
Indeed, this is what we also observe in our experiments (see Sec.~\ref{sec:experiments}).


\section{PACOH-GP: Meta-Learning of GP priors} \label{appendix:pacoh}
In this section, we provide further details on PACOH-GP, introduced in Section \ref{sec:method2} of the paper and employed in our experiments. Following Section \ref{sec:experiments}, we instantiate our framework with GP base learners. Since we are interested in meta-learning, we define the mean and kernel function both as parametric functions.
Similar to \citet{wilson2016deep} and \citet{fortuin2019deep}, we instantiate $m_\phi$ and $k_\phi$ as neural networks, where the parameter vector $\phi$ can be meta-learned.
To ensure the positive-definiteness of the kernel, we use the neural network as feature map $\Phi_\phi(x)$ on top of which we apply a squared exponential (SE) kernel.
Accordingly, the parametric kernel reads as $k_\phi(x, x') = \frac{1}{2}\exp \left( - ||\Phi_\phi(x) - \Phi_\phi(x')||_2^2 \right)$. Both $m_\phi(x)$ and $\Phi_\phi(x)$ are fully-connected neural networks with 4 layers with each 32 neurons and $\tanh$ non-linearities. The parameter vector $\phi$ represents the weights and biases of both neural networks. As hyper-prior we choose a zero-mean isotropic Gaussian, that is, $\calP(\phi) = \calN(0, \sigma_{\calP}^2 I)$. Further, we choose $\lambda = n$ and $\beta_i = m_i$

\subsection{Meta-training with PACOH-GP}
SVGD \citep{Liu2016} approximates $\calQ^*$ as a set of particles $\hat{\calQ} = \{P_1, ..., P_K\}$. In our described setup, each particle corresponds to the parameters of the GP prior, that is, $\hat{\calQ} =  \{\phi_1, ..., \phi_K\}$. Initially, we sample random priors $\phi_k \sim \calP$ from our hyper-posterior. Then, the SVGD iteratively transports the set of particles to match $\calQ^*$, by applying a form of functional gradient descent that minimizes  $D_{KL}(\hat{\calQ} | \calQ^*)$ in the reproducing kernel Hilbert space induced by a kernel function $k(\cdot,\cdot)$. We choose a squared exponential kernel with length scale (hyper-)parameter $\ell$, that is, $k(\phi ,\phi') = \ \exp \left( - \frac{|| \phi - \phi'||_2^2}{2 \ell} \right)$. In each iteration, the particles are updated by 
\begin{equation*}
\phi_k \leftarrow \phi_k + \eta_t \psi^*(\phi_k) ~, \quad \text{with} \quad \psi^*(\phi) = \frac{1}{K} \sum_{l=1}^K \left [k(\phi_l, \phi) \nabla_{\phi_l} \ln \calQ^*(\phi_l) +  \nabla_{\phi_l} k(\phi_l, \phi) \right] \;.
\end{equation*}
Here, we can again estimate $\nabla_{\phi_l} \ln \calQ^*(\phi_l)$ with a mini-batch of $H$ datasets $S_1, ..., S_H$:
$$
\nabla_{\phi_l} \ln \calQ^*(\phi_l) = \frac{n}{H} \cdot \sum_{h=1}^H \frac{1}{m_h + 1} \nabla_{\phi_l} \ln Z(S_h, P_{\phi_l})  + \nabla_{\phi_l} \ln \calP(\phi_l) \;.
$$ 
Importantly, $\nabla_{\phi_l} \ln \calQ^*(\phi_l)$ does not depend on $Z^{\rom{2}}$ which makes SVGD tractable. Algorithm \ref{algo:pacoh_gp_batched} summarizes the meta-training method for GP priors.

\begin{algorithm}
\caption{PACOH-GP: mini-batched meta-training}
\label{algo:pacoh_gp_batched}
\begin{algorithmic}
\STATE \textbf{Input:} hyper-prior $\calP$, datasets $S_1, ..., S_n$
\STATE \textbf{Input:} SVGD kernel function $k(\cdot, \cdot)$, SVGD step size $\eta$, number of particles $K$
\STATE $\{ \phi_1, ..., \phi_K\} \sim \calP$ \hfill // Initialize prior particles
\WHILE{not converged} 
    \STATE $\{T_1, ..., T_{n_{bs}}\} \subseteq [n]$ \hfill // sample $n_{bs}$ tasks uniformly at random	
 	\FOR{$k=1,...,K$} 
     	\FOR{$i=1, ..., n_{bs}$}
     	    \STATE $\ln Z_{m_i}(S_i, P_{\phi_k}) \leftarrow - \frac{1}{2} \left( \by_i- m_{\bX_i,\phi_k} \right)^\top \tilde{K}_{\bX_i,\phi_k}^{-1}  \left( \by_i- m_{\bX_i,\phi_k} \right)  - \frac{1}{2} \ln |\tilde{K}_{\bX_i, \phi_k}| - \frac{m_i}{2} \ln 2 \pi $ \hfill // compute MLL
     	\ENDFOR
     	\STATE $  \nabla_{\phi_k} \ln \tilde{\calQ}^*(\phi_k)  \leftarrow \nabla_{\phi_k} \ln \calP(\phi_k) + \frac{n}{n_{bs}}\sum_{i=1}^{n_{bs}} \frac{1}{m_i + 1} \nabla_{\phi_k} \ln Z_{m_i}(S_i, P_{\phi_k})$ \hfill // compute score
	\ENDFOR
	
	\STATE $\phi_k \leftarrow \phi_k + \frac{\eta}{K} \sum_{k'=1}^K \left [k(\phi_{k'}, \phi_k) \nabla_{\phi_{k'}} \ln \tilde{\calQ}^*(\phi_{k'}) +  \nabla_{\phi_{k'}} k(\phi_{k'}, \phi_k) \right] \forall{k \in [K]}$ \hfill // SVGD update
\ENDWHILE
\STATE \textbf{Output:} set of GP priors $\{ \mathcal{GP} \left(m_{\phi_1}(x), k_{\phi_1}(x, x') \right), ...,  \mathcal{GP} \left(m_{\phi_K}(x), k_{\phi_K}(x, x') \right) \}$
\end{algorithmic}
\end{algorithm}

\subsection{Meta-Testing / Target-Training with PACOH-GP}
Meta-learning with PACOH, as described above, gives us an approximation of $\calQ^*$. In target-testing (see Figure \ref{fig:overview}), the base learner is instantiated with the meta-learned prior $P_\phi$, receives a dataset $\tilde{S}=(\tilde{\bX}, \tilde{\by})$ from an unseen task $\calD \sim \calT$ and outputs a posterior $Q$ as product of its inference. In our GP setup, $Q$ is the GP posterior and the predictive distribution $\hat{p}(y^*|x^*, \tilde{\bX}, \tilde{\by}, \phi)$ is a Gaussian \citep[for details see][]{rasmussen2003gaussian}.

Since the meta-learner outputs a distribution over priors, that is, the hyper-posterior $\calQ$, we may obtain different predictions for different priors $P_\phi \sim \calQ$, sampled from the hyper-posterior. To obtain a predictive distribution under our meta-learned hyper-posterior, we empirically marginalize $\calQ$. That is, we draw a set of prior parameters $\phi_1, ..., \phi_K \sim \calQ$ from the hyper-posterior, compute their respective predictive distributions $\hat{p}(y^*|x^*, \tilde{\bX}, \tilde{\by}, \phi_k)$ and form an equally weighted mixture:
\begin{equation} \label{eq:predictive_mixture}
    \hat{p}(y^*|x^*, \tilde{\bX}, \tilde{\by}, \calQ) = \E_{\phi \sim \calQ} \left[ \hat{p}(y^*|x^*, \tilde{\bX}, \tilde{\by}, \phi) \right] \approx \frac{1}{K} \sum_{k=0}^{K} \hat{p}(y^*|x^*, \tilde{\bX}, \tilde{\by}, \phi_k) ~, \quad \phi_k \sim \calQ 
\end{equation}
Since we are concerned with GPs, (\ref{eq:predictive_mixture}) coincides with a mixture of Gaussians. As one would expect, the mean prediction under $\calQ$ (i.e., the expectation of (\ref{eq:predictive_mixture})), is the average of the mean predictions corresponding to the sampled prior parameters $\phi_1, ..., \phi_K$. In case of PACOH-VI, we sample $K=100$ priors from the variational hyper-posterior $\tilde{\calQ}$. For PACOH-SVGD, samples from the hyper-posterior correspond to the $K=10$ particles. PACOH-MAP can be viewed as a special case of SVGD with $K=1$, that is, only one particle. Thus, $\hat{p}(y^*|x^*, \tilde{\bX}, \tilde{\by}, \calQ) \approx \hat{p}(y^*|x^*, \tilde{\bX}, \tilde{\by}, \phi^{MAP})$ is a single Gaussian.

\section{PACOH-NN: Meta-Learning BNN priors} \label{appendix:mll_estimate}

In this section, we summarize and further discuss our proposed meta-learning algorithm \emph{PACOH-NN}. An overview of our proposed framework is illustrated in Figure \ref{fig:overview}. Overall, it consists of two stages \emph{meta-training} and \emph{meta-testing} which we explain in more details in the following.
\subsection{Meta-training} The hyper-posterior distribution $\calQ$ that minimizes the upper bound on the transfer error is given by
\begin{equation}
   \calQ^*(P) =  \frac{\calP(P) \exp\left( \sum_{i=1}^n  \frac{\lambda}{n\beta_i + \lambda} \ln \tilde{Z}(S_i, P)  \right) }{Z^{\rom{2}}(S_1, ..., S_n, \calP) } \vspaceequation
\end{equation}
In that, we no longer assume that $m = m_i ~ \forall i=1,...,n$ which was done in the theory section to maintain notational brevity. Thus, we use a different $\beta_i$ for each och the tasks as we want to set $\beta_i = m_i$ or $\beta_i = \sqrt{m_i}$.
Provided with a set of datasets $S_1, ..., S_n$, the meta-learner minimizes the respective meta-objective, in the case of \emph{PACOH-SVGD}, by performing SVGD on the $\calQ^*$. Algorithm \ref{algo:pacoh_nn} outlines the required steps in more detail.
\begin{algorithm}
\caption{PACOH-NN: meta-training}
\label{algo:pacoh_nn}
\begin{algorithmic}
\STATE \textbf{Input:} hyper-prior $\calP$, datasets $S_1, ..., S_n$, kernel $k(\cdot, \cdot)$, step size $\eta$, number of particles $K$
\STATE $\{ \phi_1, ..., \phi_K\} \sim \calP$ \hfill // Initialize prior particles
\WHILE{not converged} 
 	\FOR{$k=1,...,K$} 
     	\STATE $\{\theta_1, ..., \theta_L\} \sim P_{\phi_k}$ \hfill // sample NN-parameters from prior
     	
     	\FOR{$i=1, ..., n$}
         	\STATE $\ln \tilde{Z}(S_i, P_{\phi_k}) \leftarrow \text{LSE}_{l=1}^L\left( - \beta_i \hat{\calL}(\theta_l,S_i) \right) - \ln L$ \hfill // estimate generalized MLL
     	\ENDFOR
     	
     	\STATE $  \nabla_{\phi_k} \ln \tilde{\calQ}^*(\phi_k)  \leftarrow \nabla_{\phi_k} \ln \calP(\phi_k) + \sum_{i=1}^n \frac{\lambda}{n\beta_i + \lambda}   \nabla_{\phi_k} \ln \tilde{Z}(S_i, P_{\phi_k})$ \hfill // compute score
     	
	\ENDFOR
	     		\STATE $\forall k \in [K]: \phi_k \leftarrow \phi_k + \frac{\eta}{K} \sum_{k'=1}^K \left [k(\phi_{k'}, \phi_k) \nabla_{\phi_{k'}} \ln \tilde{\calQ}^*(\phi_{k'}) +  \nabla_{\phi_{k'}} k(\phi_{k'}, \phi_k) \right] $  \hfill // SVGD update

\ENDWHILE
\STATE \textbf{Output:} set of priors $\{ P_{\phi_1}, ..., P_{\phi_K} \}$
\end{algorithmic}
\end{algorithm}

Alternatively, to estimate the score of $\nabla_{\phi_k} \tilde{\calQ}^*(\phi_k)$ we can use mini-batching at both the task and the dataset level. Specifically, for a given meta-batch size of $n_{bs}$ and a batch size of $m_{bs}$, we get Algorithm \ref{algo:pacoh_nn_batched}.

\begin{algorithm}
\caption{PACOH-NN-SVGD: mini-batched meta-training}
\label{algo:pacoh_nn_batched}
\begin{algorithmic}
\STATE \textbf{Input:} hyper-prior $\calP$, datasets $S_1, ..., S_n$
\STATE \textbf{Input:} kernel function $k(\cdot, \cdot)$, SVGD step size $\eta$, number of particles $K$
\STATE $\{ \phi_1, ..., \phi_K\} \sim \calP$ \hfill // Initialize prior particles
\WHILE{not converged} 
    \STATE $\{T_1, ..., T_{n_{bs}}\} \subseteq [n]$ \hfill // sample $n_{bs}$ tasks uniformly at random
 	\FOR{$i=1, ..., n_{bs}$}
 	    \STATE $\tilde{S}_i \leftarrow \{z_1, ..., z_{m_{bs}}\} \subseteq S_{T_i}$ \hfill // sample $m_{bs}$ datapoints from $S_{T_i}$ uniformly at random
 	\ENDFOR
 	
 	\FOR{$k=1,...,K$} 
     	\STATE $\{\theta_1, ..., \theta_L\} \sim P_{\phi_k}$ \hfill // sample NN-parameters from prior
     	\FOR{$i=1, ..., n_{bs}$}
     	    \STATE $\ln \tilde{Z}(\tilde{S}_i, P_{\phi_k}) \leftarrow \text{LSE}_{l=1}^L\left( - \beta_i  \hat{\calL}(\theta_l, \tilde{S}_i) \right) - \ln L$ \hfill // estimate generalized MLL
     	\ENDFOR
     	\STATE $  \nabla_{\phi_k} \ln \tilde{\calQ}^*(\phi_k)  \leftarrow \nabla_{\phi_k} \ln \calP(\phi_k) + \frac{n}{n_{bs}}\sum_{i=1}^{n_{bs}} \frac{\lambda}{n\beta_i + \lambda} \nabla_{\phi_k} \ln \tilde{Z}(S_i, P_{\phi_k})$ \hfill // compute score
	\ENDFOR
	
	\STATE $\phi_k \leftarrow \phi_k + \frac{\eta}{K} \sum_{k'=1}^K \left [k(\phi_{k'}, \phi_k) \nabla_{\phi_{k'}} \ln \tilde{\calQ}^*(\phi_{k'}) +  \nabla_{\phi_{k'}} k(\phi_{k'}, \phi_k) \right] \forall{k \in [K]}$ \hfill // SVGD update
\ENDWHILE
\STATE \textbf{Output:} set of priors $\{ P_{\phi_1}, ..., P_{\phi_K} \}$
\end{algorithmic}
\end{algorithm}

\subsection{Meta-testing} The meta-learned prior knowledge is now deployed by a base learner. The base learner is given a training dataset $\tilde{S} \sim \calD$ pertaining to an unseen task $\tau = (\calD, m) \sim \calT$. With the purpose of approximating the generalized Bayesian posterior $Q^*(S,P)$, the base learner performs (normal) posterior inference. Algorithm \ref{algo:pacoh_nn_target_training} details the steps of the approximating procedure – referred to as \emph{target training} – when performed via SVGD. For a data point $x^*$, the respective predictor outputs a probability distribution given as $\tilde{p}(y^*| x^*, \tilde{S}) \leftarrow \frac{1}{K \cdot L}\sum_{k=1}^K \sum_{l=1}^L p(y^*|h_{\theta^k_l}(x^*))$. We evaluate the quality of the predictions on a held-out test dataset $\tilde{S}^* \sim \calD$ from the same task, in a \emph{target testing} phase (see Appendix \ref{appendix:exp_methodology}).

\begin{algorithm}
\caption{PACOH-NN: meta-testing}
\label{algo:pacoh_nn_target_training}
\begin{algorithmic}
\STATE \textbf{Input:} set of priors $\{ P_{\phi_1}, ..., P_{\phi_K} \}$, target training dataset $\tilde{S}$, evaluation point $x^*$
\STATE \textbf{Input:} kernel function $k(\cdot, \cdot)$, SVGD step size $\nu$, number of particles $L$
\FOR{$k=1, ..., K$}
 	\STATE $\{\theta^k_1, ..., \theta^k_L\} \sim P_{\phi_k}$ \hfill // initialize NN posterior particles from $k$-th prior
 	\WHILE{not converged} 
 		\FOR{$l=1,...,L$}
 		\STATE $  \nabla_{\theta^k_l} Q^*(\theta^k_l))  \leftarrow \nabla_{\theta^k_l} \ln P_{\phi_k}(\theta^k_l)) + \beta ~ \nabla_{\theta^k_l}  \calL(l, \tilde{S})$ \hfill // compute score
 		\ENDFOR
 		\STATE $\theta^k_l \leftarrow \theta^k_l +  \frac{\nu}{L} \sum_{l'=1}^L \left [k(\theta^k_{l'}, \theta^k_l) \nabla_{\theta^k_{l'}} \ln Q^{*}(\theta^k_{l'}) +  \nabla_{\theta^k_{l'}} k(\theta^k_{l'}, \theta^k_l) \right] \forall{l \in [L]}$ \hfill // update particles
 	\ENDWHILE
\ENDFOR
\STATE \textbf{Output:} a set of NN parameters
$\bigcup_{k=1}^K \{\theta_1^k\, ..., \theta_L^k\}$
\end{algorithmic}
\end{algorithm}

\subsection{Properties of the score estimator} \label{appendix:properties_mll_estimator}

Since the marginal log-likelihood of BNNs is intractable, we have replaced it by a numerically stable Monte Carlo estimator $\ln \tilde{Z}_{\beta}(S_i, P_\phi)$ in (\ref{eq:mll_estimator}), in particular
\begin{equation}
    \ln \tilde{Z}_\beta(S_i, P_\phi) := \ln  \frac{1}{L} \sum_{l=1}^{L} e^{- \beta  \hat{\calL}(\theta_l,S_i)} = \text{LSE}_{l=1}^L\left( - \beta \hat{\calL}(\theta_l, S_i) \right) - \ln L ~, ~~ \theta_l \sim P_\phi ~.
\end{equation}
Since the Monte Carlo estimator involves approximating an expectation of an exponential, it is not unbiased. However, we can show that replacing $\ln Z_{\beta}(S_i, P_\phi)$ by the estimator $\ln \tilde{Z}_{\beta}(S_i, P_\phi)$, we still minimize a valid upper bound on the transfer error (see Proposition \ref{proposition:mll_estimate_still_upper_bound}).

\begin{proposition} \label{proposition:mll_estimate_still_upper_bound}
In expectation, replacing $\ln Z_{\beta}(S_i, P_\phi)$ in (\ref{eq:meta-level_pac_bound_with_mll}) by the Monte Carlo estimate $\ln \tilde{Z}_{\beta}(S_i, P) := \ln  \frac{1}{L} \sum_{l=1}^{L} e^{- \beta \hat{\calL}(\theta_l,S_i)}, ~ \theta_l \sim P$ still yields an valid upper bound of the transfer error. In particular, it holds that
\begin{align}
\calL(\calQ, \calT) & \leq   - \frac{1}{n} \sum_{i=1}^n \frac{1}{\beta} \E_{P \sim \calQ} \left[\ln Z(S_i, P) \right]   + \left(\frac{1}{\lambda} + \frac{1}{n \beta}\right)  D_{KL}(\calQ||\calP) + C(\delta, n, \overline{m})  \hspace{-4pt} \label{eq:normal_pac_bound} \\
\begin{split} \label{eq:estimator_upper_bound}
& \leq - \frac{1}{n} \sum_{i=1}^n \frac{1}{\beta} \E_{P \sim \calQ} \left[ \E_{\theta_1,...,\theta_L \sim P} \left[ \ln \tilde{Z}(S_i, P)  \right] \right]    \\ & \quad ~ + \left(\frac{1}{\lambda} + \frac{1}{n \beta}\right) D_{KL}(\calQ||\calP) + C(\delta, \lambda, \beta) .
\end{split}
\end{align}
\end{proposition}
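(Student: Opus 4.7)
The plan is to show that replacing $\ln Z_\beta(S_i, P)$ by $\E_{\theta_1,\dots,\theta_L \sim P}[\ln \tilde{Z}_\beta(S_i,P)]$ in Corollary~\ref{cor:bayesian_learner_PAC_bound} only loosens the bound, so the resulting expression remains a valid upper bound on $\calL(\calQ, \calT)$. The core idea is that $\tilde{Z}_\beta(S_i, P)$ is an unbiased Monte Carlo estimator of $Z_\beta(S_i, P)$, and Jensen's inequality for the concave function $\ln$ then gives $\E[\ln \tilde{Z}_\beta] \le \ln Z_\beta$.

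First, I would verify unbiasedness: since the $\theta_l$ are drawn i.i.d.\ from $P$, linearity of expectation yields
\begin{equation*}
\E_{\theta_1,\dots,\theta_L \sim P}\!\left[ \tilde{Z}_\beta(S_i, P) \right] = \frac{1}{L} \sum_{l=1}^L \E_{\theta_l \sim P}\!\left[ e^{-\beta \hat{\calL}(\theta_l, S_i)} \right] = \E_{\theta \sim P}\!\left[ e^{-\beta \hat{\calL}(\theta, S_i)} \right] = Z_\beta(S_i, P).
\end{equation*}
Second, applying Jensen's inequality to the concave map $x \mapsto \ln x$ gives
\begin{equation*}
\E_{\theta_1,\dots,\theta_L \sim P}\!\left[ \ln \tilde{Z}_\beta(S_i, P) \right] \le \ln \E_{\theta_1,\dots,\theta_L \sim P}\!\left[ \tilde{Z}_\beta(S_i, P) \right] = \ln Z_\beta(S_i, P).
\end{equation*}

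Third, multiplying by $-1/\beta < 0$ flips the inequality; taking the expectation under $P \sim \calQ$ (which preserves the inequality since it is pointwise in $P$) and averaging over $i = 1, \dots, n$ yields
\begin{equation*}
-\frac{1}{n}\sum_{i=1}^n \frac{1}{\beta}\, \E_{P \sim \calQ}\!\left[ \ln Z_\beta(S_i, P) \right] \;\le\; -\frac{1}{n}\sum_{i=1}^n \frac{1}{\beta}\, \E_{P \sim \calQ}\!\left[ \E_{\theta_1,\dots,\theta_L \sim P}\!\left[ \ln \tilde{Z}_\beta(S_i, P) \right] \right].
\end{equation*}
Finally, chaining this with the bound in Corollary~\ref{cor:bayesian_learner_PAC_bound} (which is (\ref{eq:normal_pac_bound})) immediately yields (\ref{eq:estimator_upper_bound}).

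There is really no serious obstacle here: the argument is a two-line combination of unbiasedness and Jensen's inequality, together with monotonicity of expectation. The only subtlety worth flagging is that the bound is loosened, not preserved, by the estimator; one could additionally quantify the slack $\ln Z_\beta - \E[\ln \tilde{Z}_\beta]$ in terms of the variance of $\tilde{Z}_\beta$ (e.g., via a delta-method expansion) to give a sense of how much is lost with finite $L$, though this is not required for the stated proposition.
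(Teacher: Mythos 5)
Your proposal is correct and follows essentially the same route as the paper: both proofs rest on Jensen's inequality applied to the concave logarithm, using the fact that $\tilde{Z}_\beta(S_i,P)$ is an unbiased estimator of $Z_\beta(S_i,P)$ to conclude $\E[\ln \tilde{Z}_\beta] \le \ln Z_\beta$, after which the sign flip from $-1/\beta$ and monotonicity of expectation give the loosened but still valid bound. The paper compresses these steps into a single chain of (in)equalities, while you spell out the unbiasedness and the final chaining explicitly; the content is identical.
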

\begin{proof}
Firsts, we show that:
\begin{align}
\label{eq:ineq_mll_estimator}
\E_{\theta_1,...,\theta_L \sim P} \left[\ln \tilde{Z}_{\beta}(S_i, P) \right] &= \E_{\theta_1,...,\theta_L \sim P} \left[ \ln \frac{1}{L} \sum_{l=1}^{L}  e^{- \beta \hat{\calL}(\theta_l,S_i)} \right] \nonumber \\
&\leq \ln \frac{1}{L} \sum_{l=1}^{L} \E_{\theta_l \sim P} \left[  e^{- \beta \hat{\calL}(\theta_l,S_i)} \right] \nonumber \\
&= \ln \E_{\theta \sim P} \left[ e^{- \beta \hat{\calL}(\theta,S_i)} \right] \nonumber \\
&= \ln Z_\beta(S_i, P)
\end{align}
which follows directly from Jensen's inequality and the concavity of the logarithm. Now, Proposition \ref{proposition:mll_estimate_still_upper_bound} follows directly from (\ref{eq:ineq_mll_estimator}).
\end{proof}

In fact, by the law of large numbers, it is straightforward to show that as $L \rightarrow \infty$, the $\ln \tilde{Z}(S_i, P) \xrightarrow[]{\text{a.s.}} \ln Z(S_i, P) $, that is, the estimator becomes asymptotically unbiased and we recover the original PAC-Bayesian bound (i.e. (\ref{eq:estimator_upper_bound})  $\xrightarrow[]{\text{a.s.}}$ (\ref{eq:normal_pac_bound})). Also it is noteworthy that the bound in (\ref{eq:estimator_upper_bound}) we get by our estimator is, in expectation, tighter than the upper bound when using the naive estimator 
$$
\ln \hat{Z}_\beta(S_i, P) := - \beta ~ \frac{1}{L} \sum_{l=1}^{L}   \hat{\calL}(\theta_l,S_i) \quad \theta_l \sim P_\phi
$$
which can be obtained by applying Jensen's inequality to $\ln \E_{\theta \sim P_\phi} \left[ e^{- \beta \hat{\calL}(\theta,S_i)} \right]$. In the edge case $L=1$ our LSE estimator $\ln \tilde{Z}_\beta(S_i, P)$ falls back to this naive estimator and coincides in expectation with $\E [ \ln \hat{Z}_\beta(S_i, P)] = - \beta ~ \E_{\theta \sim P}   \hat{\calL}(\theta_l,S_i)$. As a result, we effectively minimize the looser upper bound

\begin{align}
\calL(\calQ, \calT) & \leq   \frac{1}{n} \sum_{i=1}^n \E_{\theta \sim P}  \left[  \hat{\calL}(\theta,S_i) \right]  + \left(\frac{1}{\lambda} + \frac{1}{n \beta}\right)   D_{KL}(\calQ||\calP) + C(\delta, n, \overline{m}) . \\
& =  \E_{\theta \sim P}  \left[  \frac{1}{n} \sum_{i=1}^n \frac{1}{m_i} \sum_{j=1}^{m_i} - \ln p(y_{ij} | x_{ij}, \theta) \right]  + \left(\frac{1}{\lambda} + \frac{1}{n \beta}\right)   D_{KL}(\calQ||\calP) + C(\delta, n, \overline{m}) \label{eq:loose_upper_bound}
\end{align}

As we can see from (\ref{eq:loose_upper_bound}), the boundaries between the tasks vanish in the edge case of $L=1$, that is, all data-points are treated as if they would belong to one dataset. This suggests that $L$ should be chosen greater than one. In our experiments, we used $L=5$ and found the corresponding approximation to be sufficient.
%
%

\section{Experiments}\label{appendix:exps}
\subsection{Meta-Learning Environments}
\label{appendix:meta-envs}

In this section, we provide further details on the meta-learning environments used in Section~\ref{sec:experiments}.
Information about the numbers of tasks and samples in the respective environments can be found in Table~\ref{tab:num_tasks_samples}.
\begin{table}[h]
\centering
\begin{tabular}{l|ccccc}
 & Sinusoid & Cauchy & SwissFEL & Physionet & Berkeley \\ \hline
 $n$ & 20 & 20 & 5 & 100 & 36 \\
 $m_i$ & 5 & 20 & 200 & 4 - 24  & 288 \\ 
\end{tabular}
\caption{Number of tasks $n$ and samples per task $m_i$ for the different meta-learning environments.}
\label{tab:num_tasks_samples}
\end{table}
\subsubsection{Sinusoids}
Each task of the sinusoid environment corresponds to a parametric function
\begin{equation}
   f_{a, b, c, \beta} (x) = \beta * x + a * \sin(1.5 * (x - b)) + c \;,
\end{equation}
which, in essence, consists of an affine as well as a sinusoid function. Tasks differ in the function parameters $(a, b, c, \beta)$ that are sampled from the task environment $\calT$ as follows:
\begin{equation}
a  \sim \calU(0.7, 1.3), \quad  b  \sim \calN(0, 0.1^2), \quad  c  \sim \calN(5.0, 0.1^2), \quad \beta  \sim \calN(0.5, 0.2^2) \;. \label{eq:param_sampling_sinusoid}
\end{equation}
\begin{figure}[t] \label{fig:simulated_tasks}
\begin{subfigure}{0.5\textwidth}
        \centering
        \includegraphics[width=0.95\textwidth]{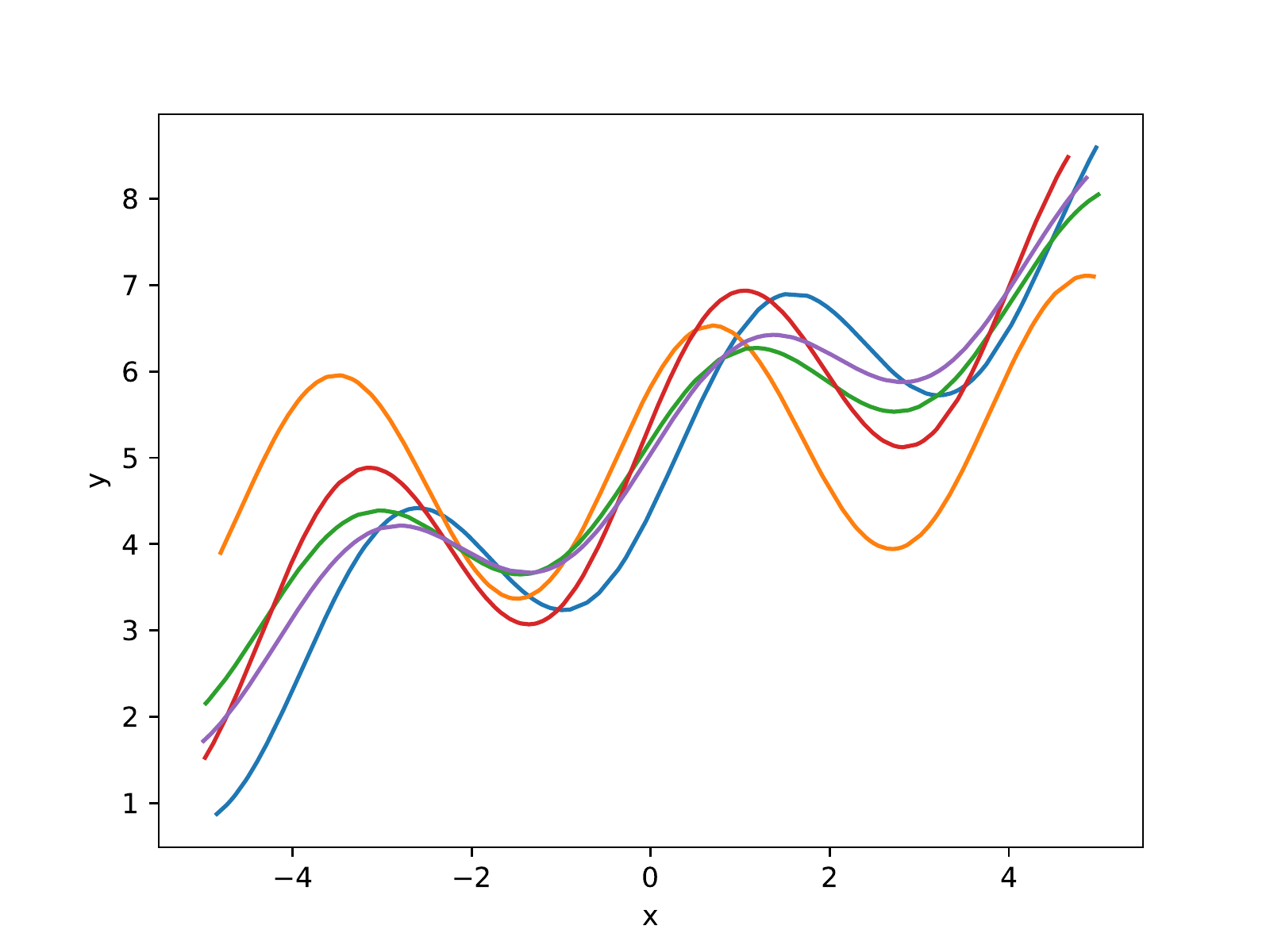}
        \caption{Sinusoid tasks} \label{fig:sin_tasks}
    \end{subfigure}%
    ~
 \begin{subfigure}{0.5\textwidth}
        \centering
        \includegraphics[width=0.95\textwidth]{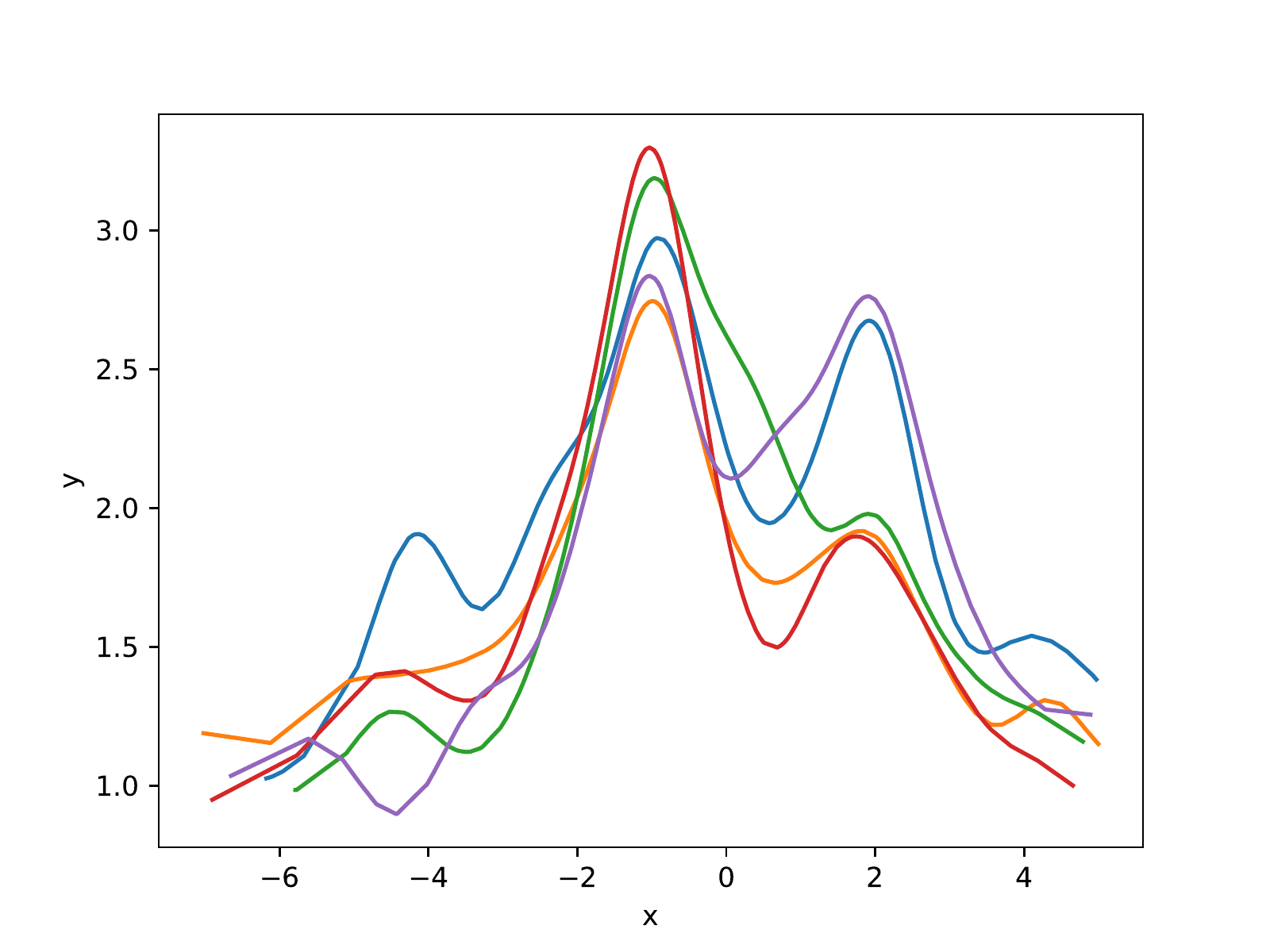}
        \caption{Cauchy tasks} \label{fig:cauchy_tasks}
    \end{subfigure}%
  \caption{Depiction of tasks (i.e., functions) sampled from the Sinusoid and Cauchy task environment, respectively. Note that the Cauchy task environment is two-dimensional ($\text{dim}(\calX) = 2$), while (b) displays a one-dimensional projection.}
\end{figure}
Figure~\ref{fig:sin_tasks} depicts functions $f_{a, b, c, \beta}$ with parameters sampled according to (\ref{eq:param_sampling_sinusoid}). To draw training samples from each task, we draw $x$ uniformly from $\calU(-5, 5)$ and add Gaussian noise with standard deviation $0.1$ to the function values $f(x)$:
\begin{equation}
x \sim \calU(-5, 5)~, \qquad y \sim \calN(f_{a, b, c, \beta} (x), 0.1^2) \;.
\end{equation}

\subsubsection{Cauchy}
Each task of the Cauchy environment can be interpreted as a two dimensional mixture of Cauchy distributions plus a function sampled from a Gaussian process prior with zero mean and SE kernel function $k(x, x') = \exp\left( \frac{||x - x'||_2^2}{2l} \right)$ with $l=0.2$. The (unnormalized) mixture of Cauchy densities is defined as:
\begin{equation}
m(x) = \frac{6}{\pi \cdot (1 + ||x-\mu_1||^2_2)}  + \frac{3}{\pi \cdot (1 + ||x-\mu_2||^2_2)} \;,
\end{equation}
with $\mu_1 = (-1, -1)^\top$ and $\mu_2 = (2, 2)^\top$. 

Functions from the task environments are sampled as follows:
\begin{equation} \label{eq:Cauchy_sampling}
f(x) = m(x) + g(x) ~, \qquad g \sim \mathcal{GP}(0, k(x,x')) \;.
\end{equation}

Figure~\ref{fig:cauchy_tasks} depicts a one-dimensional projection of functions sampled according to (\ref{eq:Cauchy_sampling}). To draw training samples from each task, we draw $x$ from a truncated normal distribution and add Gaussian noise with standard deviation $0.05$ to the function values $f(x)$:  \begin{equation}
x:= \min\{\max\{\tilde{x}, 2\}, -3\} ~, ~~ \tilde{x} \sim \calN(0, 2.5^2) ~, \qquad  y \sim \calN(f(x), 0.05^2) \;.
\end{equation}

\subsubsection{SwissFEL}

Free-electron lasers (FELs) accelerate electrons to very high speed in order to generate shortly pulsed laser beams with wavelengths in the X-ray spectrum. These X-ray pulses can be used to map nanometer scale structures, thus facilitating experiments in molecular biology and material science. The accelerator and the electron beam line of a FEL consist of multiple magnets and other adjustable components, each of which has several parameters that experts adjust to maximize the pulse energy \citep{kirschner2019linebo}. Due do different operational modes, parameter drift, and changing (latent) conditions, the laser's pulse energy function, in response to its parameters, changes across time. As a result, optimizing the laser's parameters is a recurrent task.

\begin{figure}[h]
    \centering
    \includegraphics[width=0.7\linewidth]{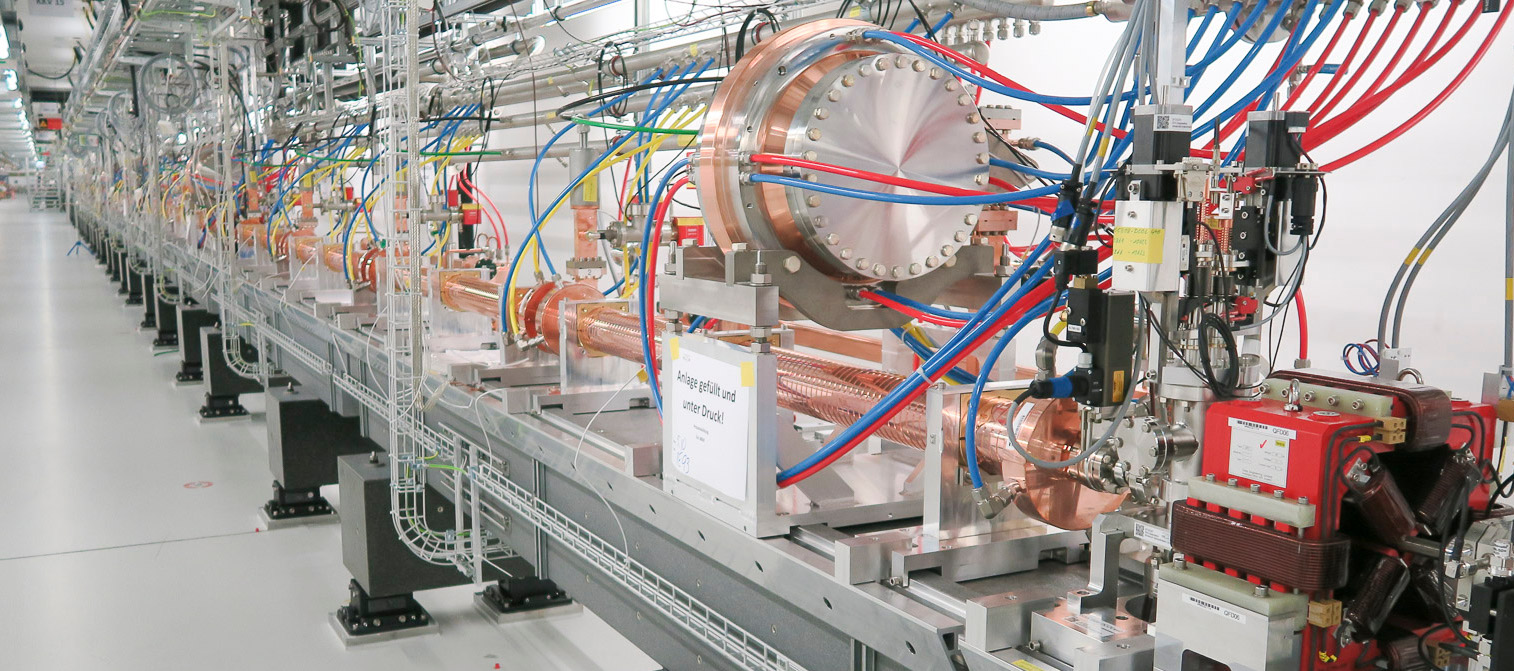}
    \caption{Accelerator of the Swiss Free-Electron Laser (SwissFEL).}
    \label{fig:swissfel}
\end{figure}

Overall, our meta-learning environment consists of different parameter optimization runs (i.e., tasks) on the SwissFEL, an 800 meter long laser located in Switzerland \citep{milne2017swissfel}. A picture of the SwissFEL is shown in Figure~\ref{fig:swissfel}. The input space, corresponding to the laser's parameters, has 12 dimensions whereas the regression target is the pulse energy (1-dimensional). For details on the individual parameters, we refer to \citet{kirschner2019swissfel}. For each run, we have around 2000 data points. Since these data-points are generated with online optimization methods, the data are non-i.i.d.\ and get successively less diverse throughout the optimization. As we are concerned with meta-learning with limited data and want to avoid issues with highly dependent data points, we only take the first 400 data points per run and split them into training and test subsets of size 200. Overall, we have 9 runs (tasks) available. 5 of those runs are used for meta-training and the remaining 4 runs are used for meta-testing. 

\subsubsection{PhysioNet}

The 2012 Physionet competition \citep{silva2012predicting} published an open-access dataset of patient stays on the intensive care unit (ICU).
Each patient stay consists of a time series over 48 hours, where up to 37 clinical variables are measured.
The original task in the competition was binary classification of patient mortality, but due to the large number of missing values (around 80~\% across all features), the dataset is also popular as a test bed for time series prediction methods, especially using Gaussian processes \citep{fortuin2019multivariate}.

In this work, we treat each patient as a separate task and the different clinical variables as different environments.
We use the Glasgow coma scale (GCS) and hematocrit value (HCT) as environments for our study, since they are among the most frequently measured variables in this dataset. From the dataset, we remove all patients where less than four measurements of CGS (and HCT respectively) are available. From the remaining patients we use 100 patients for meta-training and 500 patients each for meta-validation and meta-testing. Here, each patient corresponds to a task. Since the number of available measurements differs across patients, the number of training points $m_i$ ranges between 4 and 24.

\subsubsection{Berkeley-Sensor}

We use data from 46 sensors deployed in different locations at the Intel Research lab in  Berkeley \citep{intel_sensor_data}. The dataset contains 4 days of data, sampled at 10 minute intervals. Each task corresponds to one of the 46 sensors and requires auto-regressive prediction, in particular, predicting the next temperature measurement given the last 10 measurement values. In that, 36 sensors (tasks) with data for the first two days are use for meta-training and whereas the remaining 10 sensors with data for the last two days are employed for meta-testing. Note, that we separate meta-training and -testing data both temporally and spatially since the data is non-i.i.d. For the meta-testing, we use the 3rd day as context data, i.e. for target training and the remaining data for target testing.

\subsection{Experimental Methodology}
\label{appendix:exp_methodology}

In the following, we describe our experimental methodology and provide details on how the empirical results reported in Section~\ref{sec:experiments} were generated. 
Overall, evaluating a meta-learner consists of two phases, {\em meta-training} and {\em meta-testing}, outlined in Appendix \ref{appendix:mll_estimate}. The latter can be further sub-divided into {\em target training} and {\em target testing}. Figure~\ref{fig:overview} illustrates these different stages for our PAC-Bayesian meta-learning framework.



The outcome of the training procedure is an approximation for the generalized Bayesian posterior $Q^*(S,P)$ (see Appendix \label{ref:mll_estimate}), pertaining to an unseen task $\tau = (\calD, m) \sim \calT$ from which we observe a dataset $\tilde{S} \sim \calD$. In {\em target-testing}, we evaluate its predictions on a held-out test dataset $\tilde{S}^* \sim \calD$ from the same task. For PACOH-NN, NPs and MLAP the respective predictor outputs a probability distribution $\hat{p}(y^*|x^*, \tilde{S})$ for the $x^*$ in $\tilde{S}^*$. The respective mean prediction corresponds to the expectation of $\hat{p}$, that is $\hat{y} = \hat{\E}(y^*|x^*, \tilde{S})$. In the case of MAML, only a mean prediction is available. Based on the mean predictions, we compute the {\em root mean-squared error (RMSE)}:
\begin{equation}
    \text{RMSE} =  \sqrt{ \frac{1}{|\tilde{S}^*|} \sum_{(x^*, y^*) \in S^*} (y^* - \hat{y})^2} \;.
\end{equation}
and the {\em calibration error} (see Appendix \ref{appendix:calib_error}). Note that unlike e.g. \citet{rothfuss2019noise} who report the test log-likelihood, we aim to measure the quality of mean predictions and the quality of uncertainty estimate separately, thus reporting both RMSE and calibration error. 

The described meta-training and meta-testing procedure is repeated for five random seeds that influence both the initialization and gradient-estimates of the concerned algorithms. The reported averages and standard deviations are based on the results obtained for different seeds.

\subsubsection{Calibration Error} \label{appendix:calib_error}
The concept of calibration applies to probabilistic predictors that, given a new target input $x_i$, produce a probability distribution $\hat{p}(y_i|x_i)$ over predicted target values $y_i$. 

\textbf{Calibration error for regression.}
Corresponding to the predictive density, we denote a predictor's cumulative density function (CDF) as $\hat{F}(y_j|x_j) = \int_{-\infty}^{y_j} \hat{p}(y|x_i) dy$. For confidence levels $0 \leq q_h < ... < q_H \leq 1$, we can compute the corresponding empirical frequency
\begin{equation}
\hat{q}_h = \frac{ | \{ y_j ~ | ~ \hat{F}(y_j|x_j) \leq q_h, j=1, ..., m \} | }{m} \;,
\end{equation} 
based on dataset $S=\{(x_i, y_i)\}_{i=1}^m$ of $m$ samples. If we have calibrated predictions we would expect that $\hat{q}_h \rightarrow q_h$ as $m \rightarrow \infty$. Similar to \citep{Kuleshov2018}, we can define the calibration error as a function of residuals $\hat{q}_h - q_h$, in particular,
\begin{equation} \label{eq:calib_err}
\text{calib-err} = \frac{1}{H} \sum_{h=1}^H |\hat{q}_h - q_h| \;.
\end{equation}
Note that we while \citep{Kuleshov2018} reports the average of squared residuals $|\hat{q}_h - q_h|^2$, we report the average of absolute residuals $|\hat{q}_h - q_h|$ in order to preserve the units and keep the calibration error easier to interpret. In our experiments, we compute (\ref{eq:calib_err}) with $M=20$ equally spaced confidence levels between 0 and 1.

\textbf{Calibration error for classification.} Our classifiers output a categorical probability distribution $\hat{p}(y=k|x)$ for $k=1,..., C$ where $\calY = \{1, ..., C\}$ with $C$ denoting the number of classes. The prediction of the classifier is the most probable class label, i.e., $\hat{y_j} = \argmax_{k} \hat{p}(y_j=k|x_j)$. Correspondingly, we denote the classifiers confidence in the prediction for the input $x_j$ as $\hat{p}_j := \hat{p}(y_j=\hat{y_j}|x_j)$. Following, the calibraton error definition of \citet{guo2017calibration}, we group the predictions into $H=20$ interval bins of size $1/H$ depending on their prediction confidence. In particular, let $B_h = \{j ~|~ p_j \in \left(\frac{h-1}{H}, \frac{h}{H} \right]\} $ be the set of indices of test points $\{(x_j,y_j)\}_{j=1}^m$ whose prediction fall into the interval $\left(\frac{h-1}{H}, \frac{h}{H} \right] \subseteq (0,1]$. Formally, we define the accuracy of within a bin $B_h$ as 
\begin{equation}
    \text{acc}(B_h) = \frac{1}{|B_h|} \sum_{j\in B_h} \mathbf{1}(\hat{y_i} = y_j)
\end{equation}
and the average confidence within a bin as
\begin{equation}
    \text{conf}(B_h) = \frac{1}{|B_h|} \sum_{j\in B_h} \hat{p}_j \;.
\end{equation}
If the classifier is calibrated, we expect that the confidence of the classifier reflects it's accuracy on unseen test data, that is, $\text{acc}(B_h)=\text{conf}(B_h) ~ \forall h = 1,...., H$. As proposed of \citet{guo2017calibration}, we use the expected calibration error (ECE) to quantify how much the classifier deviates from this criterion: More precisely, in Table \ref{table:cal_err_reg}, we report the ECE with the following definition:
\begin{equation}
    \text{calib-err} = \text{ECE} = \sum_{h=1}^H \frac{|B_h|}{m} \big|\text{acc}(B_h) - \text{conf}(B_h) \big|
\end{equation}
with $m$ denoting the overall number of test points.
\subsection{Hyper-Parameter Selection}
For each of the meta-environments and algorithms, we ran a separate hyper-parameter search to select the hyper-parameters. In particular, we use the \texttt{hyperopt}\footnote{http://hyperopt.github.io/hyperopt/} package \citep{bergstra13} which performs Bayesian optimization based on regression trees. As optimization metric, we employ the average log-likelihood, evaluated on a separate validation set of tasks.

The scripts for reproducing the hyper-parameter search for PACOH-GP are included in our code repository\footnote{\href{https://github.com/jonasrothfuss/meta_learning_pacoh}{\texttt{https://github.com/jonasrothfuss/meta\_learning\_pacoh}}}. 
For the reported results, we provide the selected hyper-parameters and detailed evaluation results under 
\href{https://tinyurl.com/s48p76x}{\texttt{https://tinyurl.com/s48p76x}}.

\subsection{Meta-Learning for Bandits - Vaccine Development}

In this section, we provide additional details on the experiment in Section \ref{sec:bandit}.

We use data from \citet{widmer2010inferring} which contains the binding affinities ($\text{IC}_{50}$ values) of many peptide candidates to seven different MHC-I alleles. Peptides with $\text{IC}_{50} > 500$nM are considered non-binders, all others binders. Following \citet{krause2011contextual}, we convert the $\text{IC}_{50}$ values into negative log-scale and normalize them such that 500nM corresponds to zero, i.e. $r := -\log_{10}(\text{IC}_{50}) + \log_{10}(500)$ with is used as reward signal of our bandit.

\begin{table}[t]
\centering
\begin{tabular}{l|ccccc}
 Allele & A-0202 & A-0203 & A-0201 &  A-2301 &  A-2402 \\ \hline
 $m_i$ & 1446 & 1442 & 3088 & 103  & 196 \\ 
\end{tabular}
\caption{MHC-I alleles used for meta-training and their corresponding number of meta-training samples $m_i$.}
\label{tab:alleles}
\end{table}

We use 5 alleles to meta-learn a BNN prior. The alleles and the corresponding number of data points, available for meta-training, are listed in Table \ref{tab:alleles}. The most genetically dissimilar allele (A-6901) is used for our bandit task. In each iteration, the experimenter (i.e. bandit algorithm) chooses to test one peptide among the pool of 813 candidates and receives $r$ as a reward feedback. Hence, we are concerned with a 813-arm bandit wherein the action $a_t \ \in \{1, ..., 813\} = \calA$ in iteration $t$ corresponds to testing $a_t$-th peptide candidate. In response, the algorithm receives the respective negative log-$\text{IC}_{50}$ as reward $r(a_t)$.

As metrics, we report the \emph{average regret}
$$
R^{avg.}_T :=  \max_{a \in \calA} r(a) - \frac{1}{T} \sum_{t=1}^T r(a_t)
$$
and the \emph{simple regret}
$$
R^{simple}_T :=  \max_{a \in \calA} r(a) - \max_{t=1,...,T} r(a_t)
$$

To ensure a fair comparison, the prior parameters of the GP for GP-UCB and GP-TS are meta-learned by minimizing the GP's marginal log-likelihood on the five meta-training tasks. For the prior, we use a constant mean function and tried various kernel functions (linear, SE, Matern). Due to the 45-dimensional feature space, we found the linear kernel to work the best. So overall, the constant mean and the variance parameter of the linear kernel are meta-learned.

\subsection{Further Experimental Results} \label{appendix:exp-results}

\paragraph{Meta-overfitting}
In order to investigate whether the phenomenon of meta-overfitting, which we have observed consistently for PACOH-MAP and MLL, is also relevant to other meta-learning methods (MAML and NPs), we also report the meta-train test error and the meta-test test error across different numbers of tasks. The results, analogous to Figure \ref{fig:meta_overfitting}, are plotted in Figure \ref{fig:meta_overfitting_maml_nps}, showing a significant difference between the meta-train and meta-test error that vanishes as the number of tasks becomes larger. Once more, this supports our claim that meta-overfitting is a relevant issue and should be addressed in a principled manner.  

\begin{figure*}[ht]
\centering
\includegraphics[width=0.9\textwidth]{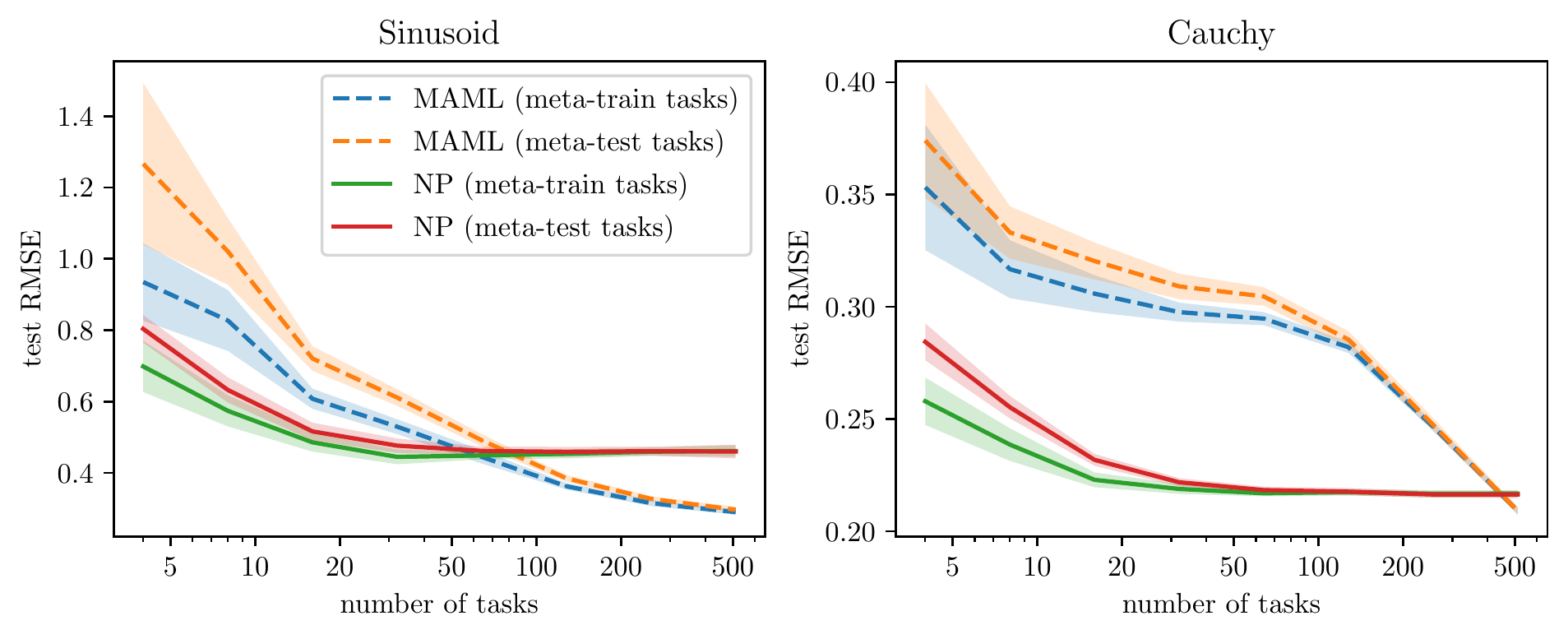}
\caption{Test RMSE measured on the meta-training tasks and the meta-testing tasks as a function of the number of meta-training tasks for MAML and NPs. The performance gap between the meta-train and meta-test tasks clearly demonstrates overfitting on the meta-level for both methods.} 
\label{fig:meta_overfitting_maml_nps}
\end{figure*}



\end{document}